\DeclareMathOperator*{\tr}{tr}
\DeclareMathOperator*{\diag}{diag}
\DeclareMathOperator*{\VEC}{vec}
\renewcommand{\d}[1]{\ensuremath{\operatorname{d}\!{#1}}}
\newcommand{\FR}{\operatorname{FR}}
\newtheorem{assumption}{Assumption}
\algnewcommand{\Output}{\item[\algorithmicoutput]}
\algnewcommand{\algorithmicoutput}{\textbf{Output:}}
\newcommand{\calA}{{\cal A}}
\newcommand{\calB}{{\cal B}}
\newcommand{\calI}{{\cal I}}
\newcommand{\calM}{{\cal M}}
\newcommand{\calN}{{\cal N}}
\newcommand{\calP}{{\cal P}}
\newcommand{\calZ}{{\cal Z}}
\newcommand{\bfb}{\mathbf{b}}
\newcommand{\bff}{\mathbf{f}}
\newcommand{\bfl}{\mathbf{l}}
\newcommand{\bfm}{\mathbf{m}}
\newcommand{\bfn}{\mathbf{n}}
\newcommand{\bfu}{\mathbf{u}}
\newcommand{\bfw}{\mathbf{w}}
\newcommand{\bfx}{\mathbf{x}}
\newcommand{\bfy}{\mathbf{y}}
\newcommand{\bfz}{\mathbf{z}}
\newcommand{\bfalpha}{\boldsymbol{\alpha}}
\newcommand{\bfgamma}{\boldsymbol{\gamma}}
\newcommand{\bfeta}{\boldsymbol{\eta}}
\newcommand{\bftheta}{\boldsymbol{\theta}}
\newcommand{\bfmu}{\boldsymbol{\mu}}
\newcommand{\bfphi}{\boldsymbol{\phi}}
\newcommand{\bfxi}{\boldsymbol{\xi}}
\newcommand{\bbE}{\mathbb{E}}
\newcommand{\bbI}{\mathbb{I}}
\newcommand{\bbR}{\mathbb{R}}
\begin{document}

\title{Variational Formulation of Particle Flow}

\author{\name Yinzhuang Yi \email yiyi@ucsd.edu \\
       \addr Contextual Robotics Institute\\
       University of California, San Diego\\
       La Jolla, CA 92093, USA
       \AND
       \name Jorge Cort\'es \email cortes@ucsd.edu \\
       \addr Contextual Robotics Institute\\
       University of California, San Diego\\
       La Jolla, CA 92093, USA
       \AND
       \name Nikolay Atanasov \email natanasov@ucsd.edu \\
       \addr Contextual Robotics Institute\\
       University of California, San Diego\\
       La Jolla, CA 92093, USA}

\editor{Editor.}

\maketitle


\begin{abstract}
    This paper provides a formulation of the log-homotopy particle flow from the perspective of variational inference. We show that the transient density used to derive the particle flow follows a time-scaled trajectory of the Fisher-Rao gradient flow in the space of probability densities. The Fisher-Rao gradient flow is obtained as a continuous-time algorithm for variational inference, minimizing the Kullback-Leibler divergence between a variational density and the true posterior density. When considering a parametric family of variational densities, the function space Fisher-Rao gradient flow simplifies to the natural gradient flow of the variational density parameters. By adopting a Gaussian variational density, we derive a Gaussian approximated Fisher-Rao particle flow and show that, under linear Gaussian assumptions, it reduces to the Exact Daum and Huang particle flow. Additionally, we introduce a Gaussian mixture approximated Fisher-Rao particle flow to enhance the expressive power of our model through a multi-modal variational density. Simulations on low- and high-dimensional estimation problems illustrate our results.
\end{abstract}

\begin{keywords}
    Variational Inference, Fisher-Rao Gradient Flow, Particle Flow, Bayesian Inference, Nonlinear Filtering
\end{keywords}


\section{Introduction}
This work aims to unveil a relationship between variational inference \citep{jordan1999introduction} and log-homotopy particle flow \citep{daum2007nonlinear, daum2009nonlinear} by providing a variational formulation of particle flow. These techniques consider Bayesian inference problems in which we start with a prior density function $p(\bfx)$, and given an observation $\bfz$ and associated likelihood density function $p(\bfz | \bfx)$, we aim to compute a posterior density function $p(\bfx|\bfz)$, following Bayes' rule. 

Variational inference \citep{jordan1999introduction, wainwright2008graphical, blei2017variational} is a method for approximating complex posterior densities $p(\bfx|\bfz)$ by optimizing a simpler variational density $q(\bfx)$ to closely match the true posterior. The log-homotopy particle flow \citep{daum2007nonlinear, daum2009nonlinear, daum2010exact} approximates the posterior density $p(\bfx|\bfz)$ using a set of discrete samples $\{\bfx_i\}_i$ called particles. The particles are initially sampled from the prior density $p(\bfx)$ and propagated according to a particle dynamics function satisfying the Fokker-Planck equation \citep{jazwinski2007stochastic}. 

Uncovering a variational formulation of particle flow offers several advantages. It provides an expression for the approximate density after particle propagation, thereby enhancing the expressive capability of the particle flow. It offers an alternative to linearization for handling nonlinear observation models. Additionally, by using a mixture density as the variational density $q(\bfx)$, the variational formulation can capture multi-modal posterior densities, making it well-suited for inference tasks that need to capture the likelihood of several possible outcomes. To put these advantages in context and clarify the relationship with existing methods, we review traditional Bayesian filtering approaches next.

\subsection{Related Work}

\paragraph{Filtering}

Traditional filtering approaches to Bayesian inference include the celebrated Kalman filter \citep{kalman1960new}, which makes linear Gaussian assumptions, and its extensions to nonlinear observation models --- the extended Kalman filter (EKF) \citep{anderson2005optimal}, which linearizes the observation model, and the unscented Kalman filter (UKF) \citep{julier1995new}, which propagates a set of discrete samples from the prior, called sigma points, through the nonlinear observation model. While the EKF and UKF consider nonlinear observation models, they use a single Gaussian to approximate the posterior and, hence, cannot capture multi-modal behavior. To handle multi-modal posteriors, the Gaussian sum filter \citep{alspach1972nonlinear} approximates the posterior density using a weighted sum of Gaussian components. Alternatively, particle filters, or sequential Monte Carlo methods, have been proposed, in which the posterior is approximated by a set of particles. The bootstrap particle filter \citep{gordon1993novel} is a classical example, which draws particles from a proposal distribution, commonly chosen to be the prior, and updates their weights using the observation likelihood. These methods are developed for online Bayesian inference, where timely execution of the update steps is required to ensure fast performance. Among the filtering approaches to Bayesian inference, particle filters are of particular interest, as they employ the same particle-based approximation of the posterior as the particle flow method studied in this paper. However, particle filters are known to suffer from particle degeneracy, particularly when the state dimension is high or the measurements are highly informative \citep{bickel2008sharp}. This challenge can be alleviated using posterior sampling methods, which we introduce next.

\paragraph{Posterior Sampling}

Posterior sampling \citep{uhlenbeck1930brownian, metropolis1953equation, hastings1970monte, alan1990sampling, luke1994markov, neal2011mcmc, betancourt2015hamiltonian, ma2015complete} aims to generate statistically accurate samples from the posterior distribution. Unlike filter approaches, posterior sampling approaches typically do not require real-time performance. Traditional posterior sampling approaches include the Metropolis–Hastings sampler \citep{metropolis1953equation, hastings1970monte}, which employs an accept–reject mechanism based on a proposal distribution whose choice strongly affects efficiency, particularly in high-dimensional settings, and the Gibbs sampler \citep{alan1990sampling}, which generates samples from full conditional distributions but can suffer from slow mixing when strong dependencies exist among variables. An early and comprehensive treatment of Markov chain methods for posterior sampling is provided by \citet{luke1994markov}. Recent developments in sampling have led to samplers based on continuous dynamics, including Hamiltonian Monte Carlo (HMC) \citep{neal2011mcmc, betancourt2015hamiltonian}, which generates particles by simulating a hypothetical Hamiltonian system, leveraging the first-order gradient information of the posterior, and Langevin diffusion \citep{uhlenbeck1930brownian}, which constructs a stochastic differential equation whose stationary distribution coincides with the target posterior. Discretizations of Langevin diffusion lead to gradient-based sampling algorithms such as the unadjusted Langevin algorithm (ULA) \citep{grenander1994representations} and the Metropolis-adjusted Langevin algorithm (MALA) \citep{gareth1996exponential}, which incorporate gradient information to improve mixing relative to random-walk proposals. \citet{ma2015complete} provides a unifying SDE framework for a broad class of continuous-dynamics sampling methods, including Langevin- and Hamiltonian-based samplers, by demonstrating that they arise as special cases of a parametrized class of stochastic differential equations with a prescribed invariant distribution. The log-homotopy particle flow \citep{daum2007nonlinear, daum2009nonlinear, daum2010exact}, which transports particles via a particle dynamics function satisfying the Fokker–Planck equation \citep{jazwinski2007stochastic}, can also be viewed as a sampling method based on continuous dynamics. The samples generated by posterior sampling methods can be used to enhance the performance of particle filters. The particle flow particle filter \citep{li2017particle} uses a set of particles obtained via the log-homotopy particle flow as samples from a proposal distribution and updates the particle weights based on the unnormalized posterior and the proposal distribution. This guarantees asymptotic consistency even if the particle ensemble provides an inaccurate estimation of the posterior. However, the particle flow particle filter's reliance on a Gaussian prior density limits its effectiveness for multi-modal densities. To address this, Gaussian mixture models have been incorporated into particle flow particle filter variants \citep{pal2017gaussian, pal2018particle, li2019invertible, zhang2024multisensor}.

\paragraph{Variational Inference}
The pioneering work of \citet{jordan1998fokker} established a connection between variational inference and the Fokker-Planck equation \citep{jazwinski2007stochastic}, which serves as the theoretical foundation for the posterior sampling methods discussed above. Variational inference views Bayesian inference as an optimization problem aiming to minimize the Kullback–Leibler divergence between a variational density and the posterior density. The optimization problem can be solved with a closed-form coordinate descent algorithm only when the likelihood function is suitably structured \citep{wainwright2008graphical}. For more complex models, computing the necessary gradients can become intractable but automatic differentiation and stochastic gradient descent can be used to overcome this challenge \citep{hoffman2013stochastic, ranganath2014black}. To improve the efficiency of variational inference, natural gradient descent can be applied \citep{hoffman2013stochastic, lin2019fast, martens2020insight, huang2022inverse, khan2023bayesian}. Variational inference can be formulated directly on the space of probability density functions, i.e., instead of a parametric variational density, we consider the optimization variable as a density function belonging to an infinite-dimensional manifold. In this case, the Wasserstein gradient flow \citep{jordan1998fokker, ambrogioni2018wvi, lambert2022wass} is used to formulate the gradient dynamics on the space of probability densities with respect to the Wasserstein metric. Particle-based variational inference methods represent the approximating distribution using a set of particles. This includes the Stein variational gradient descent \citep{liu2016stein} and diffusion-based variational inference \citep{doucet2022scorebased, geffner2023langevin}. Our work here formulates the gradient dynamics on the space of probability densities with respect to the Fisher-Rao metric \citep{amari2000methods}. The gradient dynamics introduces a time rate of change of the variational density, which is further utilized to derive the corresponding particle dynamics function.

\paragraph{Feedback Particle Filter}

Variational inference provides an optimization-based perspective on Bayesian inference, which can be leveraged to derive novel filtering methods. The feedback particle filter \citep{feedbackpf2011acc, feedbackpf2011cdc, feedbackpf2013tac, feedbackpf2015poisson} formulates Bayesian inference as an optimal control problem on the space of probability densities, aiming to obtain a particle dynamics function satisfying the Kushner-Stratonovich equation \citep{jazwinski2007stochastic} specified by the evolution of the time-varying posterior density. Utilizing this optimal control formulation, \citet{feedbackpf2017control} introduced a controlled particle filter, where the time-varying posterior density coincides with the ones used to derive the log homotopy particle flow \citep{daum2007nonlinear, daum2009nonlinear}. The density evolution induced by the controlled particle filter can be interpreted as the gradient flow for the expected negative log likelihood density with respect to the Fisher-Rao metric. The variational interpretation introduced in \citet{feedbackpf2017control} can be viewed as a special case of the variational inference formulation studied in this paper, obtained when the initial variational density is chosen as the prior and an appropriate time-scaling function is introduced. Under this construction, the gradient flow in \citet{feedbackpf2017control} progressively conditions the likelihood on an initial density. Consequently, if the initial density is not chosen to coincide with the prior, this gradient flow will not, in general, converge to the posterior. In contrast, the gradient flow introduced here drives any initial density toward the true posterior density and therefore ensures convergence regardless of the initial density.

\subsection{Paper Contributions and Organization}

The main contribution of this paper is the development of a variational formulation of the log-homotopy particle flow \citep{daum2007nonlinear, daum2009nonlinear, daum2010exact}. We show that the time-varying posterior used in the particle flow derivation can be viewed as a time-scaled trajectory of the Fisher-Rao gradient flow minimizing the Kullback-Leibler divergence. This variational approach removes the Gaussian assumptions on the prior and the likelihood, allowing for flexibility to approach estimation problems with any prior-likelihood pair. Utilizing this flexibility, we propose an approximated Gaussian mixture particle flow to capture multi-modal behavior in the posterior. Finally, we formulate several identities enabling inverse- and derivative-free implementation of the particle flow induced by the Fisher-Rao gradient flow.

The paper is organized as follows. In Section~\ref{sec: prelim}, we provide the necessary background on particle flow and variational inference. Section~\ref{sec: transient_and_fisher} presents our first main result, identifying the transient density used to derive particle flow as a time-scaled trajectory of the Fisher-Rao gradient flow. Building on this, Section~\ref{sec: gaussian_flow} specializes the Fisher-Rao gradient flow to Gaussian variational densities and introduces the associated Gaussian Fisher-Rao particle flow, and demonstrates that, under linear Gaussian assumptions, the Gaussian Fisher-Rao particle flow reduces to the Exact Daum and Huang particle flow \citep{daum2010exact}. Section~\ref{sec: gaussian_mixture_approx}, then, proposes an approximated Gaussian mixture Fisher-Rao particle flow by choosing the variational density as a Gaussian mixture density. Section~\ref{sec: derivative_inverse_free} develops an inverse- and derivative-free formulation of the proposed particle flows and shows that they preserve Gauss-Hermite particles and the Mahalanobis distance. Section~\ref{sec: Evaluation} provides simulations on low- and high-dimensional estimation problems to illustrate the performance of the proposed particle flows. Finally, Section~\ref{sec: Application} generalizes the proposed method to non-Gaussian variational densities by combining the Fisher-Rao particle flow with normalizing flows \citep{rezende2015variational}, and presents numerical experiments demonstrating the effectiveness of the resulting approach.

\paragraph{Notation}
We let $\bbR$ denote the real numbers. We use boldface letters to denote vectors and capital letters to denote matrices. We use $|\cdot|$ to denote the absolute value. We denote the probability density function (PDF) of a Gaussian random vector with mean $\bfmu$ and covariance $\Sigma$ by $p_{\calN}( \ \cdot \ ; \bfmu, \Sigma)$. We use the notation $f \in C^{\infty}$ to indicate that $\bfx \mapsto f(\bfx)$ is a smooth (infinitely differentiable) function. We use $\nabla_{\bfx} f(\bfx)$ to denote the gradient of $f(\bfx)$ and $\nabla_{\bfx} \cdot f(\bfx)$ to denote the divergence of $f(\bfx)$. For a random variable $\bfy$ with PDF $p(\bfy)$, we use $\bbE_{p(\bfy)}[ f(\bfy) ]$ to denote the expectation of $f(\bfy)$. We follow the numerator layout when calculating derivatives. For example, for a function $f: \bbR^n \to \bbR^m$, we have $\frac{\partial f(\bfx)}{\partial \bfx} \in \bbR^{m \times n}$. For a matrix $A \in \bbR^{n \times n}$, we use $\tr(A)$ to denote its trace and $\det(A)$ to denote its determinant. We let $\bbI_k(\omega)$ denote the indicator function, which is $1$ when $\omega = k$ and $0$ otherwise.


\section{Background}
\label{sec: prelim}

We consider a Bayesian estimation problem where $\bfx \in \bbR^n$ is a random variable with a prior PDF $p(\bfx)$. Given a measurement $\bfz \in \bbR^m$ with a known measurement likelihood $p(\bfz | \bfx)$, the posterior PDF of $\bfx$ conditioned on $\bfz$ is determined by Bayes' theorem~\citep{bayes1763}:
\begin{equation}
\label{prelim: bayes_rule}
	p(\bfx | \bfz) = \frac{p(\bfz | \bfx) p (\bfx)}{p(\bfz)}, 
\end{equation}
where $p(\bfz)$ is the marginal measurement PDF computed as $p(\bfz) = \int p(\bfz | \bfx) p(\bfx) \d \bfx$. Calculating the posterior PDF in \eqref{prelim: bayes_rule} is often intractable since the marginal measurement PDF $p(\bfz)$ cannot typically be computed in closed form, except when the prior $p(\bfx)$ and the likelihood $p(\bfz|\bfx)$ are a conjugate pair \citep{gelman1995bayesian}. To calculate the posterior of non-conjugate prior and likelihood, approximation methods are needed. This problem can be approached using two classes of methods, which we review next: particle-based methods \citep{doucet2009tutorial}, where the posterior is approximated using a set of weighted particles, and variational inference methods \citep{blei2017variational}, where the posterior is approximated using a parametric variational PDF. 

\subsection{Particle-Based Inference}

We review two particle-based inference methods: discrete-time and continuous-time particle filters. The major difference between these methods, as their name suggests, is the type of time evolution of their update step. A concrete example of a discrete-time particle filter method is the bootstrap particle filter \citep{gordon1993novel}, which employs a single-step update on the particle weights once a measurement is received. On the other hand, the particle flow \citep{daum2007nonlinear} updates the particle location following an ordinary differential equation upon receiving a measurement.

\subsubsection{Discrete-Time Formulation: Particle Filter}
We review the bootstrap particle filter as an example of a discrete-time particle-based inference method but note that other methods exist \citep{sarkka2023bayesian}. The bootstrap particle filter is a classical particle-based inference method that approximates the Bayes' posterior \eqref{prelim: bayes_rule} using a set of weighted particles, where we draw $N$ independent identically distributed particles $\{ \bfx_i \}_{i=1}^N$ from the prior $\bfx_i \sim p(\bfx)$. The particle weights are determined by the following equation:
\begin{equation}
   w_i = \frac{p(\bfz | \bfx_i)}{\sum_{j=1}^N p(\bfz | \bfx_j)}.
\end{equation}
The empirical mean and covariance of the posterior can be calculated using the weighted particles:
\begin{equation}
    \tilde{\bfmu} = \sum_{i=1}^{N} w_i \bfx_i, \qquad \tilde{\Sigma} = \sum_{i=1}^{N} w_i (\bfx_i - \tilde{\bfmu}) (\bfx_i - \tilde{\bfmu})^{\top}. 
\end{equation}
One would immediately notice that since the particles are sampled from the prior and their positions remain unchanged, if the high-probability region of the posterior differs significantly from the prior, then the particle set may either poorly represent the posterior or result in most particles having negligible weights. This challenge is known as \emph{particle degeneracy} \citep{bickel2008sharp}. The issue of particle degeneracy arises from a significant mismatch between the density from which the particles are drawn and the Bayes' posterior. To alleviate this issue, particles should be sampled from a distribution that more closely aligns with the posterior. This leads to the sequential importance sampling particle filter \citep{genshiro1996monte}, where particles are sampled from a proposal density. However, constructing an effective proposal density that accurately aligns with the posterior is a challenging task. An alternative is to move particles sampled from the prior to regions with high likelihood, which leads to the particle flow method, explained next.

\subsubsection{Continuous-Time Formulation: Particle Flow}
\label{subsec: pfpf}
We review the particle flow proposed in \cite{daum2007nonlinear, daum2009nonlinear,daum2010exact} as an example of a continuous-time particle-based inference method. This method seeks to avoid the particle degeneracy faced by the bootstrap particle filter. The particles are moved based on a deterministic or stochastic differential equation, while the particle weights stay unchanged. We follow the exposition in \cite{crouse2019consideration}, where the interested reader can find a detailed derivation. Before introducing the particle flow, it is important to understand the Liouville equation \citep{wibisono2017information}, as it provides the necessary background for its derivation. Consider a random process $\bfx_t \in \bbR^n$ that satisfies a drift-diffusion stochastic differential equation:
\begin{equation} \label{prelim: particle_fpe}
    \d \bfx_t = \phi(\bfx_t, t) \d t + B(\bfx_t, t) \d \bfw_t, 
\end{equation}
where $\phi(\bfx_t, t) \in \bbR^n$ is a drift function, $B(\bfx_t, t) \in \bbR^{n \times m}$ is a diffusion matrix function and $\bfw_t$ is standard Brownian motion. The PDF $p(\bfx; t)$ of $\bfx_t$ evolves according to the Fokker–Planck equation \citep{jazwinski2007stochastic}:
\begin{equation} \label{prelim: fpe}
    \frac{\partial p(\bfx; t)}{\partial t} = - \nabla_{\bfx} \cdot \big( p(\bfx; t) \phi(\bfx, t) \big) + \frac{1}{2} \nabla_{\bfx} \cdot \Big( \nabla_{\bfx} \cdot \big( B(\bfx, t) B(\bfx, t)^{\top} p(\bfx; t) \big) \Big).
\end{equation}
In this work, we only consider the case when the diffusion term $B$ is zero. In such a case, the random process is described by the following ordinary differential equation:
\begin{equation}
\label{prelim: particle_dynamics}
	\frac{\d \bfx_t}{\d t} = \phi(\bfx_t, t),
\end{equation}
where we term $\phi(\bfx_t, t)$ the \emph{particle dynamics function}. The Fokker-Planck equation then reduces to the Liouville equation \citep{wibisono2017information}:
\begin{equation}
\label{prelim: liouville}
	\frac{\partial p(\bfx; t)}{\partial t} = - \nabla_{\bfx} \cdot \big( p(\bfx; t) \phi(\bfx, t) \big).
\end{equation}
In other words, for a particle evolving according to \eqref{prelim: particle_dynamics} with initialization $\bfx_0$ sampled from PDF $p_0(\bfx)$, at a later time $t$, the particle is distributed according to the PDF $p(\bfx; t)$, which satisfies \eqref{prelim: liouville} with $p(\bfx; t = 0) = p_0(\bfx)$. Conversely, if the time evolution of a PDF $p(\bfx; t)$ is specified, the corresponding particle dynamics can be determined by finding a particle dynamics function $\phi(\bfx, t)$ that satisfies \eqref{prelim: liouville}. This observation facilitates the development of an alternative approach to particle filtering, where particles are actively moved towards regions of higher probability. We make the following assumptions on the particle dynamics function \citep{ambrosio2005gradient}.
\begin{assumption}[Regularity and Mass Conservation Assumptions]\label{assum: regularity}
    The particle dynamics function $\bfphi(\bfx, t)$ in \eqref{prelim: liouville} satisfies the following regularity assumptions for every compact set $B \subset \bbR^n$ and time horizon $T$:
    \begin{equation}
        \int_{0}^{T} \text{Lip}(\bfphi(\bfx, t), B) + \sup_{\bfx \in B} \| \bfphi(\bfx, t) \| \d t < \infty, \quad \int_{0}^{T} \int \| \bfphi(\bfx, t) \|^2 p(\bfx; t) \; \d \bfx \d t < \infty,
    \end{equation}
    where $\text{Lip}(\bfphi(\bfx, t), B)$ denotes the Lipschitz constant for function $\bfphi(\bfx, t)$ in set $B$. In addition, we assume the following non-flux boundary condition holds for all $t \in [0, T]$:
    \begin{equation}
        \lim_{r \to \infty} \oint_{\partial \calB(r)} p(\bfx; t)  \bfphi^{\top}(\bfx, t) \hat{\bfn}_{r}(\bfx) \d \bfx = 0,
    \end{equation}
    where $\calB(r)$ is the Euclidean ball centered at $\mathbf{0}$ with radius $r$ and $\hat{\bfn}_{r}(\bfx)$ is the outward unit normal vector to the boundary $\partial \calB(r)$.
\end{assumption}
This assumption corresponds to the hypothesis in \citet[Proposition~8.1.8]{ambrosio2005gradient}, which ensures the well-posedness of the associated particle dynamics and the Liouville equation on the interval $[0, T]$. The non-flux boundary condition guarantees conservation of total mass, so that $p(\bfx;t)$ remains a probability density function for all $t \in [0, T]$. To derive the particle flow, we first take the natural logarithm of both sides of Bayes' theorem~\eqref{prelim: bayes_rule}:
\begin{equation}
    \log(p(\bfx | \bfz)) = \log(p(\bfz | \bfx)) + \log((p(\bfx)) - \log(p(\bfz)). 
\end{equation}
Our objective is to find a particle dynamics function $\phi(\bfx, t)$ such that, if a particle is sampled from the prior and evolves according to \eqref{prelim: particle_dynamics}, then it will be distributed according to the Bayes' posterior at a certain later time. Building on the discussion above, finding this particle dynamics function requires specifying a transformation from the prior to the posterior. To specify the transformation, we introduce a pseudo-time parameter $0 \leq \lambda \leq 1$ and define a log-homotopy of the form:
\begin{equation}
    \log(p(\bfx | \bfz; \lambda)) = \lambda \log(p(\bfz | \bfx)) + \log(p(\bfx)) - \log(p(\bfz; \lambda)), 
\end{equation}
where the marginal measurement density $p(\bfz; \lambda)$ has been parameterized by the pseudo-time $\lambda$ so that $p(\bfx | \bfz; \lambda)$ is a valid PDF for all values of $\lambda$. This defines the following \emph{transient density} describing the particle flow process:
\begin{equation}
\label{prelim: log_bayes_rule}
	p(\bfx | \bfz; \lambda) = \frac{p(\bfz | \bfx)^{\lambda} p (\bfx)}{p(\bfz; \lambda)}, \qquad p(\bfz; \lambda) = \int p(\bfz | \bfx)^{\lambda} p(\bfx) \d \bfx.
\end{equation}
The transient density \eqref{prelim: log_bayes_rule} defines a smooth and continuous transformation from the prior $p(\bfx | \bfz; 0) = p(\bfx)$ to the posterior $p(\bfx | \bfz; 1) = p(\bfx | \bfz)$. Based on our discussion above, in order to obtain a particle flow describing the evolution of the transient density from the prior to the posterior, we need to find a particle dynamics function $\bfphi(\bfx, \lambda)$ such that the following equation holds:
\begin{equation}
\label{prelim: liouville_condition}
	\frac{\partial p(\bfx | \bfz; \lambda)}{\partial \lambda} = - \nabla_{\bfx} \cdot \big( p(\bfx | \bfz; \lambda) \bfphi(\bfx, \lambda) \big), 
\end{equation}
where $p(\bfx | \bfz; \lambda)$ is the transient density given in \eqref{prelim: log_bayes_rule}. 

\begin{assumption}[Linear Gaussian Assumptions] \label{assumption:linear_Gaussian} 
    The prior PDF is Gaussian, given by $p(\bfx) = p_{\calN}(\bfx; \hat{\bfx}, P)$ with mean $\hat{\bfx} \in \bbR^n$ and covariance $P \in \bbR^{n \times n}$, and the likelihood is also Gaussian, given by $p(\bfz | \bfx) = p_{\calN} (\bfz; H \bfx, R)$ with mean $H \bfx \in \bbR^m$ and covariance $R \in \bbR^{m \times m}$.
\end{assumption}

Finding a particle dynamics function $\bfphi(\bfx, \lambda)$ satisfying \eqref{prelim: liouville_condition} for a general transient density is challenging. This is why we consider Assumption~\ref{assumption:linear_Gaussian}. Under linear Gaussian assumptions, the transient density is Gaussian with PDF given by $p(\bfx | \bfz; \lambda) = p_{\calN}(\bfx; \bfmu_{\lambda}, \Sigma_{\lambda})$, where 
\begin{equation}
\label{prelim: transient_para}
    \bfmu_{\lambda} = \Sigma_{\lambda} (P^{-1} \hat{\bfx} + \lambda H^{\top} R^{-1} \bfz), \quad \Sigma_{\lambda} = P - \lambda P H^{\top} (R + \lambda HPH^{\top})^{-1} HP.
\end{equation}
\citet{daum2010exact} provide a closed-form expression for the particle dynamics function $\bfphi(\bfx, \lambda)$ satisfying \eqref{prelim: liouville}:
\begin{equation}
\label{prelim: exact_flow}
	\bfphi(\bfx, \lambda) = A_{\lambda} \bfx + \bfb_{\lambda}, 
\end{equation}
with
\begin{equation*}
A_{\lambda} = -\frac{1}{2} PH^{\top} (R + \lambda HPH^{\top})^{-1} H , \qquad
\bfb_{\lambda} = (I + 2 \lambda A_{\lambda})(A_{\lambda} \hat{\bfx} + (I + \lambda A_{\lambda}) P H ^{\top} R^{-1} \bfz).     
\end{equation*}

Within the particle flow literature, the solution above is termed the Exact Daum and Huang (EDH) flow. A detailed derivation of the EDH flow is missing in \cite{daum2010exact}. In subsequent works, the separation of variables method has been widely adopted to derive the EDH flow \citep{crouse2019consideration, ward2022information, zhang2024multisensor}, where the pseudo-time rate of change of the marginal density $p(\bfz; \lambda)$ is ignored due to its independence of $\bfx$, indicating the EDH flow satisfies \eqref{prelim: liouville} up to some constant. For completeness, we show in the following result that the EDH flow satisfies \eqref{prelim: liouville} exactly.

\begin{lemma}[EDH Flow is Exact \citep{daum2007nonlinear, daum2009nonlinear}]\label{lemma: exact_flow_exact}    
    Consider the \textit{transient density} $p(\bfx | \bfz; \lambda)$ given by \eqref{prelim: log_bayes_rule} with $p(\bfx) = p_{\calN}(\bfx; \hat{\bfx}, P)$ and $p(\bfz | \bfx) = p_{\calN} (\bfz; H \bfx, R)$. Then, the particle dynamics function $\phi(\bfx, \lambda)$ given by the EDH flow \eqref{prelim: exact_flow} satisfies \eqref{prelim: liouville_condition}.
\end{lemma}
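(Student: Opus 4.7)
My plan is to exploit the fact that, under Assumption~\ref{assumption:linear_Gaussian}, the transient density is Gaussian, $p(\bfx|\bfz;\lambda) = p_{\calN}(\bfx;\bfmu_{\lambda},\Sigma_{\lambda})$, so both sides of \eqref{prelim: liouville_condition} become explicit elementary functions of $\bfx$ and the claim reduces to an algebraic identity in the parameters $\bfmu_{\lambda},\Sigma_{\lambda},A_{\lambda},\bfb_{\lambda}$. Since $p(\bfx|\bfz;\lambda)$ is strictly positive, I would first divide through by it to obtain the equivalent log form
\begin{equation*}
\frac{\partial \log p(\bfx|\bfz;\lambda)}{\partial \lambda} = -\bfphi(\bfx,\lambda)^{\top}\nabla_{\bfx}\log p(\bfx|\bfz;\lambda) - \nabla_{\bfx}\cdot\bfphi(\bfx,\lambda).
\end{equation*}
Because the Gaussian log-density is quadratic in $\bfx$ and $\bfphi(\bfx,\lambda) = A_{\lambda}\bfx+\bfb_{\lambda}$ is affine, both sides are polynomials of degree at most two in the shifted variable $\bfy := \bfx-\bfmu_{\lambda}$, and the identity holds if and only if the quadratic, linear, and constant coefficients agree.

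Matching coefficients produces three conditions. The quadratic terms give the Lyapunov-type identity $\partial_{\lambda}\Sigma_{\lambda} = A_{\lambda}\Sigma_{\lambda}+\Sigma_{\lambda}A_{\lambda}^{\top}$; the linear terms give the mean flow equation $\partial_{\lambda}\bfmu_{\lambda} = A_{\lambda}\bfmu_{\lambda}+\bfb_{\lambda}$; and the constant term reduces to $\tr(\Sigma_{\lambda}^{-1}\partial_{\lambda}\Sigma_{\lambda})=2\tr(A_{\lambda})$, which follows automatically by taking the trace of $\Sigma_{\lambda}^{-1}$ times the quadratic identity. Thus the proof reduces to verifying the covariance and mean identities.

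Setting $S_{\lambda} := (R+\lambda HPH^{\top})^{-1}$, I would first collect the workhorse relations $A_{\lambda} = -\tfrac{1}{2}PH^{\top}S_{\lambda}H$, $\Sigma_{\lambda} = P-\lambda PH^{\top}S_{\lambda}HP = (I+2\lambda A_{\lambda})P$, and $A_{\lambda}P = PA_{\lambda}^{\top}$, the last two of which follow from the symmetry of $S_{\lambda}$. Differentiating $\Sigma_{\lambda}$ with $\partial_{\lambda}S_{\lambda} = -S_{\lambda}HPH^{\top}S_{\lambda}$ and expanding $A_{\lambda}\Sigma_{\lambda}+\Sigma_{\lambda}A_{\lambda}^{\top}$ using $A_{\lambda}P=PA_{\lambda}^{\top}$ yields the same expression on both sides, settling the covariance identity in one short computation.

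The main obstacle I expect is the mean identity, which is where the specific closed form of $\bfb_{\lambda}$ enters nontrivially. After differentiating $\bfmu_{\lambda} = \Sigma_{\lambda}(P^{-1}\hat{\bfx}+\lambda H^{\top}R^{-1}\bfz)$ and absorbing one copy of $A_{\lambda}\bfmu_{\lambda}$ via the Lyapunov identity, the residual splits into a prior contribution $\Sigma_{\lambda}A_{\lambda}^{\top}P^{-1}\hat{\bfx}$ and a measurement contribution $\lambda\Sigma_{\lambda}A_{\lambda}^{\top}H^{\top}R^{-1}\bfz + \Sigma_{\lambda}H^{\top}R^{-1}\bfz$. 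The prior part collapses to $(I+2\lambda A_{\lambda})A_{\lambda}\hat{\bfx}$ via $\Sigma_{\lambda}=(I+2\lambda A_{\lambda})P$ and $A_{\lambda}P=PA_{\lambda}^{\top}$. For the measurement part, the resolvent identity $S_{\lambda}-R^{-1}=-\lambda S_{\lambda}HPH^{\top}R^{-1}$, combined with $-PH^{\top}S_{\lambda}H = 2A_{\lambda}$, converts $\Sigma_{\lambda}H^{\top}R^{-1}$ into $(I+2\lambda A_{\lambda})PH^{\top}R^{-1}$; grouping the two $\bfz$-terms then produces the factor $(I+2\lambda A_{\lambda})(I+\lambda A_{\lambda})PH^{\top}R^{-1}\bfz$. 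Adding the prior contribution recovers exactly $\bfb_{\lambda} = (I+2\lambda A_{\lambda})(A_{\lambda}\hat{\bfx}+(I+\lambda A_{\lambda})PH^{\top}R^{-1}\bfz)$, completing the verification.
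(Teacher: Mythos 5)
Your proposal is correct, and it takes a genuinely different route from the paper's proof. The paper differentiates the homotopy form $p(\bfz|\bfx)^{\lambda}p(\bfx)/p(\bfz;\lambda)$ directly in $\lambda$, which forces it to carry the normalizer derivative $\partial_\lambda \log p(\bfz;\lambda)$ through the computation and to close the argument by matching scalar expressions involving Gaussian expectations such as $\bbE_{p(\bfx|\bfz;\lambda)}[\bfx^{\top}H^{\top}R^{-1}H\bfx]$. You instead differentiate the closed-form Gaussian $p_{\calN}(\bfx;\bfmu_\lambda,\Sigma_\lambda)$ in $\lambda$ through its parameters, divide \eqref{prelim: liouville_condition} by the (positive) density, and match polynomial coefficients in $\bfy=\bfx-\bfmu_\lambda$; the normalizer is then handled automatically by the $\log\det\Sigma_\lambda$ term, and the whole lemma collapses to two clean matrix identities, $\partial_\lambda\Sigma_\lambda=A_\lambda\Sigma_\lambda+\Sigma_\lambda A_\lambda^{\top}$ and $\partial_\lambda\bfmu_\lambda=A_\lambda\bfmu_\lambda+\bfb_\lambda$, with the constant (trace) condition following for free. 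I verified the key algebra: $\Sigma_\lambda=(I+2\lambda A_\lambda)P$, $A_\lambda P=PA_\lambda^{\top}$, and $\Sigma_\lambda A_\lambda^{\top}=A_\lambda\Sigma_\lambda$ all hold, the Lyapunov identity checks out against $\partial_\lambda S_\lambda=-S_\lambda HPH^{\top}S_\lambda$, and your measurement-term manipulation indeed reproduces $(I+2\lambda A_\lambda)(I+\lambda A_\lambda)PH^{\top}R^{-1}\bfz$. One small point worth making explicit: the quadratic-coefficient matching only pins down the symmetric part of $A_\lambda^{\top}\Sigma_\lambda^{-1}$, which is exactly why the symmetrized Lyapunov form is the right condition to verify. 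Your decomposition has the added benefit of surfacing precisely the mean/covariance ODEs that reappear in the paper's later comparison between the Gaussian Fisher--Rao flow and the EDH flow (Theorem~\ref{theorem: linear_gaussian_connection}), so it connects the lemma more transparently to the rest of the development.
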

\begin{proof}
    Expanding both sides of equation \eqref{prelim: liouville_condition} leads to:
    \begin{equation}
    \begin{split}
        \frac{\partial p(\bfx | \bfz; \lambda)}{\partial \lambda} &= p(\bfx | \bfz; \lambda) \left( \log(p(\bfz | \bfx)) - \frac{\partial \log(p(\bfz; \lambda))}{\partial \lambda} \right), 
        \\
        - \nabla_{\bfx} \cdot \left( p(\bfx | \bfz; \lambda) (A_{\lambda} \bfx + \bfb_{\lambda}) \right) &= - p(\bfx | \bfz; \lambda) \tr(A_{\lambda}) - \frac{\partial p(\bfx | \bfz; \lambda)}{\partial \bfx} \left( A_{\lambda} \bfx + \bfb_{\lambda} \right). 
    \end{split}
    \end{equation}
    Since $p(\bfx | \bfz; \lambda)$ is a Gaussian density, we have:
    \begin{equation}
        \frac{\partial p(\bfx | \bfz; \lambda)}{\partial \bfx} = - p(\bfx | \bfz; \lambda) \left( (\bfx - \bfmu_{\lambda})^{\top} \Sigma_{\lambda}^{-1} \right).
    \end{equation}
    As a result, we only need to show that the following equality holds:
    \begin{equation}
        \log(p(\bfz | \bfx)) - \frac{\partial \log(p(\bfz; \lambda))}{\partial \lambda} = \left( \bfx - \bfmu_{\lambda} \right)^{\top} \Sigma_{\lambda}^{-1} \left( A_{\lambda} \bfx + \bfb_{\lambda} \right) - \tr(A_{\lambda}).
    \end{equation}
    Expanding both sides and re-arranging leads to 
     \begin{equation}
        \frac{\partial \log(p(\bfz; \lambda))}{\partial \lambda} + \frac{1}{2} \log \left( \| 2\pi R \| \right) + \frac{1}{2} \bfz^{\top} R^{-1} \bfz = \tr(A_{\lambda}) + \lambda \bfb_{\lambda}^{\top} H^{\top} R^{-1} \bfz + \bfb_{\lambda}^{\top} P^{-1} \hat{\bfx}. 
     \end{equation}
     Expanding the $\bfb_{\lambda}$ term and re-arranging, we have:
     \begin{equation}
     \begin{split}
         \lambda \bfb_{\lambda}^{\top} H^{\top} R^{-1} \bfz + \bfb_{\lambda}^{\top} P^{-1} \hat{\bfx} = -\frac{1}{2} \bfmu_{\lambda}^{\top} H^{\top} R^{-1} H \bfmu_{\lambda} + \bfmu_{\lambda}^{\top} H^{\top} R^{-1} \bfz.
     \end{split}
     \end{equation}
     The term $\tr(A_{\lambda})$ has the following expression:
     \begin{equation}
     \begin{split}
         \tr(A_{\lambda}) = - \frac{1}{2} \bbE_{p(\bfx | \bfz; \lambda)} \left[ \bfx^{\top} H^{\top} R^{-1} H \bfx \right] + \frac{1}{2} \bfmu_{\lambda}^{\top} H^{\top} R^{-1} H \bfmu_{\lambda}.
    \end{split}
     \end{equation}
     By the definition of $\log(p(\bfz; \lambda))$, we have
     \begin{equation}
         \frac{\partial \log(p(\bfz; \lambda))}{\partial \lambda} + \frac{1}{2} \log\left( | 2 \pi R | \right) + \frac{1}{2} \bfz^{\top} R^{-1} \bfz = \bfmu_{\lambda}^{\top} H^{\top} R^{-1} \bfz - \frac{1}{2} \bbE_{p(\bfx | \bfz; \lambda)} \left[ \bfx^{\top} H^{\top} R^{-1} H \bfx \right].
     \end{equation}
     Finally, the result is obtained by combining the equations above with the transient parameters $\bfmu_{\lambda}$ and $\Sigma_{\lambda}$ given in \eqref{prelim: transient_para}.
\end{proof}

\subsection{Variational Inference}
Variational inference (VI) methods approximate the posterior in \eqref{prelim: bayes_rule} using a PDF $q(\bfx)$ with an explicit expression \citep[Ch.~10]{bishop2006pattern}, termed \textit{variational density}. To perform the approximation, VI minimizes the Kullback-Leibler (KL) divergence between the true posterior $p(\bfx | \bfz)$ and the variational density $q(\bfx)$:
\begin{equation}
\label{prelim: kld}
    D_{KL}\left( q(\bfx) \| p(\bfx | \bfz) \right) = \int q(\bfx) \log \frac{q(\bfx)}{p(\bfx | \bfz)} \d \bfx.
\end{equation}
Formally, given the statistical manifold $\calP$ of probability measures with smooth positive densities,
\begin{equation}\label{prelim: space_of_probability_measures}
    \calP = \Big \{ p(\bfx) \in C^{\infty}: \int p(\bfx) \d \bfx = 1, p(\bfx) > 0 \Big \},
\end{equation}
we seek a variational density $q(\bfx) \in \calP$ that solves the optimization problem:
\begin{equation}\label{min_kld}
    \min_{q(\bfx) \in \calP} D_{KL}(q(\bfx) \| p(\bfx | \bfz) ).
\end{equation}
If $p(\bfx | \bfz) \in \calP$, then the optimal variational density $q^*(\bfx)$ is given by $q^*(\bfx) = p(\bfx | \bfz)$. Conversely, if $p(\bfx | \bfz) \notin \calP$, the optimal variational density $q^*(\bfx)$ satisfies $D_{KL}(q^*(\bfx) \| p(\bfx | \bfz) ) \leq D_{KL}(q(\bfx) \| p(\bfx | \bfz) )$, for all $q(\bfx) \in \calP$. 

Optimizing directly over $\calP$ is challenging due to its infinite-dimensional nature. VI methods usually select a parametric family of variational densities $q(\bfx; \bftheta)$, with parameters $\bftheta$ from an admissible parameter set $\Theta$. Popular choices of variational densities include multivariate Gaussian \citep{opper2009gaussian} and Gaussian mixture \citep{lin2019fast}.
This makes the optimization problem in \eqref{min_kld} finite dimensional:
\begin{equation}
\label{prelim: parametric_vi}
    \min_{\bftheta \in \Theta} D_{KL}(q(\bfx; \bftheta) \| p(\bfx | \bfz) ).
\end{equation}
We distinguish between discrete-time and continuous-time VI techniques. Discrete-time VI performs gradient descent on $D_{KL}$ to update (the parameters of) the variational density, while continuous-time VI uses gradient flow.

\subsubsection{Discrete-Time Formulation: Gradient Descent}

A common approach to optimize the parameters in \eqref{prelim: parametric_vi} is to use gradient descent:
\begin{equation}
\begin{split}
\label{prelim: vi_update}
    \bftheta_{t + 1} &= \bftheta_t - \beta_t \nabla_{\bftheta} D_{KL}\left( q(\bfx; \bftheta) \| p(\bfx | \bfz) \right) \bigm |_{\bftheta = \bftheta_t} \\
    &= \bftheta_t - \beta_t \bbE_{q(\bfx; \bftheta)} \left[ \nabla_{\bftheta} \log(q(\bfx; \bftheta)) \left( \log \left( \frac{q(\bfx; \bftheta)}{p(\bfx | \bfz)} \right) + 1 \right) \right] \biggm |_{\bftheta = \bftheta_t}, 
\end{split}
\end{equation} 
where $\beta_t > 0$ is the step size. Starting with initial parameters $\bftheta_0$, gradient descent updates~$\bftheta_t$ in the direction of the negative gradient of the KL divergence. However, for a non-conjugate prior and likelihood pair, the KL gradient is challenging to obtain due to the expectation in \eqref{prelim: vi_update}. Stochastic gradient methods \citep{hoffman2013stochastic, ranganath2014black} have been proposed to overcome this challenge. Stochastic gradient methods use samples to approximate the expectation over $q(\bfx; \bftheta)$ in \eqref{prelim: vi_update}. The KL divergence is convex with respect to the variational density $q(\bfx; \bftheta)$ but it is not necessarily convex with respect to the density parameters $\bftheta$. As a result, local convergence is to be expected. The convergence analysis of the general gradient descent method is available in \citet{curry1944method}.

\subsubsection{Continuous-Time Formulation: Fischer-Rao Gradient Flow}
\label{subsec: fisher_rao_flow}
Next, we describe an alternative method for solving the optimization problem~\eqref{min_kld} formulated by the VI method. Following the exposition in \cite{chen2023gradient}, we consider a specific geometry over the space of probability densities and present a continuous-time gradient flow approach as an alternative to the usual discrete-time gradient descent method. At a point $q \in \calP$, consider the associated tangent space $T_q\calP$ of the probability space $\calP$ in \eqref{prelim: space_of_probability_measures}. Note that
\begin{equation}
    T_q\calP \subseteq \Big \{ \sigma \in C^{\infty}: \int \sigma(\bfx) \d \bfx = 0 \Big \}. 
\end{equation}
The cotangent space $T^*_q\calP$ is the dual of $T_{q}\calP$. We can introduce a bilinear map $\langle \cdot, \cdot \rangle$ as the duality pairing $T^*_q\calP \times T_{q}\calP \to \bbR$. For any $\psi \in T^*_q\calP$ and $\sigma \in T_{q}\calP$, the duality pairing between $T^*_q\calP$ and $T_{q}\calP$ can be identified in terms of $L^2$ integration as $\langle \psi, \sigma \rangle = \int \psi \sigma \d \bfx$. The most important element in $T^*_q\calP$ we consider is the first variation of the KL divergence $\frac{\delta D_{KL}(q || p)}{\delta q}$,
\begin{equation} \label{fisher_rao: first_variation}
    \left \langle \frac{\delta D_{KL}(q || p)}{\delta q}, \sigma \right \rangle = \lim_{\epsilon \to 0} \frac{D_{KL}(q + \epsilon \sigma || p) - D_{KL}(q || p)}{\epsilon}, 
\end{equation}
for $\sigma \in T_q\calP$. Given a metric tensor at $q$, denoted by $M(q) : T_{q}\calP \to T^*_q\calP$,
we can express the Riemannian metric $g_q : T_{q}\calP \times T_{q}\calP \to \bbR$ as $g_q(\sigma_1, \sigma_2) = \langle M(q)\sigma_1, \sigma_2 \rangle$. Consider the Fisher-Rao Riemannian metric \citep{amari2016information}:
\begin{equation}
\label{fisher_rao: metric}
    g^{\FR}_q(\sigma_1, \sigma_2) = \int \frac{\sigma_1(\bfx) \sigma_2(\bfx)}{q(\bfx)} \d \bfx, \quad \text{for} \ \sigma_1, \sigma_2 \in T_q\calP. 
\end{equation}
The choice of elements in $T^*_q\calP$ is not unique under $L^2$ integration, since $\langle \psi, \sigma \rangle = \langle \psi + c, \sigma \rangle$ for all $\psi \in T^*_q\calP, \ \sigma \in T_q\calP$ and any constant $c$. However, a unique representation can be obtained by requiring $\int q \psi \d \bfx = 0$, and in that case, the metric tensor associated with the Fisher-Rao Riemannian metric $g^{\FR}_q$ is the Fisher-Rao metric tensor $M^{\FR}(q)$, which satisfies the following equation \citep{chen2023sampling}:
\begin{equation}\label{fisher_rao: fr_grad_cond}
    M^{\FR}(q)^{-1} \psi = q \psi \in T_q\calP, \quad \forall \psi \in T^*_q\calP.
\end{equation}
The gradient of the KL divergence under the Fisher-Rao Riemannian metric, denoted by $\nabla^{\FR}_{q} D_{KL}$, is defined according to the following condition:
\begin{equation}\label{fisher_rao: fr_grad_def}
    g^{\FR}_q \left( \nabla^{\FR}_{q} D_{KL}, \sigma \right) = \left \langle \frac{\delta D_{KL}(q || p)}{\delta q}, \sigma \right \rangle, \quad \forall \sigma \in T_q\calP.
\end{equation}
\begin{proposition}[Gradient of KL Divergence]
    The Fisher-Rao Riemannian gradient of the KL divergence $D_{KL}(q(\bfx) \| p(\bfx | \bfz) )$ is given by 
    \begin{equation}
    \label{fisher_rao: fisher_rao_gradient}
        \nabla^{\FR}_{q} D_{KL}(q(\bfx) \| p(\bfx | \bfz) ) = q(\bfx) \left( \log \left( \frac{q(\bfx)}{p(\bfx | \bfz)} \right) - \bbE_{q(\bfx)} \left[ \log \left( \frac{q(\bfx)}{p(\bfx | \bfz)} \right) \right] \right), 
    \end{equation}
    where $p(\bfx | \bfz)$ is the Bayes' posterior given by \eqref{prelim: bayes_rule}.
\end{proposition}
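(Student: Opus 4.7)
The plan is to proceed in three steps: (i) compute the first variation of the KL divergence, (ii) use the normalization constraint $\int q\psi\,\d\bfx = 0$ to pin down a unique representative in $T_q^*\calP$, and (iii) apply the identity \eqref{fisher_rao: fr_grad_cond} to convert the cotangent element into the Fisher-Rao gradient.

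First, I would compute the first variation by direct differentiation. For $\sigma \in T_q\calP$, expand
\begin{equation}
    D_{KL}(q + \epsilon\sigma \| p(\cdot|\bfz)) = \int (q + \epsilon\sigma) \log \frac{q + \epsilon\sigma}{p(\bfx|\bfz)} \d\bfx,
\end{equation}
differentiate at $\epsilon = 0$, and use $\int \sigma \,\d\bfx = 0$ (which holds since $\sigma \in T_q\calP$) to collapse the second piece. This should yield
\begin{equation}
    \left\langle \frac{\delta D_{KL}(q\|p(\cdot|\bfz))}{\delta q}, \sigma \right\rangle = \int \sigma(\bfx) \log \frac{q(\bfx)}{p(\bfx|\bfz)} \d\bfx,
\end{equation}
so that $\log(q/p(\cdot|\bfz))$ is one valid representative of the first variation in $T_q^*\calP$.

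Next, I would invoke the non-uniqueness observation immediately preceding \eqref{fisher_rao: fr_grad_cond}: any two representatives of the first variation differ by an additive constant, and the convention $\int q\psi\,\d\bfx = 0$ singles out a unique one. Subtracting the appropriate constant,
\begin{equation}
    \psi(\bfx) = \log \frac{q(\bfx)}{p(\bfx|\bfz)} - \bbE_{q(\bfx)}\!\left[\log \frac{q(\bfx)}{p(\bfx|\bfz)}\right],
\end{equation}
satisfies $\int q\psi\,\d\bfx = 0$ by construction, and still pairs correctly with every $\sigma \in T_q\calP$ (again using $\int \sigma \,\d\bfx = 0$). Finally, applying \eqref{fisher_rao: fr_grad_cond} to this normalized $\psi$ gives $\nabla^{\FR}_q D_{KL} = M^{\FR}(q)^{-1}\psi = q\psi$, which is exactly the claimed expression.

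The only mildly subtle step is the second one: one must justify that the normalization $\int q\psi\,\d\bfx = 0$ is both consistent with the definition \eqref{fisher_rao: fr_grad_def} (which must hold for all $\sigma \in T_q\calP$, not just mean-zero test functions in a broader sense) and required to get a well-defined tangent vector $q\psi \in T_q\calP$, i.e., $\int q\psi\,\d\bfx = 0$. This is precisely what the constant subtraction enforces, and it also ensures $q\psi$ integrates to zero, so the output indeed lies in $T_q\calP$. Everything else is a routine variational calculation.
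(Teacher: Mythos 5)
Your proposal is correct and follows essentially the same route as the paper's proof: compute the first variation $\int \sigma \log(q/p)\,\d\bfx$ using $\int\sigma\,\d\bfx = 0$, fix the additive constant via the normalization $\int q\psi\,\d\bfx = 0$ to obtain $c = -\bbE_{q}[\log(q/p)]$, and then apply \eqref{fisher_rao: fr_grad_cond} to get $\nabla^{\FR}_q D_{KL} = q\psi$. Your closing remark that the subtraction also guarantees $q\psi$ integrates to zero, and hence lies in $T_q\calP$, is a nice point the paper leaves implicit.
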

\begin{proof}
    The Fr\'echet derivative of the KL divergence $D_{KL}(q(\bfx) \| p(\bfx | \bfz) )$ is given by
    \begin{equation}
        \lim_{\epsilon \to 0} \frac{D_{KL}(q(\bfx) + \epsilon \sigma(\bfx) || p(\bfx | \bfz)) - D_{KL}(q(\bfx) || p(\bfx | \bfz))}{\epsilon} = \int \sigma(\bfx) \left( 1 + \log \left( \frac{q(\bfx)}{p(\bfx | \bfz)} \right) \right) \d \bfx.
    \end{equation}
    Substituting this expression in~\eqref{fisher_rao: first_variation} and using that $\int \sigma(\bfx) \d \bfx= 0$ (since $\sigma \in T_q \calP$), we have 
    \begin{equation} \label{fisher_rao: first_variation_cond}
        \int \frac{\delta D_{KL}(q(\bfx) \| p(\bfx | \bfz) )}{\delta q(\bfx)} \sigma(\bfx) \d \bfx = \int \sigma(\bfx) \log \left( \frac{q(\bfx)}{p(\bfx | \bfz)} \right) \d \bfx. 
    \end{equation}
    However, the first variation of the KL divergence is not uniquely determined because
    \begin{equation}
        \int \sigma(\bfx) \left( \log \left( \frac{q(\bfx)}{p(\bfx | \bfz)} \right) + c \right) \d \bfx = \int \sigma(\bfx) \log \left( \frac{q(\bfx)}{p(\bfx | \bfz)} \right) \d \bfx, \quad \forall c \in \bbR.
    \end{equation}
    Based on our discussion above, the first variation of the KL divergence can be uniquely identified by further requiring that
    \begin{equation}
        \int q(\bfx) \left( \log \left( \frac{q(\bfx)}{p(\bfx | \bfz)} \right) + c \right) \d \bfx = 0,
    \end{equation}
    which leads to the selection $c = -\bbE_{q(\bfx)} \left[ \log \left( \frac{q(\bfx)}{p(\bfx | \bfz)} \right) \right]$. Therefore, the first variation of the KL divergence is given by 
    \begin{equation}\label{eq: kl_first_variation}
        \frac{\delta D_{KL}(q(\bfx) \| p(\bfx | \bfz) )}{\delta q(\bfx)} = \log \left( \frac{q(\bfx)}{p(\bfx | \bfz)} \right) - \bbE_{q(\bfx)} \left[ \log \left( \frac{q(\bfx)}{p(\bfx | \bfz)} \right) \right].
    \end{equation}
    Using~\eqref{fisher_rao: fr_grad_cond}, the Fisher-Rao gradient of the KL divergence $D_{KL}(q(\bfx) \| p(\bfx | \bfz) )$ is then given by
    \begin{equation}
        \nabla^{\FR}_{q} D_{KL}(q(\bfx) \| p(\bfx | \bfz) ) = q(\bfx) \frac{\delta D_{KL}(q(\bfx) \| p(\bfx | \bfz) )}{\delta q(\bfx)}.
    \end{equation}
    Substituting \eqref{eq: kl_first_variation} into the equation above, we get the desired result.
\end{proof}
The Fisher-Rao gradient flow in the space of probability measures $\calP$ takes the following form:
\begin{equation}
\label{fisher_rao: fisher_rao_flow}
    \frac{\partial q(\bfx; t)}{\partial t} = - \nabla^{\FR}_{q} D_{KL}(q(\bfx; t) \| p(\bfx | \bfz) ).
\end{equation}
Further, we consider the case where the variational density is parameterized by $\bftheta$, and denote the space of $\bftheta$-parameterized positive probability densities as:
\begin{equation}
    \calP_{\bftheta} = \{ p(\bfx; \bftheta) \in \calP: \bftheta \in \Theta \subseteq \bbR^l \} \subset \calP
\end{equation}
The basis of $T_{p(\bfx; \bftheta)} \calP_{\bftheta}$ is given by
\begin{equation}
    \left \{ \frac{\partial p(\bfx; \bftheta)}{\partial \theta_1}, \frac{\partial p(\bfx; \bftheta)}{\partial \theta_2}, \ldots, \frac{\partial p(\bfx; \bftheta)}{\partial \theta_l} \right \}, 
\end{equation}
where $\theta_i$ denotes the $i$th element of $\bftheta$. As a result, the Fisher-Rao metric tensor $M^{FR}(\bftheta): \Theta \to \bbR^{l \times l}$ under parametrization $\bftheta$ is given as follows \citep{nielsen2020ig}:
\begin{equation}
\label{fisher_rao: fisher_rao_tensor}
    M^{FR}(\bftheta) \coloneqq \bbE_{p(\bfx; \bftheta)} \left[\nabla_{\bftheta}\log (p(\bfx; \bftheta)) \nabla^{\top}_{\bftheta}\log (p(\bfx; \bftheta))\right],
\end{equation}
which is identical to the Fisher Information Matrix (FIM), denoted $\calI(\bftheta) = M^{FR}(\bftheta)$. Restricting the variational density to the space of $\bftheta$-parameterized densities, we consider the finite-dimensional constrained optimization problem \eqref{prelim: parametric_vi}. Given the metric tensor \eqref{fisher_rao: fisher_rao_tensor}, the Fisher-Rao parameter flow is given by
\begin{equation}
\label{fisher_rao: fr_para_flow}
    \frac{\d \bftheta_t}{\d t} = - \calI^{-1}(\bftheta_t) \nabla _{\bftheta_t} D_{KL}(q(\bfx; \bftheta_t) \| p(\bfx | \bfz)).
\end{equation}
Notice that the functional space Fisher-Rao gradient flow \eqref{fisher_rao: fisher_rao_flow} and the parameter space Fisher-Rao parameter flow \eqref{fisher_rao: fr_para_flow} can be interpreted as the Riemannian gradient flow induced by the Fisher-Rao Riemannian metric \eqref{fisher_rao: metric}.
\section{Transient Density as a Solution to Fisher-Rao Gradient Flow}
\label{sec: transient_and_fisher}

We aim to identify a VI formulation that yields the transient density~\eqref{prelim: log_bayes_rule} as its solution, thus establishing a connection between the transient density \eqref{prelim: log_bayes_rule} in the particle flow formulation and the Fisher-Rao gradient flow \eqref{fisher_rao: fisher_rao_flow} in the VI formulation. In order to do this, we must address the following challenges.

\begin{description}
    \item[Time parameterizations:]  As shown in Section~\ref{subsec: pfpf}, the particle flow is derived using a particular parameterization of Bayes' rule, which introduces a pseudo-time parameter $\lambda$. However, the Fisher-Rao gradient flow \eqref{fisher_rao: fisher_rao_flow} derived in Section~\ref{subsec: fisher_rao_flow} does not possess such a pseudo-time parameter. In fact, we have that the transient density trajectory satisfies $p(\bfx | \bfz; \lambda) \to p(\bfx | \bfz)$ as $\lambda \to 1$, while the variational density trajectory defined by the Fisher-Rao gradient flow satisfies $q(\bfx; t) \to p(\bfx | \bfz)$ as $t \to \infty$. 
    
    \item[Initialization:]  Another key difference is that the transient density trajectory $p(\bfx | \bfz; \lambda)$ defines a transformation from the prior to the Bayes' posterior, while the variational density trajectory defines a transformation from any density to the Bayes' posterior. 
\end{description}

As our result below shows, we can obtain the transient density as a solution to the Fisher-Rao gradient flow by initializing the variational density with the prior and introducing a time scaling function. The time scaling ensures that the time rate of change of the variational density matches the pseudo-time rate of change of the transient density, which is then used to derive the particle dynamics function governing the EDH flow. 

\begin{theorem}[Transient Density as a Solution to Fisher-Rao Gradient Flow]
\label{theorem: fr_sol}
    The transient density \eqref{prelim: log_bayes_rule} with pseudo-time scaling function $\lambda(t) = 1 - \exp(-t)$ is a solution to the Fisher-Rao gradient flow~\eqref{fisher_rao: fisher_rao_flow} of the KL divergence  with $q(\bfx; 0) = p(\bfx)$.
\end{theorem}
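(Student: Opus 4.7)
The plan is to verify that the candidate $q(\bfx;t) \coloneqq p(\bfx \mid \bfz; \lambda(t))$ with $\lambda(t) = 1 - e^{-t}$ satisfies the Fisher--Rao gradient flow \eqref{fisher_rao: fisher_rao_flow} pointwise in $\bfx$ and $t$, and to check the initial condition. Initialization is immediate: $\lambda(0) = 0$ so $p(\bfx \mid \bfz; 0) = p(\bfx)$ by the definition \eqref{prelim: log_bayes_rule}. The rest of the argument is a direct computation of both sides, comparing them, and reading off the ODE that $\lambda(t)$ must satisfy.

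For the left-hand side, I would apply the chain rule $\frac{\partial q(\bfx;t)}{\partial t} = \dot{\lambda}(t)\,\frac{\partial p(\bfx \mid \bfz;\lambda)}{\partial \lambda}$, and compute $\partial/\partial \lambda$ through the log-representation $\log p(\bfx \mid \bfz;\lambda) = \lambda \log p(\bfz \mid \bfx) + \log p(\bfx) - \log p(\bfz;\lambda)$. Differentiating $p(\bfz;\lambda) = \int p(\bfz \mid \bfx)^\lambda p(\bfx)\,d\bfx$ and dividing by $p(\bfz;\lambda)$ yields the standard identity
\begin{equation}
    \frac{\partial \log p(\bfz;\lambda)}{\partial \lambda} = \bbE_{p(\bfx \mid \bfz;\lambda)}\!\left[\log p(\bfz \mid \bfx)\right],
\end{equation}
so that
\begin{equation}
    \frac{\partial p(\bfx \mid \bfz;\lambda)}{\partial \lambda} = p(\bfx \mid \bfz;\lambda)\Bigl(\log p(\bfz \mid \bfx) - \bbE_{p(\bfx \mid \bfz;\lambda)}[\log p(\bfz \mid \bfx)]\Bigr).
\end{equation}

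For the right-hand side, I would plug $q = p(\bfx \mid \bfz;\lambda)$ into the Fisher--Rao gradient formula \eqref{fisher_rao: fisher_rao_gradient}. Using \eqref{prelim: log_bayes_rule} and Bayes' rule, the log-ratio simplifies to $\log\frac{q(\bfx)}{p(\bfx \mid \bfz)} = (\lambda-1)\log p(\bfz \mid \bfx) + \log p(\bfz) - \log p(\bfz;\lambda)$. Subtracting its $q$-expectation kills the two $\bfx$-independent terms, giving
\begin{equation}
    -\nabla^{\FR}_{q} D_{KL}(q \| p(\bfx \mid \bfz)) = (1-\lambda)\, q(\bfx)\Bigl(\log p(\bfz \mid \bfx) - \bbE_{q(\bfx)}[\log p(\bfz \mid \bfx)]\Bigr).
\end{equation}

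Equating the two expressions reduces the Fisher--Rao flow to the scalar ODE $\dot{\lambda}(t) = 1 - \lambda(t)$ with $\lambda(0) = 0$, whose unique solution is $\lambda(t) = 1 - e^{-t}$. The only subtlety worth double-checking is the interchange of differentiation and integration in computing $\partial \log p(\bfz;\lambda)/\partial \lambda$, which is standard under the integrability implicit in Assumption~\ref{assum: regularity}, and the uniqueness convention $\int q\,\psi\,d\bfx = 0$ used in \eqref{fisher_rao: fisher_rao_gradient}. No major obstacle is expected; the proof is essentially a clean alignment of the pseudo-time and real-time parameterizations mediated by the factor $(1-\lambda)$ that naturally appears in the Fisher--Rao gradient.
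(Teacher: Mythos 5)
Your proposal is correct and follows essentially the same route as the paper's proof: compute $\partial_t q$ via the chain rule and the identity $\partial_\lambda \log p(\bfz;\lambda) = \bbE_{p(\bfx\mid\bfz;\lambda)}[\log p(\bfz\mid\bfx)]$, simplify the Fisher--Rao gradient using the cancellation of $\bfx$-independent terms, and match the two sides. The only cosmetic difference is that you derive the scalar ODE $\dot{\lambda} = 1-\lambda$ and solve it, whereas the paper substitutes $\lambda(t)=1-e^{-t}$ and verifies directly; the content is identical.
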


\begin{proof}
    We have to verify that $
        q^*(\bfx; \lambda(t)) = {p(\bfz | \bfx)^{\lambda(t)} p(\bfx)}/{p(\bfz; \lambda(t))}$
    satisfies
    \begin{equation}\label{eq:to-verify}
        \frac{\partial q^*(\bfx; \lambda(t))}{\partial t} = - \nabla^{\FR}_{q^*} D_{KL}(q^*(\bfx; \lambda(t)) \| p(\bfx | \bfz) ).
    \end{equation}
    The derivative of $q^*(\bfx; \lambda(t))$ with respect to time $t$ in the left-hand side above can be written as:
    \begin{equation} \label{thm_proof: sol_derivative}
        \frac{\partial q^*(\bfx; \lambda(t))}{\partial t} = (1 - \lambda(t)) q^*(\bfx; \lambda(t)) \left( \log \left( p(\bfz | \bfx) \right) - \frac{1}{p(\bfz; \lambda(t))} \frac{\partial p(\bfz; \lambda(t))}{\partial \lambda(t)} \right).
    \end{equation}
    %
    By the definition of $p(\bfz; \lambda(t))$ in~\eqref{prelim: log_bayes_rule}, we have:
    \begin{equation}
        \frac{\partial p(\bfz; \lambda(t))}{\partial \lambda(t)} = \int p(\bfz | \bfx)^{\lambda(t)} p(\bfx) \log \left( p(\bfz | \bfx) \right) \d \bfx.
    \end{equation}
    As a result, it holds that
    \begin{equation}
        \frac{1}{p(\bfz; \lambda(t))} \frac{\partial p(\bfz; \lambda(t))}{\partial \lambda(t)} = \int \frac{p(\bfz | \bfx)^{\lambda(t)} p(\bfx)}{p(\bfz; \lambda(t))} \log \left( p(\bfz | \bfx) \right) \d \bfx = \bbE_{q^*(\bfx; \lambda(t))} \left[ \log(p(\bfz | \bfx)) \right]. 
    \end{equation}
    Substituting into \eqref{thm_proof: sol_derivative}, we obtain
    \begin{equation}
        \frac{\partial q^*(\bfx; \lambda(t))}{\partial t} = (1 - \lambda(t)) q^*(\bfx; \lambda(t)) \left( \log \left( p(\bfz | \bfx) \right) - \bbE_{q^*(\bfx; \lambda(t))} \left[ \log(p(\bfz | \bfx)) \right] \right). 
    \end{equation}
    On the other hand, regarding the right-hand side of~\eqref{eq:to-verify}, using the definition of $q^*(\bfx; \lambda(t))$ and~\eqref{prelim: bayes_rule}, we obtain
    \begin{equation}
        \log \left( \frac{q^*(\bfx; \lambda(t))}{p(\bfx | \bfz)} \right) = (\lambda(t) - 1) \log(p(\bfz | \bfx)) + \log \left( \frac{p(\bfz)}{p(\bfz; \lambda(t))} \right).
    \end{equation}
    Substituting this expression into~\eqref{fisher_rao: fisher_rao_gradient}, the negative Fisher-Rao gradient of the KL divergence can be expressed as
    \begin{equation} \label{thm_proof: n_fr_grad}
        - \nabla^{\FR}_{q^*} D_{KL}(q^*(\bfx; \lambda(t)) \| p(\bfx | \bfz) ) = (1 - \lambda(t)) q^*(\bfx; \lambda(t)) \left( \log(p(\bfz | \bfx)) - \bbE_{q^*(\bfx; \lambda(t))} \left[ \log(p(\bfz | \bfx)) \right] \right),
    \end{equation}
    which matches the expression for $\frac{\partial q^*(\bfx; \lambda(t))}{\partial t} $.
    %
    %
    %
\end{proof}

%
%
Theorem~\ref{theorem: fr_sol} shows that a Fisher-Rao particle flow can be derived by finding a particle dynamics function $\bfphi$ such that the following equation holds:
\begin{equation}
\label{fisher_rao: liouville_cond}
    \nabla_{\bfx} \cdot \left( q(\bfx; t) \bfphi(\bfx, t) \right) = \nabla^{FR}_{q} D_{KL}(q(\bfx; t) \| p(\bfx | \bfz) ),
\end{equation}
with the initial variational density set to the prior $q(\bfx; 0) = p(\bfx)$. Obtaining a closed-form expression for this particle dynamics function is challenging in general. To alleviate this, we can restrict the variational density to have a specific parametric form. In the next section, we restrict the variational density to a single Gaussian and show that, under linear Gaussian assumptions, the corresponding particle dynamics function is a time-scaled version of the particle dynamics function governing the EDH flow. 



\section{Gaussian Fisher-Rao Flows}
\label{sec: gaussian_flow}
This section focuses on the case where the variational density is selected as Gaussian and establishes a connection between the Gaussian Fisher-Rao gradient flow and the particle dynamics in the particle flow. Since Gaussian densities can be specified by mean $\bfmu$ and covariance $\Sigma$ parameters, instead of working in the space of Gaussian densities, we work in the space of parameters:
\begin{equation}
\label{gaussian_approx: param_space}
    \calA \coloneqq \left \{ \left( \bfmu, \VEC(\Sigma^{-1}) \right): \bfmu \in \bbR^n, \Sigma^{-1} \in \bbR^{n \times n} ,  \Sigma  \succ 0  \right \},
\end{equation}
where $\VEC( \cdot )$ is the vectorization operator that converts a matrix to a vector by stacking its columns. This parametrization results in the same update for the inverse covariance matrix as the half-vectorization parametrization, which accounts for the symmetry of the inverse covariance matrix \citep{barfoot2020multivariate}. For $\bfalpha \in \calA$, the inverse FIM \eqref{fisher_rao: fisher_rao_tensor} evaluates to:
\begin{equation}
\label{gaussian_approx: fisher_information}
    \calI^{-1}(\bfalpha) = \begin{bmatrix}
                                \Sigma & \mathbf{0} \\
                                \mathbf{0} & 2 \left( \Sigma^{-1} \otimes \Sigma^{-1} \right)
                            \end{bmatrix},
\end{equation}
where $\otimes$ denotes the Kronecker product. To simplify the notation, define:
\begin{equation}
\label{fisher_rao: kl_kernel}
    V(\bfx; \bfalpha) = \log(q(\bfx; \bfalpha)) - \log(p(\bfx, \bfz)), 
\end{equation}
where $q(\bfx; \bfalpha)$ is the variational density and $p(\bfx, \bfz) = p(\bfz | \bfx) p(\bfx)$ is the joint density. The derivative of the KL divergence with respect to the Gaussian parameters is given by \citep{opper2009gaussian}:
\begin{align}
    \nabla_{\bfmu} D_{KL}(q(\bfx; \bfalpha) \| p(\bfx | \bfz)) &= \bbE_{q(\bfx; \bfalpha)} \left[ \nabla_{\bfx} V(\bfx; \bfalpha) \right],
    \\
    \nabla_{\Sigma^{-1}} D_{KL}(q(\bfx; \bfalpha) \| p(\bfx | \bfz)) &= -\frac{1}{2} \Sigma \bbE_{q(\bfx; \bfalpha)} \left[ \nabla^2_{\bfx} V(\bfx; \bfalpha)\right] \Sigma,  \label{gaussian_approx: kl_derivative}
\end{align}
with $V(\bfx; \bfalpha)$ defined in \eqref{fisher_rao: kl_kernel}. Inserting \eqref{gaussian_approx: fisher_information} and \eqref{gaussian_approx: kl_derivative} into \eqref{fisher_rao: fr_para_flow}, and using the fact $\VEC(ABC) = (C^{\top} \otimes A) \VEC(B)$, the Gaussian Fisher-Rao parameter flow takes the form:
\begin{equation}
\label{gaussian_approx: approx_fr_para_flow}
    \frac{\d \bfmu_{t}}{\d t} = -\Sigma_t \bbE_{q(\bfx; \bfalpha_t)}\left[\nabla_{\bfx} V(\bfx; \bfalpha_t)\right], \quad \frac{\d \Sigma^{-1}_{t}}{\d t} = \bbE_{q(\bfx; \bfalpha_t)}\left[\nabla^2_{\bfx} V(\bfx; \bfalpha_t)\right], 
\end{equation} 
where $q(\bfx; \bfalpha_t) = p_{\calN}(\bfx; \mu_t, \Sigma_t)$. As a result, the Gaussian Fisher-Rao parameter flow \eqref{gaussian_approx: approx_fr_para_flow} defines a time rate of change of the variational density $q(\bfx; \bfalpha_t)$, which can be captured using the Liouville equation. This observation is formally stated in the following result.

\begin{lemma}[Gaussian Fisher-Rao Particle Flow]
\label{lemma: natural_particle_flow}
    The time rate of change of the variational density $q(\bfx; \bfalpha_t)$ induced by the Gaussian Fisher-Rao parameter flow~\eqref{gaussian_approx: approx_fr_para_flow} is captured by the  Liouville equation:
    \begin{equation}
    \label{gaussian_approx: natural_para_liouville}
    \begin{split}
        \frac{\d q(\bfx; \bfalpha_t)}{\d t} = - \nabla_{\bfx} \cdot \big( q(\bfx; \bfalpha_t) \tilde{\phi}(\bfx, t) \big), 
    \end{split}
    \end{equation} 
    with particle dynamics function $ \tilde{\bfphi}(\bfx, t) = \tilde{A}_t \bfx + \tilde{\bfb}_t$, where
    \begin{align}\label{gaussian_approx: approx_particle_flow}
     \tilde{A}_t  = -\frac{1}{2} \Sigma_t \bbE_{q(\bfx; \bfalpha_t)}\left[\nabla^2_{\bfx}V(\bfx; \bfalpha_t)\right] , \qquad
     \tilde{\bfb}_t = -\Sigma_t \bbE_{q(\bfx; \bfalpha_t)}\left[\nabla_{\bfx}V(\bfx; \bfalpha_t)\right] - \tilde{A}_t \bfmu_t.
    \end{align}
\end{lemma}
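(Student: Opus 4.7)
The plan is to verify the Liouville equation \eqref{gaussian_approx: natural_para_liouville} pointwise in $\bfx$ by expanding both sides as quadratic polynomials in $(\bfx-\bfmu_t)$ with the common factor $q(\bfx;\bfalpha_t)$ pulled out, and then matching the constant, linear, and quadratic coefficients. The underlying reason this should work is that $q(\bfx;\bfalpha_t)$ is Gaussian, so its logarithm is quadratic in $\bfx$, and an affine ansatz for $\tilde{\bfphi}(\bfx,t)$ is the minimal parametric family that can produce a matching divergence with the time-varying mean and covariance from \eqref{gaussian_approx: approx_fr_para_flow}.

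For the left-hand side, I would differentiate $\log q(\bfx;\bfalpha_t)= -\tfrac12 (\bfx-\bfmu_t)^\top\Sigma_t^{-1}(\bfx-\bfmu_t) - \tfrac12\log\det(2\pi\Sigma_t)$ with respect to $t$ via the chain rule, using $\d\bfmu_t/\d t$ and $\d\Sigma_t^{-1}/\d t$ from \eqref{gaussian_approx: approx_fr_para_flow} and the standard identity $\d\log\det\Sigma_t/\d t = -\tr(\Sigma_t\,\d\Sigma_t^{-1}/\d t)$. After multiplying by $q$, this produces a linear-in-$(\bfx-\bfmu_t)$ term proportional to $\bbE_{q(\bfx;\bfalpha_t)}[\nabla_{\bfx}V]$, a quadratic-in-$(\bfx-\bfmu_t)$ term involving $\bbE_{q(\bfx;\bfalpha_t)}[\nabla^2_{\bfx}V]$, and an $\bfx$-independent trace term $\tfrac12\tr(\Sigma_t\,\bbE_{q(\bfx;\bfalpha_t)}[\nabla^2_{\bfx}V])$.

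For the right-hand side, I would use that $\tilde{\bfphi}$ is affine, so $\nabla_{\bfx}\cdot\tilde{\bfphi} = \tr(\tilde{A}_t)$, together with $\nabla_{\bfx} q(\bfx;\bfalpha_t) = -q(\bfx;\bfalpha_t)\,\Sigma_t^{-1}(\bfx-\bfmu_t)$, to obtain
\begin{equation*}
    -\nabla_{\bfx}\cdot\bigl(q(\bfx;\bfalpha_t)\,\tilde{\bfphi}(\bfx,t)\bigr) = q(\bfx;\bfalpha_t)\Bigl[-\tr(\tilde{A}_t) + (\bfx-\bfmu_t)^\top \Sigma_t^{-1}\bigl(\tilde{A}_t\bfx+\tilde{\bfb}_t\bigr)\Bigr].
\end{equation*}
Rewriting $\tilde{A}_t\bfx+\tilde{\bfb}_t = \tilde{A}_t(\bfx-\bfmu_t) - \Sigma_t\,\bbE_{q(\bfx;\bfalpha_t)}[\nabla_{\bfx}V]$ and substituting the definitions in \eqref{gaussian_approx: approx_particle_flow}, which give $\Sigma_t^{-1}\tilde{A}_t = -\tfrac12 \bbE_{q(\bfx;\bfalpha_t)}[\nabla^2_{\bfx}V]$ and $\tr(\tilde{A}_t) = -\tfrac12\tr(\Sigma_t\,\bbE_{q(\bfx;\bfalpha_t)}[\nabla^2_{\bfx}V])$, yields exactly the three-term expression obtained on the left-hand side.

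The bookkeeping itself is routine, so the main thing to watch out for is the sign in $\d\log\det\Sigma_t/\d t$, which is the usual place to accrue an error, together with a mild subtlety: although $\tilde{A}_t$ need not be symmetric, the product $\Sigma_t^{-1}\tilde{A}_t$ simplifies exactly to $-\tfrac12\bbE_{q(\bfx;\bfalpha_t)}[\nabla^2_{\bfx}V]$, which is symmetric. Hence the quadratic form $(\bfx-\bfmu_t)^\top \Sigma_t^{-1}\tilde{A}_t(\bfx-\bfmu_t)$ coincides unambiguously with the $\d\Sigma_t^{-1}/\d t$ contribution on the left-hand side without having to separately symmetrize.
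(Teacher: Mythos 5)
Your proposal is correct and follows essentially the same route as the paper's proof: both expand $\d q/\d t$ via the chain rule on $(\bfmu_t,\Sigma_t^{-1})$, expand the divergence of $q$ times the affine field using $\nabla_\bfx\cdot\tilde{\bfphi}=\tr(\tilde{A}_t)$ and $\nabla_\bfx q=-q\,\Sigma_t^{-1}(\bfx-\bfmu_t)$, and match the constant, linear, and quadratic terms in $(\bfx-\bfmu_t)$. Working with $\log q$ rather than $q$ directly, and the remark on the symmetry of $\Sigma_t^{-1}\tilde{A}_t$, are cosmetic differences only.
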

\begin{proof}
    Using the chain rule and~\eqref{gaussian_approx: approx_fr_para_flow}, we can write:
    \begin{align}
        &\frac{\d q(\bfx; \bfalpha_t)}{\d t} = \frac{\partial q(\bfx; \bfalpha_t)}{\partial \bfmu_{t}} \frac{\d \bfmu_{t}}{\d t} + \tr\left( \frac{\partial q(\bfx; \bfalpha_t)}{\partial \Sigma^{-1}_{t}} \frac{\d \Sigma^{-1}_{t}}{\d t} \right)
        \\
        &= q(\bfx; \bfalpha_t) (\bfmu_t - \bfx)^{\top} \bbE_{q(\bfx; \bfalpha_t)}\left[\nabla_{\bfx}V(\bfx; \bfalpha_t)\right] + \frac{1}{2} q(\bfx; \bfalpha_t) \tr \left( \Sigma_t \bbE_{q(\bfx; \bfalpha_t)}\left[\nabla^2_{\bfx}V(\bfx; \bfalpha_t)\right] \right) 
        \\
        & \quad - \frac{1}{2} q(\bfx; \bfalpha_t) (\bfx - \bfmu_t)^{\top} \bbE_{q(\bfx; \bfalpha_t)}\left[\nabla^2_{\bfx}V(\bfx; \bfalpha_t)\right] (\bfx - \bfmu_t), 
        \label{eq:to-match}
    \end{align}
    where we have employed Jacobi's formula~\citep{petersen2008matrix} to write the derivatives of the Gaussian density.
    Substituting the particle dynamics function defined by~\eqref{gaussian_approx: approx_particle_flow} into \eqref{gaussian_approx: natural_para_liouville}, we obtain
    \begin{align*}
        &- \nabla_{\bfx} \cdot \big( q(\bfx; \bfalpha_t) (\tilde{A}_t \bfx + \tilde{\bfb}_t) \big) = -q(\bfx; \bfalpha_t) \tr(\tilde{A}_t) + \frac{\partial q(\bfx; \bfalpha_t)}{\partial \bfmu_t} (\tilde{A}_t \bfx + \tilde{\bfb}_t) \\
        &= \frac{1}{2}  q(\bfx; \bfalpha_t)  \tr \left( \Sigma_t \bbE_{q(\bfx; \bfalpha_t)}\left[\nabla^2_{\bfx}V(\bfx; \bfalpha_t)\right] \right) + q(\bfx; \bfalpha_t) (\bfx - \bfmu_t)^{\top} \Sigma^{-1}_t \tilde{A}_t (\bfx - \bfmu_t) \\
        & \quad - q(\bfx; \bfalpha_t) (\bfx - \bfmu_t)^{\top} \bbE_{q(\bfx; \bfalpha_t)}\left[\nabla_{\bfx}V(\bfx; \bfalpha_t)\right] .
    \end{align*}
    Substituting the value of $\tilde{A}_t$ in this expression, we see that it matches~\eqref{eq:to-match}.
\end{proof}

According to Theorem~\ref{theorem: fr_sol}, the particle dynamics function described in \eqref{gaussian_approx: approx_particle_flow}, which governs the Gaussian Fisher-Rao particle flow, must correspond to a time-scaled version of the particle dynamics function in \eqref{prelim: exact_flow} that governs the EDH flow. This equivalence holds under the linear Gaussian assumptions. This observation motivates our forthcoming discussion.

Under Assumption~\ref{assumption:linear_Gaussian} (linear Gaussian assumption), we have:
\begin{equation}
    \nabla_{\bfx}V(\bfx; \bfalpha_t) = \Sigma^{-1}_t(\bfmu_t - \bfx) + \Sigma^{-1}_p(\bfx - \bfmu_p), \quad \nabla^2_{\bfx}V(\bfx; \bfalpha_t) = -\Sigma^{-1}_t + \Sigma^{-1}_p, 
\end{equation}
where $\bfmu_p = \hat{\bfx} + PH^{\top} (R + HPH^{\top})^{-1} (\bfz - H \hat{\bfx})$ and $\Sigma^{-1}_p = P^{-1} + H^{\top}R^{-1}H$ denote the posterior mean and the inverse of the posterior covariance, respectively. As a result, the Gaussian Fisher-Rao parameter flow \eqref{gaussian_approx: approx_fr_para_flow} becomes:
\begin{equation}
\label{gaussian_approx: linear_fr_para_flow}
    \frac{\d \bfmu_{t}}{\d t} = -\Sigma_t \Sigma^{-1}_p (\bfmu_t - \bfmu_p), \qquad \frac{\d \Sigma^{-1}_{t}}{\d t} = \Sigma^{-1}_p - \Sigma^{-1}_t.
\end{equation} 
Also, the particle dynamics function from Lemma~\ref{lemma: natural_particle_flow}, which describes the Gaussian Fisher-Rao particle flow, is determined by,
\begin{equation}\label{gaussian_approx: linear_fr_particle_flow}
 \tilde{A}_t = - \frac{1}{2} \Sigma_t \left( \Sigma^{-1}_p - \Sigma^{-1}_t \right), \qquad \tilde{\bfb}_t = \Sigma_t \Sigma^{-1}_p \left( \bfmu_p - \bfmu_t \right) - \tilde{A}_t \bfmu_t. 
\end{equation}
Based on Theorem~\ref{theorem: fr_sol}, the transient parameters \eqref{prelim: transient_para} describing the transient density should be a time-scaled solution to the Gaussian Fisher-Rao parameter flow \eqref{gaussian_approx: linear_fr_para_flow} under linear Gaussian assumptions, indicating a connection between the particle dynamics functions. This intuition is formalized in the following result.

\begin{theorem}[EDH Flow as Fisher-Rao Particle Flow]
\label{theorem: linear_gaussian_connection}
    Under Assumption~\ref{assumption:linear_Gaussian}, the particle dynamics function $\tilde{\phi}(\bfx, t) = \tilde{A}_t \bfx + \tilde{\bfb}_t$ determined by \eqref{gaussian_approx: linear_fr_particle_flow}, which governs the Gaussian Fisher-Rao particle flow, is a time-scaled version of the EDH flow particle dynamics $\phi(\bfx, \lambda) = A_\lambda \bfx + \bfb_\lambda$ determined by \eqref{prelim: exact_flow}, with time scaling function given by $\lambda(t) = 1 - \exp(-t)$. 
\end{theorem}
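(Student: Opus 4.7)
The plan is to use Theorem~\ref{theorem: fr_sol} to identify the time-varying parameters of the Gaussian Fisher-Rao flow with the transient parameters from \eqref{prelim: transient_para}, and then match the two affine drifts coefficient by coefficient. Under Assumption~\ref{assumption:linear_Gaussian} the transient density \eqref{prelim: log_bayes_rule} is the Gaussian $p_{\calN}(\bfx; \bfmu_\lambda, \Sigma_\lambda)$, so Theorem~\ref{theorem: fr_sol} combined with uniqueness of the ODE \eqref{gaussian_approx: linear_fr_para_flow} (equivalently, a direct substitution using $\lambda'(t) = 1 - \lambda(t)$) implies that the solution $(\bfmu_t, \Sigma_t)$ initialized at the prior satisfies $\bfmu_t = \bfmu_{\lambda(t)}$ and $\Sigma_t = \Sigma_{\lambda(t)}$ for all $t \geq 0$. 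It then remains to verify the two identities $\tilde{A}_t = \lambda'(t) A_{\lambda(t)}$ and $\tilde{\bfb}_t = \lambda'(t) \bfb_{\lambda(t)}$.

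For the linear part, I would first apply the Woodbury identity to \eqref{prelim: transient_para} to obtain $\Sigma_{\lambda}^{-1} = P^{-1} + \lambda H^\top R^{-1} H$, which combined with $\Sigma_p^{-1} = P^{-1} + H^\top R^{-1} H$ yields $\Sigma_p^{-1} - \Sigma_{\lambda(t)}^{-1} = (1-\lambda(t)) H^\top R^{-1} H$. Substituting into the expression for $\tilde{A}_t$ in \eqref{gaussian_approx: linear_fr_particle_flow} reduces it to $-\tfrac{1}{2}(1-\lambda(t)) \Sigma_{\lambda(t)} H^\top R^{-1} H$. The remaining matrix identity $\Sigma_{\lambda(t)} H^\top R^{-1} = PH^\top (R + \lambda(t) HPH^\top)^{-1}$ follows by expanding $\Sigma_{\lambda(t)}$ from \eqref{prelim: transient_para} and factoring out the common $PH^\top(R+\lambda(t)HPH^\top)^{-1}$, after which comparison with $A_\lambda$ in \eqref{prelim: exact_flow} yields $\tilde{A}_t = \lambda'(t) A_{\lambda(t)}$.

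For the bias, rather than expanding the unwieldy expression $(I + 2\lambda A_\lambda)(A_\lambda \hat{\bfx} + (I+\lambda A_\lambda)PH^\top R^{-1}\bfz)$ directly, I would argue via first moments of the Liouville equation. Multiplying \eqref{prelim: liouville_condition} and \eqref{gaussian_approx: natural_para_liouville} by $\bfx$ and integrating by parts under the boundary condition in Assumption~\ref{assum: regularity} yields $\frac{\d \bfmu_\lambda}{\d \lambda} = A_\lambda \bfmu_\lambda + \bfb_\lambda$ (from Lemma~\ref{lemma: exact_flow_exact}) and $\dot{\bfmu}_t = \tilde{A}_t \bfmu_t + \tilde{\bfb}_t$ (from Lemma~\ref{lemma: natural_particle_flow}). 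Combining the chain rule $\dot{\bfmu}_t = \lambda'(t) \frac{\d \bfmu_\lambda}{\d \lambda}\big|_{\lambda=\lambda(t)}$ with $\bfmu_t = \bfmu_{\lambda(t)}$ and the already-proven identity $\tilde{A}_t = \lambda'(t) A_{\lambda(t)}$ then forces $\tilde{\bfb}_t = \lambda'(t) \bfb_{\lambda(t)}$, completing the argument.

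The main technical obstacle is the Woodbury-plus-factoring step in the second paragraph; the parameter identification is essentially immediate from Theorem~\ref{theorem: fr_sol}, and the bias matching collapses to a one-line chain rule argument once the linear part is settled, bypassing the need to manipulate the more complicated closed form of $\bfb_\lambda$.
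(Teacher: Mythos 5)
Your proposal is correct, and for the bias term it takes a genuinely different and noticeably cleaner route than the paper. The parameter identification $(\bfmu_t,\Sigma_t)=(\bfmu_{\lambda(t)},\Sigma_{\lambda(t)})$ and the reduction of $\tilde A_t$ to $-\tfrac12(1-\lambda(t))\Sigma_{\lambda(t)}H^\top R^{-1}H = \lambda'(t)A_{\lambda(t)}$ via Woodbury and the identity $\Sigma_{\lambda(t)}H^\top R^{-1}=PH^\top(R+\lambda(t)HPH^\top)^{-1}$ coincide with what the paper does (the paper verifies the closed-form solution of \eqref{gaussian_approx: linear_fr_para_flow} by direct substitution, which is your fallback and is the safer of your two suggested justifications: invoking Theorem~\ref{theorem: fr_sol} alone requires the additional observation that the function-space flow preserves Gaussianity here and hence agrees with the parameter-space flow, which the paper does not formally establish). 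Where you genuinely diverge is $\tilde{\bfb}_t$: the paper grinds through the closed form $(I+2\lambda A_\lambda)(A_\lambda\hat{\bfx}+(I+\lambda A_\lambda)PH^\top R^{-1}\bfz)$ via a chain of matrix identities to arrive at $\bfb_\lambda = PH^\top(R+\lambda HPH^\top)^{-1}\bfz + A_\lambda\bfmu_\lambda$, whereas you extract $\bfb_\lambda = \tfrac{\d\bfmu_\lambda}{\d\lambda} - A_\lambda\bfmu_\lambda$ from the first moment of the Liouville equation (using Lemma~\ref{lemma: exact_flow_exact}, integration by parts being unproblematic for a Gaussian density with affine drift), note $\dot{\bfmu}_t=\tilde A_t\bfmu_t+\tilde{\bfb}_t$ holds by construction of $\tilde{\bfb}_t$ in \eqref{gaussian_approx: linear_fr_particle_flow}, and conclude by the chain rule. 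This buys a substantial saving in algebra at the cost of leaning on Lemma~\ref{lemma: exact_flow_exact} as an external input, while the paper's computation is self-contained and additionally produces the explicit simplified form of $\bfb_\lambda$, which is of independent interest. Both arguments are sound.
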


\begin{proof}
    Under Assumption~\ref{assumption:linear_Gaussian}, a solution to the Gaussian Fisher-Rao parameter flow \eqref{gaussian_approx: linear_fr_para_flow} is given as follows:
    \begin{equation}
    \label{gaussian_approx: natural_para_sol}
        \bfmu_{t} = \Sigma_t (P^{-1} \hat{\bfx} + \lambda(t) H^{\top} R^{-1} \bfz), \qquad \Sigma_{t}^{-1} = P^{-1} + \lambda(t) H^{\top}R^{-1}H.
    \end{equation}
    This can be verified by observing that 
    \begin{equation}
        \begin{split}
            \frac{\d \Sigma^{-1}_{t}}{\d t} &= (1 - \lambda(t)) H^{\top}R^{-1}H = \Sigma^{-1}_p - \Sigma^{-1}_t \\
            \frac{\d \bfmu_t}{\d t} &= \frac{\d \Sigma_{t}}{\d t} \Sigma^{-1}_t \bfmu_t + (1 - \lambda(t)) \Sigma_{t} H^{\top} R^{-1} \bfz = -\Sigma_t \Sigma^{-1}_p \bfmu_t + \bfmu_t + (1 - \lambda(t)) \Sigma_{t} H^{\top} R^{-1} \bfz 
            \\
            &= -\Sigma_t \Sigma^{-1}_p \bfmu_t + \Sigma_t (P^{-1} \hat{\bfx} + H^{\top} R^{-1} \bfz) = -\Sigma_t \Sigma^{-1}_p \bfmu_t + \Sigma_t \Sigma^{-1}_p \bfmu_p, 
        \end{split}
    \end{equation}
    where the equality $P^{-1} \hat{\bfx} + H^{\top} R^{-1} \bfz = \Sigma^{-1}_p \bfmu_p$ can be obtained by applying the Woodbury matrix identity~\citep{petersen2008matrix}. Using the closed-form expressions for $\bfmu_t$ and $\Sigma_t$ in equation \eqref{gaussian_approx: natural_para_sol}, we can write $\tilde{A}_t$ and $\tilde{\bfb}_t$ in \eqref{gaussian_approx: linear_fr_particle_flow} as follows:
    \begin{equation} \label{eq: fr_closed_form}
    \begin{split}
        \tilde{A}_t &= \frac{1}{2} \Sigma_t \left( -\Sigma^{-1}_p + \Sigma_t^{-1} \right) = - \frac{1}{2} (1 - \lambda(t)) \Sigma_t H^{\top} R^{-1} H, \\
        \tilde{\bfb}_t &= \Sigma_t \Sigma^{-1}_p \left( \bfmu_p - \bfmu_t \right) - \tilde{A}_t \bfmu_t = - \Sigma_t \Sigma^{-1}_p \bfmu_t + \bfmu_t + (1 - \lambda(t)) \Sigma_t H^{\top} R^{-1} \bfz - \tilde{A}_t \bfmu_t  \\
        &= \tilde{A}_t \bfmu_t + (1 - \lambda(t)) \Sigma_t H^{\top} R^{-1} \bfz.
    \end{split}
    \end{equation}
    Next, we rewrite the term $\bfb_{\lambda}$ in \eqref{prelim: exact_flow} such that it shares a similar structure to the term $\tilde{\bfb}_t$ in \eqref{eq: fr_closed_form}. First, observe that
    \begin{equation}\label{eq: identity_1}
        \lambda (R + \lambda HPH^{\top})^{-1} HPH^{\top} = I - (R + \lambda HPH^{\top})^{-1} R.
    \end{equation}
    Using this equation, we deduce that
	\begin{equation}
    \label{eq: identity_2}
    \begin{split}
        (I + 2 \lambda A_{\lambda})P H^{\top} R^{-1} &= P H^{\top} R^{-1} - \lambda PH^{\top} (R + \lambda HPH^{\top})^{-1} H P H^{\top} R^{-1} \\
        &= P H^{\top} R^{-1} - PH^{\top} (I - (R + \lambda HPH^{\top})^{-1} R) R^{-1} \\
        &= PH^{\top} (R + \lambda HPH^{\top})^{-1} ,
    \end{split}
	\end{equation} 
    which in turn implies that
    \begin{equation} \label{eq: identity_3}
        \lambda A_{\lambda} P H^{\top} R^{-1} = \frac{1}{2} \left( PH^{\top} (R + \lambda HPH^{\top})^{-1} - P H^{\top} R^{-1} \right).
    \end{equation}
    Now, we employ \eqref{eq: identity_1} and the definitions of $\bfmu_{\lambda}$ and $\Sigma_{\lambda}$
    in \eqref{prelim: transient_para}, to write 
    \begin{equation}\label{eq: mu_lambda}
        \bfmu_{\lambda} = \hat{\bfx} + \lambda PH^{\top} (R + \lambda HPH^{\top})^{-1} (\bfz - H \hat{\bfx}). 
    \end{equation}
    Using this equation and the definition of $A_\lambda$, we can express
    \begin{equation} \label{eq: am_simplify_1}
    \begin{split}
        A_{\lambda} \bfmu_{\lambda} &= (I + 2 \lambda A_{\lambda}) A_{\lambda}\hat{\bfx} + \lambda A_{\lambda} P H^{\top} (R + \lambda HPH^{\top})^{-1} \bfz \\
        &= (I + 2 \lambda A_{\lambda}) A_{\lambda}\hat{\bfx} - \frac{1}{2} \underbrace{PH^{\top} (R + \lambda HPH^{\top})^{-1}}_{\eqref{eq: identity_2}} \underbrace{\lambda H P H^{\top} (R + \lambda HPH^{\top})^{-1}}_{\eqref{eq: identity_1}} \bfz \\
        &= (I + 2 \lambda A_{\lambda}) \Big( A_{\lambda}\hat{\bfx} - \frac{1}{2} (P H^{\top} R^{-1} - P H^{\top} (R + \lambda HPH^{\top})^{-1}) \bfz \Big).
    \end{split}
    \end{equation}
    Using this expression, the difference between $\bfb_{\lambda}$ and $A_{\lambda} \bfmu_{\lambda}$ can be expressed as follows:
	\begin{align*}
    \bfb_{\lambda} - A_{\lambda} \bfmu_{\lambda} & =
        (I + 2 \lambda A_{\lambda})(A_{\lambda} \hat{\bfx} + (I + \lambda A_{\lambda}) P H ^{\top} R^{-1} \bfz) - A_{\lambda} \bfmu_{\lambda} \\
        &= (I + 2 \lambda A_{\lambda}) \Big( \tfrac{3}{2} P H ^{\top} R^{-1} + \underbrace{\lambda A_{\lambda} P H ^{\top} R^{-1}}_{\eqref{eq: identity_3}} - \tfrac{1}{2} P H^{\top} (R + \lambda HPH^{\top})^{-1} \Big) \bfz \\
        &= (I + 2 \lambda A_{\lambda}) P H ^{\top} R^{-1} \bfz = PH^{\top} (R + \lambda HPH^{\top})^{-1} \bfz, 
	\end{align*} 
    where the last equality is obtained using \eqref{eq: identity_2}. As a result, we can rewrite the $\bfb_{\lambda}$ term in \eqref{prelim: exact_flow} as follows:
	\begin{equation}
        \bfb_{\lambda} = PH^{\top} (R + \lambda HPH^{\top})^{-1} \bfz + A_{\lambda} \bfmu_{\lambda}.
	\end{equation} 
    Replacing $\lambda$ with $\lambda(t)$ in the definition of $\bfmu_{\lambda}$ and $\Sigma_{\lambda}$ in \eqref{prelim: transient_para} and utilizing the Woodbury formula \citep{petersen2008matrix}, we have:
    \begin{equation}
    \begin{split}
        &\Sigma_{\lambda(t)} = P - \lambda(t) P H^{\top} (R + \lambda(t) HPH^{\top})^{-1} HP = \big( P^{-1} + \lambda(t) H^{\top}R^{-1}H \big)^{-1} = \Sigma_t
        \\
        &\bfmu_{\lambda(t)} = \Sigma_{\lambda(t)} (P^{-1} \hat{\bfx} + \lambda(t) H^{\top} R^{-1} \bfz) = \bfmu_{t}.
    \end{split}
    \end{equation}
    Using the expression~\eqref{gaussian_approx: natural_para_sol} of $\Sigma^{-1}_{t}$, we can obtain the following identity using \eqref{eq: identity_1}:
    \begin{equation}\label{eq: identity_4}
    \begin{split}
        &\Sigma_{t} H^{\top} R^{-1} = \big( P^{-1} + \lambda(t) H^{\top} R^{-1} H \big)^{-1} H^{\top} R^{-1} \\
        &= P H^{\top} R^{-1} - P H^{\top} \underbrace{\lambda(t) (R + \lambda(t) HPH^{\top})^{-1} H P H^{\top}}_{\eqref{eq: identity_1}} R^{-1} \\
        &= P H^{\top} (R + \lambda(t) HPH^{\top})^{-1}. 
    \end{split}
    \end{equation} 
    Finally, by applying \eqref{eq: identity_4}, we have:
    \begin{equation}
        A_{\lambda(t)} = - \frac{1}{2} \Sigma_t H^{\top} R^{-1} H = \frac{1}{1 - \lambda(t)} \tilde{A}_t, \quad \bfb_{\lambda(t)} = A_{\lambda(t)} \bfmu_t + \Sigma_t H^{\top} R^{-1} \bfz = \frac{1}{1 - \lambda(t)} \tilde{\bfb}_t.
    \end{equation}
\end{proof}

The proof of Theorem~\ref{theorem: linear_gaussian_connection} reveals the fact that by rewriting \eqref{gaussian_approx: linear_fr_particle_flow} using the closed-form expressions for $\bfmu_t$ and $\Sigma_t$ in \eqref{gaussian_approx: natural_para_sol}, the particle dynamics function determined by \eqref{gaussian_approx: linear_fr_particle_flow} shares the same expression as \eqref{prelim: exact_flow} up to an appropriate time scaling coefficient. Algorithm~\ref{alg: gaussian_fr} summarizes the key steps for the proposed Gaussian Fisher-Rao particle flow.

\begin{algorithm}[t]
\caption{Gaussian Fisher-Rao Particle Flow}
\label{alg: gaussian_fr}
\begin{algorithmic}[1]
\small
\Require Parameters $\bfalpha_0$ defined in \eqref{gaussian_approx: param_space} specifying the initial variational density $q(\bfx; \bfalpha_0)$, particles $\{[\bfx_j]_{t=0}\}_{j=1}^{M}$ sampled from the initial variational density, and joint density $p(\bfx, \bfz)$
\Output Particles $\{[\bfx_j]_{t=T}\}_{j=1}^{M}$ that approximate the posterior density $p(\bfx | \bfz)$

\Function{$\bff$}{$\{[\bfx_j]_t\}_{j=1}^{M}$, $\bfalpha_t$, $t$}
    \State $\delta \bfalpha_t \gets$ Evaluate \eqref{gaussian_approx: approx_fr_para_flow} with variational density parameters $\bfalpha_t$
    \State $\tilde{A}_t$, $\tilde{\bfb}_t \gets$ Evaluate \eqref{gaussian_approx: approx_particle_flow} with $\bfalpha_t$ and $t$
    \For{each particle \text{$[\bfx_j]_t$}}
    \State $\tilde{\bfphi}([\bfx_j]_t, t) \gets \tilde{A}_t [\bfx_j]_t + \tilde{\bfb}_t$
    \EndFor
\State \Return $\{\tilde{\bfphi}([\bfx_j]_t, t)\}_{j=1}^{M}$, $\delta \bfalpha_t$
\EndFunction

\While{ODE solver running}
    \State $\{[\bfx_j]_{t=T}\}_{j=1}^{M}, \bfalpha_T \gets$ SolveODE($\bff(\{[\bfx_j]_t\}_{j=1}^{M}, \bfalpha_t, t)$) with initial particles $\{[\bfx_j]_{t=0}\}_{j=1}^{M}$, initial variational density parameters $\bfalpha_0$, initial time $t=0$, and termination time $T$
\EndWhile

\State \Return $\{[\bfx_j]_{t=T}\}_{j=1}^{M}$
\end{algorithmic}
\end{algorithm}

Simulation results comparing the Gaussian Fisher-Rao particle flow and the EDH flow are presented in Figure~\ref{fig: linear_gaussian}. We use the following parameters in the simulation:
\begin{equation}
    \hat{\bfx} = \begin{bmatrix} 0 \\ 0 \end{bmatrix}\!, \quad P = \begin{bmatrix} 1.5 & 0.5 \\ 0.5 & 5.5 \end{bmatrix}\!, \quad H = \begin{bmatrix} 1 & 1.5 \\ 0.2 & 2 \end{bmatrix}\!, \quad R = \begin{bmatrix} 0.2 & 0.1 \\ 0.1 & 0.2 \end{bmatrix}\!, \quad \bfx^* = \begin{bmatrix} -1.18 \\ 4.12 \end{bmatrix}, 
\end{equation}
where $\bfx^*$ denotes the true state.

A single Gaussian approximation to the posterior is often insufficient due to its single-modal nature, especially when the observation model is nonlinear (which potentially results in a multi-modal posterior). This motivates our discussion in the next section, where the variational density follows a Gaussian mixture density.

\begin{figure}[t]
    \centering
    \includegraphics[width=0.45\linewidth,valign=t]{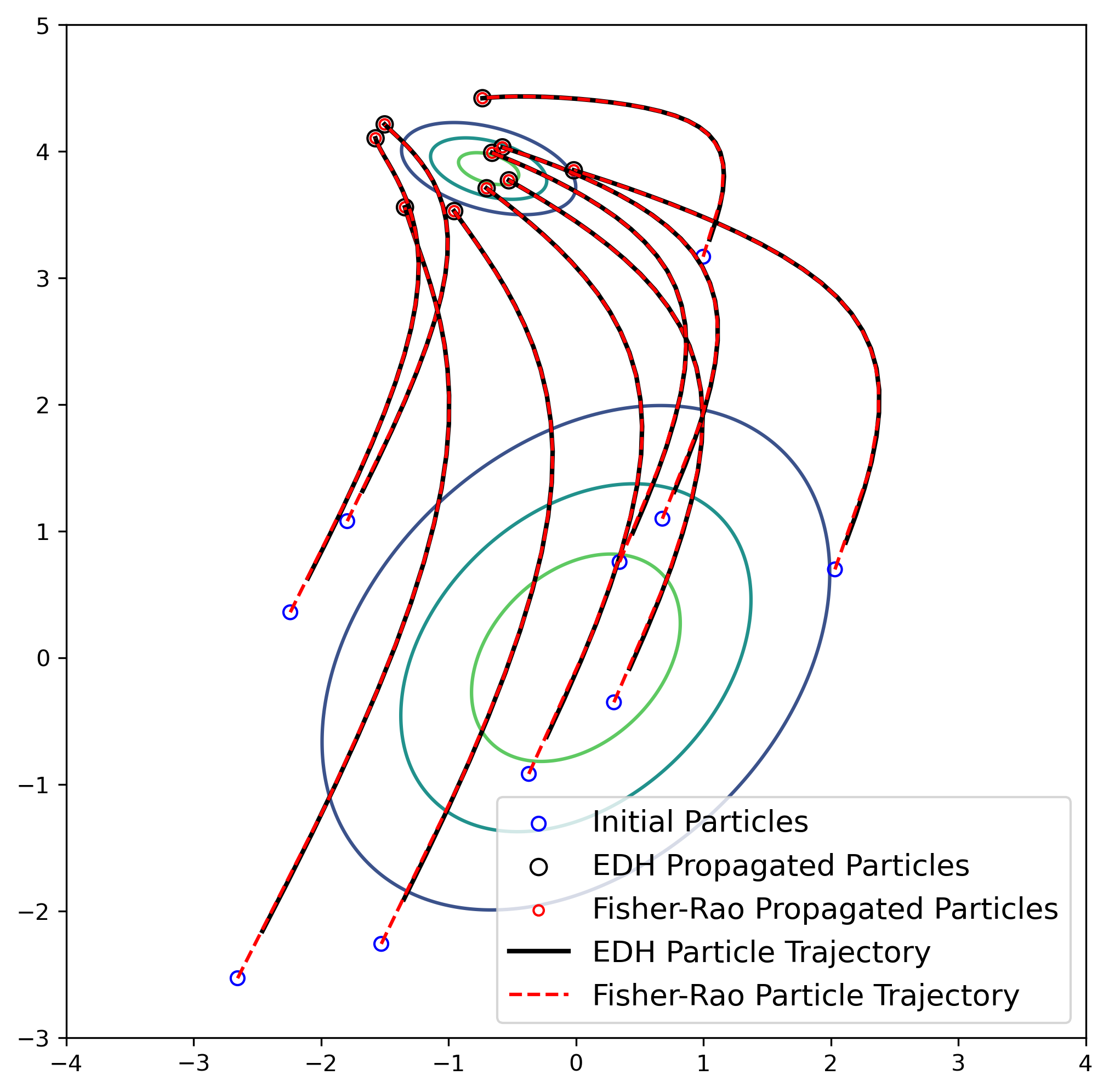}
    \caption{Comparison of the EDH flow and the Gaussian Fisher-Rao flow under linear Gaussian assumptions. We propagate 10 randomly selected particles through both flows. The trajectories of the particles are identical, verifying the results stated in Theorem~\ref{theorem: linear_gaussian_connection}.}
    \label{fig: linear_gaussian}
\end{figure}

\section{Approximated Gaussian Mixture Fisher-Rao Flows}
\label{sec: gaussian_mixture_approx}
This section focuses on the case where the variational density is selected as a Gaussian mixture density. Computing the Fisher information matrix associated with a Gaussian mixture density is costly, however. For better efficiency, we use the diagonal approximation of the FIM proposed in \cite{lin2019fast} and establish a connection between the approximated Gaussian mixture Fisher-Rao gradient flow and the particle dynamics in the particle flow. A Gaussian mixture density with $K$ components can be expressed as follows:
\begin{equation}
\label{gaussian_mixture_approx: gaussian_mixtuire_model}
    q(\bfx) = \sum_{k=1}^K q(\bfx | \omega = k) q(\omega = k), 
\end{equation}
where $q(\bfx | \omega = k) = p_{\calN}(\bfx; \bfmu^{(k)}, \Sigma^{(k)})$ and $q(\omega = k) = \pi^{(k)}$ is a multinomial PDF such that $\sum_{k=1}^{K} \pi^{(k)} = 1$. As noted by \cite{lin2019fast}, employing a natural parameterization of the Gaussian mixture density and an approximated FIM simplifies the derivation of an approximation of the Fisher-Rao parameter flow \eqref{fisher_rao: fr_para_flow}. Furthermore, the natural parameterization allows the component weight parameters to be expressed in real-valued log-odds, which eliminates the need for re-normalization to ensure $\sum_{k=1}^{K} \pi^{(k)} = 1$. Let $\bfeta$ denote the natural parameters of the Gaussian mixture density:
\begin{equation}
\label{gaussian_mixture_approx: natural_parameters}
    \eta^{(k)}_{\omega} = \log \bigg( \frac{\pi^{(k)}}{\pi^{(K)}} \bigg), \;
    \bfeta^{(k)}_x = \big( \bfgamma^{(k)}, \; \Gamma^{(k)} \big), \;  \bfgamma^{(k)} = [\Sigma^{(k)}]^{-1} \bfmu^{(k)}, \Gamma^{(k)} = -\frac{1}{2} [\Sigma^{(k)}]^{-1}, 
\end{equation}
where $\bfeta^{(k)}_x$ denotes the natural parameters of the $k$th Gaussian mixture component and $\eta^{(k)}_{\omega}$ denotes the natural parameter of the $k$th component weight. Notice that we have $\eta^{(K)}_{\omega} = 0$ and thus we set $\pi^{(K)} = 1 - \sum_{k=1}^{K-1} \pi^{(k)}$ in order to ensure $\sum_{k=1}^{K} \pi^{(k)} = 1$. The component weights from their natural parameterization are recovered via:
\begin{equation}
    \pi^{(k)} = \frac{\exp(\eta^{(k)})}{\sum_{j=1}^{K} \exp(\eta^{(j)})}.
\end{equation}
However, the FIM $\calI(\bfeta)$ defined in \eqref{fisher_rao: fisher_rao_tensor} of the Gaussian mixture variational density $q(\bfx)$ is difficult to compute. We use the block-diagonal approximation $\tilde{\calI}(\bfeta)$ of the FIM proposed in \cite{lin2019fast}:
\begin{equation}
\label{gaussian_mixture_approx: approx_FIM}
    \tilde{\calI}(\bfeta) = \diag \left( \calI(\bfeta^{(1)}_x, \eta^{(1)}_{\omega}), ...,\calI(\bfeta^{(K)}_x, \eta^{(K)}_{\omega}) \right), 
\end{equation}
where $\calI(\bfeta^{(k)}_x, \eta^{(k)}_{\omega})$ is the FIM of the $k$th joint Gaussian mixture component $q(\bfx | \omega = k) q(\omega = k)$. Using the approximated FIM, we can define the approximated Gaussian mixture Fisher-Rao parameter flow as follows:
\begin{equation}
\label{gaussian_mixture_approx: fr_para_flow}
    \frac{\d \bfeta_t}{\d t} = - \tilde{\calI}^{-1}(\bfeta_t) \nabla _{\bfeta_t} D_{KL}(q(\bfx; \bfeta_t) \| p(\bfx | \bfz) ).
\end{equation}
To ease notation, define:
\begin{equation}
\label{gaussian_mixture_approx: kl_kernel}
    V(\bfx; \bfeta) = \log(q(\bfx; \bfeta)) - \log(p(\bfx, \bfz)), 
\end{equation}
where $q(\bfx; \bfeta)$ is the variational density and $p(\bfx, \bfz) = p(\bfz | \bfx) p(\bfx)$ is the joint density. Due to the block-diagonal structure of the approximated FIM, the approximated Gaussian mixture Fisher-Rao parameter flow can be computed for the individual components, which is justified in the following proposition.

\begin{proposition}[Approximated Gaussian Mixture Fisher-Rao Parameter Flow]
Consider the approximated Gaussian mixture Fisher-Rao parameter flow \eqref{gaussian_mixture_approx: fr_para_flow}. The component-wise approximated Gaussian mixture Fisher-Rao parameter flow for each Gaussian mixture component $q(\bfx | \omega = k) q(\omega = k)$ takes the following form:
    \begin{equation}
    \begin{split}
    \label{gaussian_mixture_approx: component_natural_para_flow}
        &\frac{\d \bfgamma^{(k)}_t}{\d t} = - \bbE_{q(\bfx | \omega = k; \bfeta_t)} \left[\frac{1}{2} \nabla^2_{\bfx}V(\bfx; \bfeta_t)[\Gamma^{(k)}_t]^{-1} \bfgamma^{(k)}_t + \nabla_{\bfx}V(\bfx; \bfeta_t) \right], \\
        &\frac{\d \Gamma^{(k)}_t}{\d t} = - \frac{1}{2} \bbE_{q(\bfx | \omega = k; \bfeta_t)} \left[ \nabla^2_{\bfx}V(\bfx; \bfeta_t)\right], \\
        &\frac{\d [\eta^{(k)}_{\omega}]_t}{\d t} = \bbE_{q(\bfx | \omega = K; \bfeta_t)}\left[ V(\bfx; \bfeta_t) \right] - \bbE_{q(\bfx | \omega = k; \bfeta_t)}\left[ V(\bfx; \bfeta_t) \right].
    \end{split}
    \end{equation}
\end{proposition}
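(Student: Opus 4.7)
The plan is to exploit the block-diagonal structure of the approximated FIM $\tilde{\calI}(\bfeta)$ in \eqref{gaussian_mixture_approx: approx_FIM} to decouple the parameter flow \eqref{gaussian_mixture_approx: fr_para_flow} across mixture components, and then reduce each component-wise natural gradient to the stated form by means of standard exponential-family identities.

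First, I would observe that because $\tilde{\calI}(\bfeta)$ is block diagonal with blocks $\calI(\bfeta^{(k)}_x,\eta^{(k)}_\omega)$, its inverse is likewise block diagonal. Hence \eqref{gaussian_mixture_approx: fr_para_flow} splits into $K$ independent ODEs, and for each $k$ it suffices to compute $\calI^{-1}(\bfeta^{(k)}_x,\eta^{(k)}_\omega)\nabla_{(\bfeta^{(k)}_x,\eta^{(k)}_\omega)}D_{KL}$. Writing $D_{KL}=\bbE_{q(\bfx;\bfeta)}[V(\bfx;\bfeta)]$ with $V$ as in \eqref{gaussian_mixture_approx: kl_kernel}, and using $\bbE_q[\nabla_\bfeta\log q]=0$, the contribution of the $\log q$ term inside $V$ to $\nabla_\bfeta D_{KL}$ vanishes, so I can treat $V$ as $\bfeta$-independent for the gradient computation.

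Second, for the joint Gaussian block with natural parameters $(\bfgamma^{(k)},\Gamma^{(k)})$, I would invoke the well-known exponential-family identity $\calI^{-1}(\bfeta^{(k)}_x)\nabla_{\bfeta^{(k)}_x}=\nabla_{\bftau^{(k)}}$, where $\bftau^{(k)}=(\bbE[\bfx],\bbE[\bfx\bfx^\top])$ are the expectation parameters. Applying Bonnet's and Price's theorems to $\bbE_{q(\bfx|\omega=k;\bfeta_t)}[V(\bfx;\bfeta_t)]$ then yields
\begin{equation}
\nabla_{\bbE[\bfx]}\bbE_{q(\bfx|\omega=k)}[V] = \bbE_{q(\bfx|\omega=k)}[\nabla_\bfx V],\qquad
\nabla_{\bbE[\bfx\bfx^\top]}\bbE_{q(\bfx|\omega=k)}[V] = \tfrac{1}{2}\bbE_{q(\bfx|\omega=k)}[\nabla^2_\bfx V].
\end{equation}
Weighting by $\pi^{(k)}_t$ (since $D_{KL}=\sum_j\pi^{(j)}\bbE_{q(\bfx|\omega=j)}[V]$) and then changing variables from $\bftau^{(k)}$ back into the parameterization in which $\bfmu^{(k)} = -\tfrac{1}{2}[\Gamma^{(k)}]^{-1}\bfgamma^{(k)}$, I would obtain the stated flows for $\bfgamma^{(k)}_t$ and $\Gamma^{(k)}_t$. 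The extra term $\tfrac{1}{2}\bbE[\nabla^2_\bfx V\,[\Gamma^{(k)}_t]^{-1}\bfgamma^{(k)}_t]$ in the $\bfgamma^{(k)}$ equation is precisely the cross term produced by the identity $\bbE[\bfx\bfx^\top]=\Sigma^{(k)}+\bfmu^{(k)}[\bfmu^{(k)}]^\top$ when translating the moment-parameter gradient into the natural parameter $\bfgamma^{(k)}$.

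Third, for the weight parameter $\eta^{(k)}_\omega$, I would use that $\{\eta^{(j)}_\omega\}_{j=1}^{K-1}$ are natural parameters of a multinomial (an exponential family in its own right). The corresponding block of $\calI$ is the Fisher information of the softmax, and the classical identity $\calI^{-1}\nabla_{\eta_\omega}=\nabla_{\pi}$ (expectation parameters), combined with $\partial\bbE_q[V]/\partial\pi^{(k)}=\bbE_{q(\bfx|\omega=k)}[V]-\bbE_{q(\bfx|\omega=K)}[V]$ (from $\pi^{(K)}=1-\sum_{k<K}\pi^{(k)}$), produces the stated $\eta^{(k)}_\omega$ equation after negation.

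The main obstacle will be the second step, where the change of variables between the expectation parameters $(\bbE[\bfx],\bbE[\bfx\bfx^\top])$ and the natural parameters $(\bfgamma^{(k)},\Gamma^{(k)})$ introduces mixing terms that must be tracked carefully; in particular, producing the exact coupling $\tfrac{1}{2}\nabla^2_\bfx V\,[\Gamma^{(k)}]^{-1}\bfgamma^{(k)}$ inside the expectation for $\d\bfgamma^{(k)}_t/\d t$ requires combining Bonnet's and Price's theorems with the chain rule in a bookkeeping-heavy way rather than a conceptually deep one.
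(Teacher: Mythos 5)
Your proposal is correct and follows essentially the same route as the paper: block-diagonal decomposition of the approximated FIM, the natural-gradient-to-expectation-parameter duality (the paper cites \citet[Theorem~3]{lin2019fast} for exactly the identity you invoke), Bonnet's and Price's theorems for the mean and covariance gradients, and the multinomial expectation-parameter argument for the weight equation. The only detail left implicit in your sketch — that the $\pi^{(k)}$ factor in the component gradients cancels against the $1/\pi^{(k)}$ arising from inverting the block $\pi^{(k)}\calI(\bfeta^{(k)}_x)$ of the approximated FIM — is handled explicitly in the paper but follows from the structure you already identified.
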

\begin{proof}
    According to the definition of the approximated FIM \eqref{gaussian_mixture_approx: approx_FIM}, the approximated Gaussian mixture Fisher-Rao parameter flow can be decomposed into $K$ components with each component taking the following form:
    \begin{equation}
        \frac{\d ([\bfeta^{(k)}_{x}]_t, [\eta^{(k)}_{\omega}]_t)}{\d t} = -\calI^{-1}([\bfeta^{(k)}_{x}]_t, [\eta^{(k)}_{\omega}]_t) \nabla _{([\bfeta^{(k)}_{x}]_t, [\eta^{(k)}_{\omega}]_t)} D_{KL}(q(\bfx; \bfeta_t) \| p(\bfx | \bfz) ). 
    \end{equation}
    According to \citet[Lemma~2]{lin2019fast}, the FIM $\calI(\bfeta^{(k)}_x, \eta^{(k)}_{\omega})$ of the $k$th joint Gaussian mixture component $q(\bfx | \omega = k) q(\omega = k)$ takes the following form:
    \begin{equation}
        \calI(\bfeta^{(k)}_x, \eta^{(k)}_{\omega}) = \diag(\pi^{(k)} \calI(\bfeta^{(k)}_{x}), \calI(\eta^{(k)}_{\omega})).
    \end{equation}
    As a result, we can further decompose \eqref{gaussian_mixture_approx: fr_para_flow} as follows: 
    \begin{equation}
    \label{gaussian_mixture_approx: component_flow_final}
    \begin{split}
        \frac{\d [\bfeta^{(k)}_{x}]_t}{\d t} = -\frac{1}{\pi^{(k)}} \calI^{-1}([\bfeta^{(k)}_{x}]_t) \nabla _{[\bfeta^{(k)}_{x}]_t} D_{KL}(q(\bfx; \bfeta_t) \| p(\bfx | \bfz) ), \\
        \frac{\d [\eta^{(k)}_{\omega}]_t}{\d t} = -\calI^{-1}([\eta^{(k)}_{\omega}]_t) \nabla _{[\eta^{(k)}_{\omega}]_t} D_{KL}(q(\bfx; \bfeta_t) \| p(\bfx | \bfz) ), 
    \end{split}
    \end{equation}
    where $\calI([\bfeta^{(k)}_{x}]_t)$ is the FIM of the $k$th Gaussian mixture component $q(\bfx | \omega = k)$ and $\calI([\eta^{(k)}_{\omega}]_t)$ is the FIM of the $k$th component weight $q(\omega = k)$. Also, according to \citet[Theorem~3]{lin2019fast}, the following equations hold:
    \begin{equation}
    \begin{split}
    \label{gaussian_mixture_approx: natural_gradient_relation}
        &\nabla_{[\bfm^{(k)}_x]_t} D_{KL}(q(\bfx; \bfeta_t) \| p(\bfx | \bfz) ) = \calI^{-1}([\bfeta^{(k)}_{x}]_t) \nabla _{[\bfeta^{(k)}_{x}]_t} D_{KL}(q(\bfx; \bfeta_t) \| p(\bfx | \bfz) ), \\
        &\nabla_{[m^{(k)}_{\omega}]_t} D_{KL}(q(\bfx; \bfeta_t) \| p(\bfx | \bfz) ) = \calI^{-1}([\eta^{(k)}_{\omega}]_t) \nabla _{[\eta^{(k)}_{\omega}]_t} D_{KL}(q(\bfx; \bfeta_t) \| p(\bfx | \bfz) ),
    \end{split}
    \end{equation}
    where $\bfm^{(k)}_x = \left( \bfmu^{(k)}, \bfmu^{(k)} [\bfmu^{(k)}]^{\top} + \Sigma^{(k)} \right)$ and $m^{(k)}_{\omega} = \pi^{(k)}$ denotes the expectation parameters of the $k$th Gaussian mixture component $q(\bfx | \omega = k)$ and the $k$th component weight $q(\omega = k)$, respectively. Using the chain rule, we can derive the following expression:
    \begin{equation}
    \label{gaussian_mixture_approx: component_derivative}
        \nabla_{\bfm^{(k)}_x} D_{KL} = \Big( \left( \nabla_{\bfmu^{(k)}} D_{KL} - 2 [\nabla_{\Sigma^{(k)}} D_{KL}] \bfmu^{(k)} \right), [\nabla_{\Sigma^{(k)}} D_{KL}] \Big).
    \end{equation}
    The gradient of the KL divergence with respect to $\bfmu^{(k)}$ and $\Sigma^{(k)}$ can be expressed in terms of the gradient and Hessian of $V(\bfx)$ defined in \eqref{gaussian_mixture_approx: kl_kernel} by using Bonnet's and Price's theorem, cf.~\cite{bonnet1964transformations, price1958useful, lin2019stein}:
    \begin{equation}
    \label{gaussian_mixture_approx: component_derivative_in_h}
        \nabla_{\bfmu^{(k)}} D_{KL} = \bbE_{q(\bfx | \omega = k)} \left[ \pi^{(k)}\nabla_{\bfx}V(\bfx)\right], \quad \nabla_{\Sigma^{(k)}} D_{KL} = \frac{1}{2} \bbE_{q(\bfx | \omega = k)} \left[ \pi^{(k)} \nabla^2_{\bfx}V(\bfx)\right]. 
    \end{equation}
    Substituting \eqref{gaussian_mixture_approx: component_derivative_in_h} into \eqref{gaussian_mixture_approx: component_derivative}, we have:
    \begin{equation}
    \label{gaussian_mixture_approx: expected_derivative_final}
        \nabla_{\bfm^{(k)}_x} D_{KL} = \Big(  \pi^{(k)} \bbE_{q(\bfx | \omega = k)} \left[ \nabla_{\bfx}V(\bfx) - \nabla^2_{\bfx}V(\bfx) \bfmu^{(k)} \right], \frac{\pi^{(k)}}{2} \bbE_{q(\bfx | \omega = k)} \left[\nabla^2_{\bfx}V(\bfx)\right] \Big).
    \end{equation}
    The gradient of the variational density with respect to the expectation parameter of the $k$th component weight takes the following form:
    \begin{equation}
        \nabla_{m^{(k)}_{\omega}} q(\bfx; \eta) = q(\bfx | \omega = k) - q(\bfx | \omega = K), 
    \end{equation}
    where the second term appears due to the fact $\pi^{(K)} = 1 - \sum_{k=1}^{K-1} \pi^{(k)}$. Utilizing the fact above, the gradient of the KL divergence with respect to the expectation parameter of the $k$th component weight takes the following form:
    \begin{equation}
    \begin{split}
    \label{gaussian_mixture_approx: weight_derivative_in_h}
        \nabla_{m^{(k)}_{\omega}} D_{KL} &= \int V(\bfx) \nabla_{m^{(k)}_{\omega}} q(\bfx; \eta) \d \bfx + \bbE_{q(\bfx; \eta)}\left[ \nabla_{m^{(k)}_{\omega}} \log(q(\bfx; \eta)) \right] \\
        &= \bbE_{q(\bfx | \omega = k)}\left[ V(\bfx) \right] - \bbE_{q(\bfx | \omega = K)}\left[ V(\bfx) \right] + \int q(\bfx | \omega = k) - q(\bfx | \omega = K)\d \bfx \\
        &= \bbE_{q(\bfx | \omega = k)}\left[ V(\bfx) \right] - \bbE_{q(\bfx | \omega = K)}\left[ V(\bfx) \right]. 
    \end{split}
    \end{equation}
    The desired results can be obtained by combining \eqref{gaussian_mixture_approx: expected_derivative_final}, \eqref{gaussian_mixture_approx: weight_derivative_in_h}, \eqref{gaussian_mixture_approx: natural_gradient_relation} and \eqref{gaussian_mixture_approx: component_flow_final}. 
\end{proof}

The approximated Gaussian mixture Fisher-Rao parameter flow \eqref{gaussian_mixture_approx: component_natural_para_flow} defines a time rate of change of the $k$th Gaussian mixture component. The corresponding particle flow for the $k$th Gaussian mixture component can be obtained by finding a particle dynamics function $\tilde{\phi}_k(\bfx, t)$ such that the following equation holds:
\begin{equation}
\label{gaussian_mixture_approx: component_fr_liouville}
    \frac{\partial q(\bfx | \omega = k; \bfeta_t)}{\partial \bfgamma^{(k)}_t} \frac{\d \bfgamma^{(k)}_t}{\d t} + \tr\bigg( \frac{\partial q(\bfx | \omega = k; \bfeta_t)}{\partial \Gamma^{(k)}_t} \frac{\d \Gamma^{(k)}_t}{\d t} \bigg) = - \nabla_{\bfx} \cdot \big( q(\bfx | \omega = k; \bfeta_t) \tilde{\phi}_k(\bfx, t) \big).
\end{equation}
The closed-form expression for this particle dynamics function $\tilde{\phi}_k(\bfx, t)$ is given in the following result.

\begin{proposition}[Approximated Gaussian Mixture Fisher-Rao Particle Flow]
    \label{prop:gm_fr_flow}
    The time rate of change of the $k$th Gaussian mixture component $q(\bfx | \omega=k)$ induced by the approximated Gaussian mixture Fisher-Rao parameter flow~\eqref{gaussian_mixture_approx: component_natural_para_flow} is captured by the  Liouville equation:
    \begin{equation}
    \label{gaussian_mixture_approx: natural_para_liouville}
    \begin{split}
        \frac{\d q(\bfx | \omega=k; \bfeta_t)}{\d t} = - \nabla_{\bfx} \cdot \big( q(\bfx | \omega=k; \bfeta_t) \tilde{\phi}_k(\bfx, t) \big), 
    \end{split}
    \end{equation} 
    with particle dynamics function $\tilde{\phi}_k(\bfx, t) = \tilde{A}^{(k)}_t \bfx + \tilde{\bfb}^{(k)}_t$,
    where 
    \begin{equation}\label{gaussian_mixture_approx: component_particle_flow}
    \begin{split}
        \tilde{A}^{(k)}_t &= \frac{1}{4} [\Gamma^{(k)}_t]^{-1} \bbE_{q(\bfx | \omega = k; t)} \left[\nabla^2_{\bfx}V(\bfx; \bfeta_t)\right], \\
        \tilde{\bfb}^{(k)}_t &= \frac{1}{2} [\Gamma^{(k)}_t]^{-1} \bbE_{q(\bfx | \omega = k; t)} \left[\nabla_{\bfx}V(\bfx; \bfeta_t)\right] + \frac{1}{2} \tilde{A}^{(k)}_t [\Gamma^{(k)}_t]^{-1} \bfgamma^{(k)}_t,
    \end{split}
    \end{equation}
    with $V(\bfx; \bfeta_t)$ given in \eqref{gaussian_mixture_approx: kl_kernel}, $\Gamma^{(k)}$ and $\bfgamma^{(k)}$ are natural parameters of the $k$th Gaussian mixture components given in \eqref{gaussian_mixture_approx: natural_parameters}.
\end{proposition}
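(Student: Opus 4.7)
The plan is to recognize that the proposition is essentially Lemma~\ref{lemma: natural_particle_flow} applied component-wise, after translating between the natural parameterization $(\bfgamma^{(k)}, \Gamma^{(k)})$ used here and the mean-covariance parameterization $(\bfmu^{(k)}, \Sigma^{(k)})$ used earlier. I would first record the change-of-variable identities $\Sigma^{(k)} = -\tfrac{1}{2}[\Gamma^{(k)}]^{-1}$ and $\bfmu^{(k)} = -\tfrac{1}{2}[\Gamma^{(k)}]^{-1}\bfgamma^{(k)}$, which immediately yield $[\Sigma^{(k)}]^{-1}=-2\Gamma^{(k)}$ and $[\Gamma^{(k)}]^{-1}\bfgamma^{(k)} = -2\bfmu^{(k)}$. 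These will be the only algebraic facts doing real work.

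Next I would convert the component-wise natural-parameter flow \eqref{gaussian_mixture_approx: component_natural_para_flow} into a flow for $(\bfmu^{(k)}_t,\Sigma^{(k)}_t)$. Differentiating $[\Sigma^{(k)}_t]^{-1} = -2\Gamma^{(k)}_t$ directly gives $\tfrac{d}{dt}[\Sigma^{(k)}_t]^{-1} = \bbE_{q(\bfx|\omega=k;\bfeta_t)}[\nabla_\bfx^2 V(\bfx;\bfeta_t)]$. Substituting $[\Gamma^{(k)}_t]^{-1}\bfgamma^{(k)}_t = -2\bfmu^{(k)}_t$ into the $\bfgamma^{(k)}_t$ equation turns it into $\tfrac{d}{dt}\bfgamma^{(k)}_t = \bbE[\nabla_\bfx^2 V]\bfmu^{(k)}_t - \bbE[\nabla_\bfx V]$, and a short chain-rule computation with $\bfmu^{(k)}_t = \Sigma^{(k)}_t\bfgamma^{(k)}_t$ (noting $\tfrac{d\Sigma^{(k)}_t}{dt} = -\Sigma^{(k)}_t\tfrac{d[\Sigma^{(k)}_t]^{-1}}{dt}\Sigma^{(k)}_t$) collapses the two cross terms and leaves $\tfrac{d\bfmu^{(k)}_t}{dt} = -\Sigma^{(k)}_t\bbE[\nabla_\bfx V]$. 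These are precisely the equations \eqref{gaussian_approx: approx_fr_para_flow} that Lemma~\ref{lemma: natural_particle_flow} starts from, applied to the component density $q(\bfx|\omega=k;\bfeta_t)$.

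Applying Lemma~\ref{lemma: natural_particle_flow} to each Gaussian component then directly yields the Liouville identity \eqref{gaussian_mixture_approx: natural_para_liouville} with an affine particle dynamics function $\tilde{\phi}_k(\bfx,t) = \hat{A}^{(k)}_t\bfx + \hat{\bfb}^{(k)}_t$, where $\hat{A}^{(k)}_t = -\tfrac{1}{2}\Sigma^{(k)}_t\bbE[\nabla_\bfx^2 V]$ and $\hat{\bfb}^{(k)}_t = -\Sigma^{(k)}_t\bbE[\nabla_\bfx V] - \hat{A}^{(k)}_t\bfmu^{(k)}_t$. The last step is to rewrite $\hat{A}^{(k)}_t$ and $\hat{\bfb}^{(k)}_t$ in natural parameters: using $\Sigma^{(k)}_t = -\tfrac{1}{2}[\Gamma^{(k)}_t]^{-1}$ gives $\hat{A}^{(k)}_t = \tfrac{1}{4}[\Gamma^{(k)}_t]^{-1}\bbE[\nabla_\bfx^2 V] = \tilde{A}^{(k)}_t$, and similarly expanding $-\Sigma^{(k)}_t\bbE[\nabla_\bfx V] - \hat{A}^{(k)}_t\bfmu^{(k)}_t$ with $\bfmu^{(k)}_t = -\tfrac{1}{2}[\Gamma^{(k)}_t]^{-1}\bfgamma^{(k)}_t$ produces exactly $\tilde{\bfb}^{(k)}_t$ as in \eqref{gaussian_mixture_approx: component_particle_flow}.

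The only real obstacle is the bookkeeping in verifying the $\bfmu^{(k)}_t$-evolution cancellation: two terms involving $\bbE[\nabla_\bfx^2 V]\bfmu^{(k)}_t$ arise and must be checked to cancel exactly, which relies on the correct sign conventions in the natural parameterization \eqref{gaussian_mixture_approx: natural_parameters}. Once this verification is clear, the rest is a mechanical reduction to Lemma~\ref{lemma: natural_particle_flow}, and no further appeal to the Liouville equation or to exponential-family derivatives of $q(\bfx|\omega=k;\bfeta_t)$ is needed.
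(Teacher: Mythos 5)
Your proof is correct: the change-of-variable identities $\Sigma^{(k)}=-\tfrac{1}{2}[\Gamma^{(k)}]^{-1}$ and $[\Gamma^{(k)}]^{-1}\bfgamma^{(k)}=-2\bfmu^{(k)}$ are right, the cancellation of the two $\bbE[\nabla^2_\bfx V]\bfmu^{(k)}_t$ cross terms in the $\bfmu^{(k)}_t$-evolution works out exactly as you describe, and the back-translation of $\hat A^{(k)}_t,\hat\bfb^{(k)}_t$ reproduces \eqref{gaussian_mixture_approx: component_particle_flow}. The route is slightly different from the paper's: the paper stays in the natural parameterization throughout, computing the derivatives of the Gaussian PDF with respect to $(\bfgamma^{(k)},\Gamma^{(k)})$ and then repeating the divergence-matching argument of Lemma~\ref{lemma: natural_particle_flow} in those coordinates, whereas you convert the flow \eqref{gaussian_mixture_approx: component_natural_para_flow} to mean--inverse-covariance form, invoke Lemma~\ref{lemma: natural_particle_flow} as a black box, and convert back. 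Your reduction is legitimate because the time rate of change of $q(\bfx\,|\,\omega=k;\bfeta_t)$ is parameterization-independent by the chain rule when the parameter trajectories correspond under the change of variables; it is worth stating this link explicitly, since it is the one step that makes the black-box appeal to Lemma~\ref{lemma: natural_particle_flow} valid. Your approach buys a shorter argument that avoids recomputing exponential-family derivatives, and it effectively proves the paper's subsequent Remark (the conversion to \eqref{gaussian_mixture_approx: unified_mixture_gaussian_fr_para_flow}) along the way; the paper's version has the minor advantage of keeping all quantities in the coordinates in which the proposition is stated.
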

\begin{proof}
    The gradient of a Gaussian PDF $p_{\calN}(\bfx; \bfmu, \Sigma)$ with respect to its natural parameters \eqref{gaussian_mixture_approx: natural_parameters} is given by:
    \begin{equation}
    \begin{split}
        \frac{\partial p_{\calN}(\bfx; \bfmu, \Sigma)}{\partial \bfgamma} &= p_{\calN}(\bfx; \bfmu, \Sigma) \big( \bfx^{\top} + \frac{1}{2} \bfgamma^{\top} \Gamma^{-1} \big), \\
        \frac{\partial p_{\calN}(\bfx; \bfmu, \Sigma)}{\partial \Gamma} &= p_{\calN}(\bfx; \bfmu, \Sigma) \big( \bfx \bfx^{\top} - \frac{1}{4} \Gamma^{-1} \bfgamma \bfgamma^{\top} \Gamma^{-1} + \frac{1}{2} \Gamma^{-1} \big). 
    \end{split}
    \end{equation}
    The time rate of change of the $k$th Gaussian mixture component $q(\bfx | \omega=k)$ induced by the approximated Gaussian mixture Fisher-Rao parameter flow~\eqref{gaussian_mixture_approx: component_natural_para_flow} is given by:
    \begin{equation}
        \frac{\partial q(\bfx | \omega = k; \bfeta_t)}{\partial \bfgamma^{(k)}_t} \frac{\d \bfgamma^{(k)}_t}{\d t} + \tr\bigg( \frac{\partial q(\bfx | \omega = k; \bfeta_t)}{\partial \Gamma^{(k)}_t} \frac{\d \Gamma^{(k)}_t}{\d t} \bigg), 
    \end{equation}
    where $\frac{\d \bfgamma^{(k)}_t}{\d t}$ and $\frac{\d \Gamma^{(k)}_t}{\d t}$ are given in \eqref{gaussian_mixture_approx: component_natural_para_flow}. With these expressions, the proof now proceeds analogously to the proof of Lemma~\ref{lemma: natural_particle_flow}.
\end{proof}
\begin{remark}
    Using the definition~\eqref{gaussian_mixture_approx: natural_parameters} of the natural parameters and the chain rule, one can express the approximated Gaussian mixture Fisher-Rao parameter flow in conventional parameters as follows:
    \begin{equation}
    \label{gaussian_mixture_approx: unified_mixture_gaussian_fr_para_flow}
        \frac{\d \bfmu^{(k)}_{t}}{\d t} = -\Sigma^{(k)}_t \bbE_{q(\bfx|\omega = k; \bfeta_t)}\left[\nabla_{\bfx}V(\bfx)\right], \quad \frac{\d (\Sigma^{(k)}_{t})^{-1}}{\d t} = \bbE_{q(\bfx|\omega = k; \bfeta_t)}\left[\nabla^2_{\bfx}V(\bfx)\right], 
    \end{equation}
    as well as the particle dynamics function $\tilde{\phi}_k(\bfx, t)$ governing the approximated Gaussian mixture Fisher-Rao particle flow
    \begin{equation}\label{gaussian_mixture_approx: unified_mixture_gaussian_fr_particle_flow}
    \begin{split}
        \tilde{A}^{(k)}_t &= -\frac{1}{2} \Sigma^{(k)}_{t} \bbE_{q(\bfx|\omega = k; \bfeta_t)} \left[\nabla^2_{\bfx}V(\bfx)\right],
        \\
        \tilde{\bfb}^{(k)}_t &= \Sigma^{(k)}_{t} \bbE_{q(\bfx|\omega = k; \bfeta_t)}\left[\nabla_{\bfx}V(\bfx)\right] - \tilde{A}^{(k)}_t \bfmu^{(k)}_{t}.
    \end{split}
    \end{equation}
    
\end{remark}

\begin{algorithm}[t]
\caption{Approximated Gaussian Mixture Fisher-Rao Particle Flow}
\label{alg: gaussian_mixture_fr}
\begin{algorithmic}[1]
\small
\Require Parameters $\bfeta_0$ defined in \eqref{gaussian_mixture_approx: natural_parameters} specifying the initial variational density $q(\bfx; \bfeta_0)$, particles $\{\{[\bfx^{(k)}_j]_{t=0}\}_{j=1}^{M}\}_{k=1}^{K}$ sampled from the initial variational density, and joint density $p(\bfx, \bfz)$
\Output Particles $\{\{[\bfx^{(k)}_j]_{t=T}\}_{j=1}^{M}\}_{k=1}^{K}$ that approximate the posterior density $p(\bfx | \bfz)$

\Function{$\bff$}{$\{\{[\bfx^{(k)}_j]_{t=T}\}_{j=1}^{M}\}_{k=1}^{K}$, $\bfeta_t$, $t$}
    \For{each $k \in [1, K]$}
    \State $\delta \bfeta^{(k)}_t \gets$ Evaluate \eqref{gaussian_mixture_approx: component_natural_para_flow} with variational density parameters $\bfeta_t$
    \State $\tilde{A}^{(k)}_t$, $\tilde{\bfb}^{(k)}_t \gets$ Evaluate \eqref{gaussian_mixture_approx: component_particle_flow} with $\bfeta_t$ and $t$
    \For{each particle \text{$[\bfx^{(k)}_j]_t$}}
    \State $\tilde{\bfphi}([\bfx^{(k)}_j]_t, t) \gets \tilde{A}^{(k)}_t [\bfx^{(k)}_j]_t + \tilde{\bfb}^{(k)}_t$
    \EndFor
    \EndFor
\State \Return $\{\{\tilde{\bfphi}([\bfx^{(k)}_j]_t, t)\}_{j=1}^{M}\}_{k=1}^{K}$, $\delta \bfeta_t$
\EndFunction

\While{ODE solver running}
    \State $\{\{[\bfx^{(k)}_j]_{t=T}\}_{j=1}^{M}\}_{k=1}^{K}, \bfeta_T \gets$ SolveODE($\bff(\{\{[\bfx^{(k)}_j]_t\}_{j=1}^{M}\}_{k=1}^{K}, \bfeta_t, t)$) with initial particles $\{\{[\bfx^{(k)}_j]_{t=0}\}_{j=1}^{M}\}_{k=1}^{K}$, initial variational density parameters $\bfeta_0$, initial time $t=0$, and termination time $T$
\EndWhile

\State \Return $\{\{[\bfx^{(k)}_j]_{t=T}\}_{j=1}^{M}\}_{k=1}^{K}$
\end{algorithmic}
\end{algorithm}

Algorithm~\ref{alg: gaussian_mixture_fr} summarizes the key steps for the proposed approximated Gaussian mixture Fisher-Rao particle flow. The approximated Gaussian mixture Fisher-Rao particle flow derived in this section can capture multi-modal behavior in the posterior density. We find that the approximated Gaussian mixture Fisher-Rao flows in \eqref{gaussian_mixture_approx: unified_mixture_gaussian_fr_para_flow} and \eqref{gaussian_mixture_approx: unified_mixture_gaussian_fr_particle_flow} share a similar form as the Gaussian Fisher-Rao flows, cf. \eqref{gaussian_approx: approx_fr_para_flow} and \eqref{gaussian_approx: approx_particle_flow}, which indicates that the approximated Gaussian mixture Fisher-Rao particle flow can be interpreted as a weighted mixture of Gaussian Fisher-Rao particle flows. Note that all these flows require the evaluation of the expectation of the gradient and Hessian of the function $V(\bfx; \bfeta)$ defined in \eqref{gaussian_mixture_approx: kl_kernel}, which in the case of Gaussian mixtures is not readily available. In the following section, we show that one can utilize the particle flow method to obtain these expectations in an efficient way. 
\section{Derivative- and Inverse-Free Formulation of the Fisher-Rao Flows}
\label{sec: derivative_inverse_free}

In this section, we derive several identities associated with the particle flow method and show how they can make the computation of the flow parameters more efficient. To start, we notice the component-wise approximated Gaussian mixture Fisher-Rao parameter flow \eqref{gaussian_mixture_approx: unified_mixture_gaussian_fr_para_flow} takes the same form as the Gaussian Fisher-Rao parameter flow \eqref{gaussian_approx: approx_fr_para_flow}. To make the discussion clear, we consider the parameter flow of each Gaussian mixture component:
\begin{equation}
\label{dif: unified_parameter_flow}
    \frac{\d{} \bar{\bfmu}_{t}}{\d t} = -\bar{\Sigma}_t \bbE_{\bfx \sim \calN(\bar{\bfmu}_t, \bar{\Sigma}_t)}\left[\nabla_{\bfx}V(\bfx)\right], \quad \frac{\d{} \bar{\Sigma}^{-1}_{t}}{\d t} = \bbE_{\bfx \sim \calN(\bar{\bfmu}_t, \bar{\Sigma}_t)}\left[\nabla^2_{\bfx}V(\bfx)\right], 
\end{equation}
where $V(\bfx)$ is defined in \eqref{fisher_rao: kl_kernel}. Similarly, we consider the particle dynamics function governing the particle flow:
\begin{equation}
\label{dif: unified_particle_flow}
    \bar{\phi}(\bfx, t) = \bar{A}_t \bfx + \bar{\bfb}_t,
\end{equation}
where $\bar{A}_t = -\frac{1}{2} \bar{\Sigma}_t \bbE_{\bfx \sim \calN(\bar{\bfmu}_t, \bar{\Sigma}_t)}\left[\nabla^2_{\bfx}V(\bfx)\right]$, $\bar{\bfb}_t = -\bar{\Sigma}_t \bbE_{\bfx \sim \calN(\bar{\bfmu}_t, \bar{\Sigma}_t)} \left[\nabla_{\bfx}V(\bfx)\right] - \bar{A}_t \bar{\bfmu}_t$. We first find a derivative-free expression for the expectation terms in~\eqref{dif: unified_parameter_flow} and~\eqref{dif: unified_particle_flow}, and then compare several particle-based approximation methods to compute the resulting derivative-free expectation terms. 

To avoid computing derivatives of $V(\bfx)$, we apply Stein's lemma \citep{stein1981estimation}, yielding the following expressions:
\begin{equation}
\label{dif: derivative_free}
\begin{split}
    \bbE_{\bfx \sim \calN(\bar{\bfmu}_t, \bar{\Sigma}_t)}\left[\nabla_{\bfx}V(\bfx)\right] &= \bar{\Sigma}^{-1}_t \bbE_{\bfx \sim \calN(\bar{\bfmu}_t, \bar{\Sigma}_t)}\left[ (\bfx - \bar{\bfmu}_t) V(\bfx) \right], \\
    \bbE_{\bfx \sim \calN(\bar{\bfmu}_t, \bar{\Sigma}_t)}\left[\nabla^2_{\bfx}V(\bfx)\right] &= \bar{\Sigma}^{-1}_t \bbE_{\bfx \sim \calN(\bar{\bfmu}_t, \bar{\Sigma}_t)}\left[ (\bfx - \bar{\bfmu}_t)(\bfx - \bar{\bfmu}_t)^{\top} V(\bfx) \right] \bar{\Sigma}^{-1}_t \\
    & \quad - \bar{\Sigma}^{-1}_t \bbE_{\bfx \sim \calN(\bar{\bfmu}_t, \bar{\Sigma}_t)}\left[ V(\bfx) \right].
\end{split}
\end{equation}
Using the identities above, we obtain a derivative-free parameter flow:
\begin{equation}
\label{dif: unified_derivative_free_parameter_flow}
\begin{split}
    \frac{\d{} \bar{\bfmu}_{t}}{\d t} &= - \bbE_{\bfx \sim \calN(\bar{\bfmu}_t, \bar{\Sigma}_t)}\left[ (\bfx - \bar{\bfmu}_t) V(\bfx) \right],
    \\
    \frac{\d{} \bar{\Sigma}^{-1}_{t}}{\d t} &= \bar{\Sigma}^{-1}_t \bbE_{\bfx \sim \calN(\bar{\bfmu}_t, \bar{\Sigma}_t)}\left[ (\bfx - \bar{\bfmu}_t)(\bfx - \bar{\bfmu}_t)^{\top} V(\bfx) \right] \bar{\Sigma}^{-1}_t - \bar{\Sigma}^{-1}_t \bbE_{\bfx \sim \calN(\bar{\bfmu}_t, \bar{\Sigma}_t)}\left[ V(\bfx) \right].
\end{split}
\end{equation}
Similarly, we can obtain derivative-free expressions for $\bar{A}_t$ and $\bar{\bfb}_t$ describing the particle flow:
\begin{equation}
\label{dif: unified_derivative_free_particle_flow}
\begin{split}
    \bar{A}_t &= -\frac{1}{2} \bbE_{\bfx \sim\calN(\bar{\bfmu}_t, \bar{\Sigma}_t)}\left[ (\bfx - \bar{\bfmu}_t)(\bfx - \bar{\bfmu}_t)^{\top} V(\bfx) \right] \bar{\Sigma}^{-1}_t + \frac{1}{2} \bbE_{\bfx \sim \calN(\bar{\bfmu}_t, \bar{\Sigma}_t)}\left[ V(\bfx) \right]
    \\
    \bar{\bfb}_t &= -\bbE_{\bfx \sim \calN(\bar{\bfmu}_t, \bar{\Sigma}_t)}\left[ (\bfx - \bar{\bfmu}_t) V(\bfx) \right] - \bar{A}_t \bar{\bfmu}_t.
\end{split}
\end{equation}
Note that the derivative-free flow expressions in \eqref{dif: unified_derivative_free_parameter_flow} and \eqref{dif: unified_derivative_free_particle_flow} only depend on $\bar{\Sigma}^{-1}_t$. By propagating $\bar{\Sigma}^{-1}_t$ instead of $\bar{\Sigma}_t$, we can also avoid inefficient matrix inverse calculations in the flow expressions.

The computation of the expectation terms in \eqref{dif: unified_derivative_free_parameter_flow} and \eqref{dif: unified_derivative_free_particle_flow} analytically is generally not possible and often times numerical approximation is sought. Since the expectation is with respect to a Gaussian density, there are various accurate and efficient approximation techniques, such as linearization of the terms \citep{anderson2005optimal} and Monte-Carlo integration using Unscented or Gauss-Hermite particles \citep{julier1995new, sarkka2023bayesian}. We use a particle-based approximation for the expectation terms:
\begin{equation}
\label{dif: exp_approx}
    \bbE_{\bfx \sim\calN(\bar{\bfmu}_t, \bar{\Sigma}_t)}\left[ f(\bfx) \right] \approx \sum_{i=1}^{N} w_i f(\bfx^{(i)}_t), 
\end{equation}
where $w_i$ are weights, $\bfx^{(i)}_t \in \bbR^{n}$ are particles and $f(\bfx)$ is an arbitrary function. The particles and associated weights can be generated using the Gauss-Hermite cubature method \citep{sarkka2023bayesian}. We first generate $p$ Gauss-Hermite particles $\{ \xi_i \}_{i=1}^p$ of dimension one corresponding to $\calN(0, 1)$ by computing the roots of the Gauss-Hermite polynomial of order~$p$:
\begin{equation}
    h_p(\xi) = (-1)^{p} \exp(\xi^2/2) \frac{\d{}^p \exp(-\xi^2/2)}{\d \xi^p}, 
\end{equation}
with the corresponding weights $\{^{(1)}w_i \}_{i=1}^p$ obtained as follows:
\begin{equation}
    ^{(1)}w_i = \frac{p!}{(p h_{p-1}(\xi_i))^2}, 
\end{equation}
where we use the left superscript in $^{(1)}w_i$ to indicate that these weights correspond to the one-dimensional Gauss-Hermite quadrature rule. 
The Gauss-Hermite particles $\{ \bfxi_i \}_{i=1}^{p^n}$ of dimension $n$ correspond to $\calN(\mathbf{0}, I)$ are generated as the Cartesian product of the one-dimensional Gauss-Hermite particles:
\begin{equation}
\label{dif: gauss_hermite_particle_unit}
    \{ \bfxi_i \}_{i=1}^{p^n} = \{ (\xi_{i_1}, \xi_{i_2}, ..., \xi_{i_n}), \quad i_1, i_2, ..., i_n \in \{1, 2, ..., p \} \}, 
\end{equation}
with the corresponding weights $\{ w_i \}_{i=1}^{p^n}$ obtained as follows: 
\begin{equation}
\label{dif: gauss_hermite_weight_unit}
    \{ w_i \}_{i=1}^{p^n} = \{ ^{(1)}w_{i_1} ^{(1)}w_{i_2} ... ^{(1)}w_{i_n}, \quad i_1, i_2, ..., i_n \in \{1, 2, ..., p \} \}. 
\end{equation}
Finally, Gauss-Hermite particles $\{ \bfx_i \}_{i=1}^{p^n}$ of dimension $n$ corresponding to $\calN(\bar{\bfmu}_t, \bar{\Sigma}_t)$ are obtained as follows:
\begin{equation}
\label{dif: reparameterization_trick}
    \bfx^{(i)}_t = \bar{L}_t \bfxi_i + \bar{\bfmu}_t, 
\end{equation}
where $\bar{L}_t$ is a matrix square root that satisfies $\bar{\Sigma}_t = \bar{L}_t \bar{L}^{\top}_t$ and the superscript $i$ in $\bfx^{(i)}_t$ denotes the $i$th particle. The weights $\{ w_i \}_{i=1}^{p^n}$ remain unchanged. 

Although we only need to generate Gauss-Hermite particles $\{ \bfxi_i \}_{i=1}^{p^n}$ and corresponding weights $\{ w_i \}_{i=1}^{p^n}$ for a standard normal distribution once, we need to scale and translate the particles using $\bar{L}_t$ and $\bar{\bfmu}_t$ over time. However, we can avoid the scaling and translation to obtain the Gauss-Hermite particle corresponding to $\calN(\bar{\bfmu}_t, \bar{\Sigma}_t)$ by propagating the Gauss-Hermite particles using the particle flow described in \eqref{prelim: particle_dynamics} with pseudo dynamics given by \eqref{dif: unified_particle_flow}. In order to establish this, we need to introduce the following auxiliary result.
\begin{theorem}[Mahalanobis Distance is Invariant]
\label{theorem: maha_inv}
    Define the Mahalanobis distance as:
    \begin{equation}
        D_{\calM}(\bfx, \bfmu, \Sigma) \coloneqq (\bfx - \bfmu)^{\top} \Sigma^{-1} (\bfx - \bfmu).
    \end{equation}
    Consider a Gaussian distribution $\calN(\bar{\bfmu}_t, \bar{\Sigma}_t)$ with its parameters evolving according to \eqref{dif: unified_parameter_flow}. Let $\bfx_0$ be a particle that evolves according to \eqref{prelim: particle_dynamics} with the particle dynamics function given by \eqref{dif: unified_particle_flow}. Then, 
    \begin{equation}
        D_{\calM}(\bfx_t, \bar{\bfmu}_t, \bar{\Sigma}_t) = D_{\calM}(\bfx_0, \bar{\bfmu}_{t=0}, \bar{\Sigma}_{t=0}), \quad \forall t > 0.
    \end{equation}
\end{theorem}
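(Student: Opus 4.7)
The plan is to show that the time derivative of $D_{\calM}(\bfx_t, \bar{\bfmu}_t, \bar{\Sigma}_t)$ vanishes identically along any particle trajectory. Since the statement holds for every $t > 0$ and the Mahalanobis distance is smooth in its arguments, a direct chain-rule computation together with the identifications furnished by \eqref{dif: unified_parameter_flow} and \eqref{dif: unified_particle_flow} should suffice.

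First, I would differentiate the scalar $(\bfx_t - \bar{\bfmu}_t)^{\top} \bar{\Sigma}^{-1}_t (\bfx_t - \bar{\bfmu}_t)$ in $t$ using the product rule. The key simplification comes from the explicit form of the particle dynamics $\dot{\bfx}_t = \bar{A}_t \bfx_t + \bar{\bfb}_t$ together with the definition $\bar{\bfb}_t = -\bar{\Sigma}_t \bbE[\nabla_{\bfx} V] - \bar{A}_t \bar{\bfmu}_t = \dot{\bar{\bfmu}}_t - \bar{A}_t \bar{\bfmu}_t$, which immediately yields the clean cancellation $\dot{\bfx}_t - \dot{\bar{\bfmu}}_t = \bar{A}_t (\bfx_t - \bar{\bfmu}_t)$. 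This is the step where the particular structure of the Fisher-Rao flow (i.e., that $\bar{\bfb}_t$ is aligned with $\dot{\bar{\bfmu}}_t$ up to the affine part $\bar{A}_t \bar{\bfmu}_t$) plays a decisive role.

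Denoting $\bfy_t \coloneqq \bfx_t - \bar{\bfmu}_t$, the derivative collapses to the single quadratic form
\begin{equation*}
    \frac{\d{} D_{\calM}}{\d t} = \bfy_t^{\top} \bigl( \bar{A}_t^{\top} \bar{\Sigma}^{-1}_t + \bar{\Sigma}^{-1}_t \bar{A}_t + \dot{\bar{\Sigma}}^{-1}_t \bigr) \bfy_t.
\end{equation*}
Next I would substitute $\bar{A}_t = -\tfrac{1}{2} \bar{\Sigma}_t\, \bbE[\nabla^2_{\bfx} V] = -\tfrac{1}{2} \bar{\Sigma}_t\, \dot{\bar{\Sigma}}^{-1}_t$, where the second equality invokes the covariance part of the parameter flow \eqref{dif: unified_parameter_flow}. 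Using symmetry of $\bar{\Sigma}_t$ and of $\dot{\bar{\Sigma}}^{-1}_t$ (both inherited from the symmetry of the Hessian $\nabla^2_{\bfx} V$), one obtains $\bar{\Sigma}^{-1}_t \bar{A}_t = -\tfrac{1}{2} \dot{\bar{\Sigma}}^{-1}_t$ and $\bar{A}_t^{\top} \bar{\Sigma}^{-1}_t = -\tfrac{1}{2} \dot{\bar{\Sigma}}^{-1}_t$, so the bracket vanishes identically. Hence $\frac{\d{} D_{\calM}}{\d t} \equiv 0$ and integrating from $0$ to $t$ gives the claim.

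The computation is essentially routine once the two algebraic identities are in place, so there is no serious obstacle. The only subtlety worth flagging is the bookkeeping around symmetry: to conclude that $\bar{A}_t^{\top} \bar{\Sigma}_t^{-1} = \bar{\Sigma}_t^{-1} \bar{A}_t$, one must remark that $\bbE[\nabla^2_{\bfx} V(\bfx; \bftheta)]$ is a symmetric matrix (Schwarz's theorem under the standing smoothness assumptions on $V$), and therefore the symmetrized form of $\bar{A}_t^{\top} \bar{\Sigma}_t^{-1} + \bar{\Sigma}_t^{-1} \bar{A}_t$ that appears in the quadratic form is well-defined without ambiguity.
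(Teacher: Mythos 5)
Your proposal is correct and follows essentially the same route as the paper's proof: differentiate the quadratic form, use the structure of $\bar{\bfb}_t$ to reduce $\dot{\bfx}_t - \dot{\bar{\bfmu}}_t$ to $\bar{A}_t(\bfx_t - \bar{\bfmu}_t)$, and then cancel against $\dot{\bar{\Sigma}}{}^{-1}_t = \bbE[\nabla^2_{\bfx}V]$ via the identity $\bar{A}_t = -\tfrac{1}{2}\bar{\Sigma}_t\,\dot{\bar{\Sigma}}{}^{-1}_t$. Your explicit remark on the symmetry of the expected Hessian is a useful clarification of a step the paper leaves implicit.
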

\begin{proof}
    According to the chain rule, we have:
    \begin{equation}
    \begin{split}
        &\frac{\d D_{\calM}(\bfx_t, \bar{\bfmu}_t, \bar{\Sigma}_t)}{\d t} = 2 (\frac{\d \bfx_t}{\d t} - \frac{\d{} \bar{\bfmu}_t}{\d t})^{\top} \bar{\Sigma}^{-1}_t (\bfx_t - \bar{\bfmu}_t) + (\bfx_t - \bar{\bfmu}_t)^{\top} \frac{\d{} \bar{\Sigma}^{-1}_t}{\d t} (\bfx_t - \bar{\bfmu}_t) \\
        &= 2 (\bfx_t - \bar{\bfmu}_t)^{\top} \bar{A}^{\top}_t \bar{\Sigma}^{-1}_t (\bfx_t - \bar{\bfmu}_t) + (\bfx_t - \bar{\bfmu}_t)^{\top} \bbE_{\bfx \sim \calN(\bar{\bfmu}_t, \bar{\Sigma}_t)}\left[\nabla^2_{\bfx}V(\bfx)\right] (\bfx_t - \bar{\bfmu}_t)
        = 0, 
    \end{split}
    \end{equation}
    indicating that the time rate of change of the Mahalanobis distance is zero.
\end{proof}

We are ready to prove that Gauss-Hermite particles remain Gauss-Hermite when propagated along the  
Fisher-Rao particle flow.

\begin{theorem}[Fisher-Rao Particle Flow as Gauss-Hermite Transform]
\label{theorem: gh_preserve}
    Consider a Gaussian distribution $\calN(\bar{\bfmu}_t, \bar{\Sigma}_t)$ with parameters evolving according to \eqref{dif: unified_parameter_flow}. Let $\bfx_0$ be a Gauss-Hermite particle obtained from $\calN(\bar{\bfmu}_{t=0}, \bar{\Sigma}_{t=0})$ which evolves according to \eqref{prelim: particle_dynamics} with particle dynamics function given by \eqref{dif: unified_particle_flow}. Then, $\bfx_t$ is a Gauss-Hermite particle corresponding to $\calN(\bar{\bfmu}_t, \bar{\Sigma}_t)$. 
\end{theorem}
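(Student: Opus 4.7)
The plan is to exploit the fact that the particle dynamics~\eqref{dif: unified_particle_flow} is affine in $\bfx$, so the flow preserves affine structure in the initial condition. Since the Gauss--Hermite particle is obtained as the affine image $\bfx_0 = \bar{L}_0 \bfxi + \bar{\bfmu}_0$ of a standard Gauss--Hermite point $\bfxi \in \bbR^n$ (see \eqref{dif: reparameterization_trick}), I will track how $\bar{L}_0$ and $\bar{\bfmu}_0$ are transformed by the flow and then show that the resulting affine factors form a valid mean/matrix-square-root pair for $\calN(\bar{\bfmu}_t, \bar{\Sigma}_t)$.

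First, I would postulate the ansatz $\bfx_t = G_t \bfxi + \bfh_t$ and substitute into $\dot{\bfx}_t = \bar{A}_t \bfx_t + \bar{\bfb}_t$. Matching the $\bfxi$-dependent and $\bfxi$-independent parts yields two ODEs: $\dot{G}_t = \bar{A}_t G_t$ with $G_0 = \bar{L}_0$, and $\dot{\bfh}_t = \bar{A}_t \bfh_t + \bar{\bfb}_t$ with $\bfh_0 = \bar{\bfmu}_0$. Next, using the expression for $\bar{\bfb}_t$ in~\eqref{dif: unified_particle_flow}, I rewrite $\bar{A}_t \bar{\bfmu}_t + \bar{\bfb}_t = -\bar{\Sigma}_t \bbE_{\calN(\bar{\bfmu}_t,\bar{\Sigma}_t)}[\nabla_{\bfx} V(\bfx)] = \dot{\bar{\bfmu}}_t$, showing that $\bar{\bfmu}_t$ satisfies the same ODE as $\bfh_t$ with the same initial condition. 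By uniqueness of ODE solutions, $\bfh_t = \bar{\bfmu}_t$ for all $t$.

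Next, I would verify that $G_t G_t^{\top} = \bar{\Sigma}_t$ by checking both sides satisfy the same Lyapunov-type ODE. Differentiating $G_t G_t^{\top}$ and using $\dot{G}_t = \bar{A}_t G_t$ gives $\tfrac{\d}{\d t}(G_t G_t^{\top}) = \bar{A}_t (G_t G_t^{\top}) + (G_t G_t^{\top}) \bar{A}_t^{\top}$. On the other hand, differentiating $\bar{\Sigma}_t \bar{\Sigma}_t^{-1} = I$ and substituting the parameter flow equation \eqref{dif: unified_parameter_flow} for $\dot{\bar{\Sigma}}_t^{-1}$, I obtain $\dot{\bar{\Sigma}}_t = -\bar{\Sigma}_t \bbE_{\calN(\bar{\bfmu}_t,\bar{\Sigma}_t)}[\nabla^2_{\bfx} V(\bfx)] \bar{\Sigma}_t$; using $\bar{A}_t = -\tfrac{1}{2} \bar{\Sigma}_t \bbE[\nabla^2_{\bfx} V]$ and the symmetry of the Hessian, this rearranges to $\dot{\bar{\Sigma}}_t = \bar{A}_t \bar{\Sigma}_t + \bar{\Sigma}_t \bar{A}_t^{\top}$. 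With identical initial conditions $G_0 G_0^{\top} = \bar{L}_0 \bar{L}_0^{\top} = \bar{\Sigma}_0$, uniqueness yields $G_t G_t^{\top} = \bar{\Sigma}_t$, so $G_t$ is a valid matrix square root of $\bar{\Sigma}_t$.

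Combining the two steps, $\bfx_t = G_t \bfxi + \bar{\bfmu}_t$ is exactly of the form \eqref{dif: reparameterization_trick} with $\bar{L}_t = G_t$, hence a Gauss--Hermite particle corresponding to $\calN(\bar{\bfmu}_t, \bar{\Sigma}_t)$. The main obstacle is the second step: carefully matching $G_t G_t^{\top}$ and $\bar{\Sigma}_t$ as solutions of the same matrix ODE, where symmetry of $\bbE[\nabla^2_{\bfx} V]$ is essential to make $\bar{A}_t \bar{\Sigma}_t + \bar{\Sigma}_t \bar{A}_t^{\top}$ collapse to the expected form. Theorem~\ref{theorem: maha_inv} can be viewed as a scalar corollary of this matrix identity, since $\|\bfxi\|^2 = (\bfx_t - \bar{\bfmu}_t)^{\top} G_t^{-\top} G_t^{-1} (\bfx_t - \bar{\bfmu}_t) = D_{\calM}(\bfx_t, \bar{\bfmu}_t, \bar{\Sigma}_t)$.
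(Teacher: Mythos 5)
Your proposal is correct, and its skeleton is the same as the paper's: both proofs track the affine factorization $\bfx_t = \bar L_t \bfxi + \bar\bfmu_t$, impose $\dot{\bar L}_t = \bar A_t \bar L_t$, and use the identity $\bar A_t \bar\bfmu_t + \bar\bfb_t = \dot{\bar\bfmu}_t$ to verify that this affine image solves the particle ODE. The one genuine divergence is in how you certify that $\bar L_t$ (your $G_t$) remains a matrix square root of $\bar\Sigma_t$. The paper invokes Theorem~\ref{theorem: maha_inv}: Mahalanobis invariance gives $\bfxi^{\top}\bar L_t^{\top}\bar\Sigma_t^{-1}\bar L_t\bfxi = \bfxi^{\top}\bfxi$ for all $\bfxi$, and symmetry of $\bar L_t^{\top}\bar\Sigma_t^{-1}\bar L_t$ forces it to equal $I$. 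You instead show that $G_tG_t^{\top}$ and $\bar\Sigma_t$ both satisfy the Lyapunov equation $\dot S = \bar A_t S + S\bar A_t^{\top}$ with the same initial condition (using symmetry of $\bbE[\nabla^2_{\bfx}V]$ to get $\dot{\bar\Sigma}_t = \bar A_t\bar\Sigma_t + \bar\Sigma_t\bar A_t^{\top}$) and conclude by uniqueness of solutions of a linear matrix ODE. Your route is self-contained and makes the covariance propagation equation explicit, which is a useful byproduct; the paper's route is shorter at this point because it reuses the already-established invariance result, and as you correctly observe, that result is the quadratic-form shadow of your matrix identity. One small presentational caveat: ``matching the $\bfxi$-dependent and $\bfxi$-independent parts'' of the ansatz is not literally a term-by-term identification for a fixed $\bfxi$ (the decomposition is not unique); the clean statement is the one you effectively use, namely that the curve $G_t\bfxi + \bfh_t$ defined by those two ODEs solves the particle ODE with the right initial condition and therefore coincides with $\bfx_t$ by uniqueness.
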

\begin{proof}
    Let $\bfxi$ be arbitrary, consider the particle $\bfxi_t = \bar{L}_{t} \bfxi + \bar{\bfmu}_{t}$, where $\bar{\bfmu}_{t}$ evolves according to \eqref{dif: unified_parameter_flow} and $\bar{L}_{t}$ evolves according to the following equation:
    \begin{equation}
    \label{dif: covariance_square_root}
        \frac{\d{} \bar{L}_t}{\d t} = \bar{A}_t \bar{L}_t, 
    \end{equation}
    with $\bar{A}_t$ given in \eqref{dif: unified_particle_flow}. We first show $\bfxi_t$ is the solution to the particle flow \eqref{prelim: particle_dynamics}, where the particle dynamics function is given by \eqref{dif: unified_particle_flow}. Next, we show that if we have $\bar{L}_{t=0}$ satisfy $\bar{\Sigma}_{t=0} = \bar{L}_{t=0} \bar{L}^{\top}_{t=0}$ and the evolution of $\bar{L}_{t}$ satisfies \eqref{dif: covariance_square_root}, then $\bar{\Sigma}_t = \bar{L}_t \bar{L}^{\top}_t$. 
    
    According to the chain rule, the evolution of the particle $\bfxi_t$ satisfies:
    \begin{equation}
       \frac{\d \bfxi_{t}}{\d t} = \frac{\d{} \bar{L}_t}{\d t} \bfxi + \frac{\d{} \bar{\bfmu}_t}{\d t} = \bar{A}_t \bar{L}_t \bfxi + \bar{A}_t \bar{\bfmu}_t + \bar{\bfb}_t = \bar{A}_t \bfxi_{t} + \bar{\bfb}_t, 
    \end{equation}
    which is exactly the same as the particle flow \eqref{prelim: particle_dynamics} with the particle dynamics given in \eqref{dif: unified_particle_flow}. This justifies our first claim. According to Theorem~\ref{theorem: maha_inv}, we have:
    \begin{equation}\label{eq:auxx1}
        D_{\calM}(\bfxi_t, \bar{\bfmu}_t, \bar{\Sigma}_t) = \bfxi^{\top} \bar{L}^{\top}_{t} \Sigma_t^{-1} \bar{L}_{t} \bfxi = \bfxi^{\top} \bfxi.
    \end{equation}
    Since $\bar{L}^{\top}_{t} \Sigma_t^{-1} \bar{L}_{t}$ is symmetric, the following equation holds:
    \begin{equation}\label{eq:auxx2}
         \bar{L}^{\top}_{t} \Sigma_t^{-1} \bar{L}_{t} = I,
    \end{equation}
    indicating that $\bar{\Sigma}^{-1}_t = [\bar{L}^{-1}_t]^{\top} \bar{L}^{-1}_t$ and $\bar{\Sigma}_t = \bar{L}_t \bar{L}^{\top}_t$. This justifies our second claim. Hence, a Gauss-Hermite particle corresponding to $\calN(\bar{\bfmu}_t, \bar{\Sigma}_t)$ can be obtained by solving the particle flow \eqref{prelim: particle_dynamics} with the particle dynamics function given by \eqref{dif: unified_particle_flow}.
\end{proof}

This result shows that, instead of obtaining Gauss-Hermite particles corresponding to $\calN(\bar{\bfmu}_t, \bar{\Sigma}_t)$, we can propagate Gauss-Hermite particles according to the particle flow in \eqref{prelim: particle_dynamics} with the particle dynamics given in \eqref{dif: unified_particle_flow}, starting from the Gauss-Hermite particles corresponding to $\calN(\bar{\bfmu}_{t=0}, \bar{\Sigma}_{t=0})$. 

\begin{remark}[Propagating Covariance in Square Root Form]
    According to the proof of Theorem~\ref{theorem: gh_preserve}, we have that if $\bar{L}_{t=0}$ satisfies $\bar{\Sigma}_{t=0} = \bar{L}_{t=0} \bar{L}^{\top}_{t=0}$ and the evolution of $\bar{L}_{t}$ satisfies \eqref{dif: covariance_square_root}, then $\bar{\Sigma}_t = \bar{L}_t \bar{L}^{\top}_t$. In this regard, we can propagate the inverse covariance matrix in square-root form as:
    \begin{equation}
        \frac{\d{} \bar{L}^{-1}_t}{\d t} = - \bar{L}^{-1}_t \bar{A}_t, 
    \end{equation}
    where $\bar{L}_{t=0}$ satisfies $\bar{\Sigma}_{t=0} = \bar{L}_{t=0} \bar{L}^{\top}_{t=0}$. This flow 
    enforces that the inverse covariance matrix stays symmetric and positive definite during propagation. Note that the matrix square root decomposition is not unique and, hence, depending on the initialization of this flow, the evolution of the matrix square root will differ. Our proposed method is a special case of the one in \cite{morf1977square}, while other solutions exist, such as the one proposed in \cite{sarkka2009unscent}, which considers the lower triangular structure of the covariance square root matrix.
\end{remark}

In this section, we demonstrated several important identities satisfied by the particle flow and its underlying variational density. Corollary~\ref{theorem: gh_preserve} demonstrates that Gauss-Hermite particles, generated by a specific covariance square root, evolve according to the particle flow with the particle dynamics function defined by \eqref{dif: unified_particle_flow}. Consequently, evaluating the Gaussian probability density function at these particle locations amounts to rescaling the density evaluated at the particles' initial positions. However, this result should be used with caution when dealing with the Gaussian mixture variational distribution since it only applies to particles associated with their respective Gaussian mixture components. In the next section, we present numerical results to demonstrate the accuracy of the Fisher-Rao particle flow method. We also evaluate the expectation terms in \eqref{dif: unified_parameter_flow} and \eqref{dif: unified_particle_flow} using the methods discussed in this section and compare their accuracy.

\section{Evaluation}
\label{sec: Evaluation}

In this section, we first evaluate our Fisher-Rao flows in two low-dimensional scenarios. In the first scenario, the prior density is a Gaussian mixture, and the likelihood function is obtained from a linear Gaussian observation model. This leads to a multi-modal posterior density. In the second scenario, the prior density is a single Gaussian, and the likelihood function is obtained from a nonlinear Gaussian observation model. This leads to a posterior density that is not Gaussian. We compare the approximated Gaussian mixture Fisher-Rao particle flow with the Gaussian sum particle flow \citep{zhang2024multisensor}, the particle flow Gaussian mixture model (PF-GMM) \citep{pal2017gaussian}, the particle flow particle filter Gaussian mixture model (PFPF-GMM) \citep{pal2018particle}, and the Wasserstein gradient flow \citep{lambert2022wass}. We also demonstrate numerically that using Stein's gradient and Hessian \eqref{dif: derivative_free} with Gauss-Hermite particles \eqref{dif: reparameterization_trick} leads to particle-efficient and stable approximation of the expectation terms \eqref{dif: unified_parameter_flow}. Finally, we evaluate our Fisher-Rao flows in a high-dimensional scenario, using the posterior generated in the context of Bayesian logistic regression with dimensions $50$ and $100$ \citep{lambert2022wass}.

\subsection{Gaussian Mixture Prior}
\label{sec: eval_gmm}
We consider an equally weighted Gaussian mixture with four components as a prior density:
\begin{equation}
    p(\bfx) = \frac{1}{4} \sum_{i=1}^{4} p_{\calN}(\bfx; \hat{\bfx}^{(i)}, P),
\end{equation}
where
\begin{equation}
    \hat{\bfx}^{(i)} = \begin{bmatrix} \pm 5 \\ \pm 5 \end{bmatrix}, \quad P = \begin{bmatrix} 5 & 0 \\ 0 & 5 \end{bmatrix} .
\end{equation}
The likelihood function is also a Gaussian density $p_{\calN}(\bfz; H\bfx, R)$, with 
\begin{equation}
   H = \begin{bmatrix} 2 & -0.2 \\ 0.3 & 2.5 \end{bmatrix}, \quad R = \begin{bmatrix} 170 & 64 \\ 64 & 230 \end{bmatrix}, \quad \bfx^* = \begin{bmatrix} 2.67 \\ 1.67 \end{bmatrix}, 
\end{equation}
where $\bfx^*$ denotes the true value of $\bfx$ used to generate an observation $\bfz$. We set a large observation covariance to clearly separate the four modes of the posterior distribution. 

For the approximated Gaussian mixture Fisher-Rao particle flow \eqref{gaussian_mixture_approx: natural_para_liouville}, the initial mean parameter for the $k$th Gaussian mixture component $\bfmu^{(k)}_{t=0}$ is sampled from one of the prior Gaussian mixture components $\bfmu^{(k)}_{t=0} \sim \calN(\hat{\bfx}^{(i)}, P)$. The initial variance parameters for the $k$th Gaussian mixture component is set to $\Sigma^{(k)}_{t=0} = 3P$. The initial weight for the $k$th Gaussian mixture component is set to $\omega^{(k)}_{t=0} = 1 / K$, where $K$ denotes the total number of Gaussian mixture components employed in our approach. 
Each Gaussian mixture component is associated with $16$ Gauss-Hermite particles generated according to \eqref{dif: reparameterization_trick}. The Wasserstein gradient flow method \citep{lambert2022wass} uses the same initialization approach as the approximated Gaussian mixture Fisher-Rao particle flows. For the Gaussian sum particle flow method \citep{zhang2024multisensor}, each Gaussian particle flow component is initialized with $1000$ randomly sampled particles from $\calN(\bfmu^{(k)}, P)$, where $\bfmu^{(k)}$ is sampled from one of the prior Gaussian mixture components $\bfmu^{(k)} \sim \calN(\hat{\bfx}^{(i)}, P)$. This initialization method is employed for the Gaussian sum particle flow to ensure consistency with the other two methods, as it utilizes particles to compute the empirical mean during propagation. For the PF-GMM method \citep{pal2017gaussian} and the PFPF-GMM method \citep{pal2018particle}, we use an initial density identical to the prior, which is a four-component Gaussian mixture. Both methods are initialized with the same set of particles, generated by drawing $1000$ samples from each mixture component. For all methods, we use the following discrete KL divergence approximation:
\begin{equation}
    D_{KL}(q(\bfx) \| p(\bfx | \bfz)) \approx \frac{1}{N}\sum_{i=1}^{N} \log\left( \frac{q(\bfx_i)}{p(\bfx_i, \bfz)} \right) + \frac{1}{N}\log\left( \sum_{j=1}^{N} p(\bfx_j, \bfz) \right), 
\end{equation}
where $p(\bfx, \bfz) = p(\bfz | \bfx) p(\bfx)$ denotes the joint density, $\bfx_i$ are the particles, and $N$ is the total number of particles. The additional summation term over the joint density evaluations provides an approximation of the posterior normalizing constant. To ensure an accurate approximation of the KL divergence, we use $10^6$ particles uniformly distributed on a grid over the region $[-15, 15] \times [-15, 15]$. However, we need a parametric representation of the posterior approximation of PF-GMM and PFPF-GMM to apply the discrete KL divergence approximation described above. For the PF-GMM method, the component weights and covariance parameters are obtained via the parallel EKF update, and the mean parameter of each Gaussian mixture component is computed as the empirical mean of the particles associated with that component. For the PFPF-GMM method, the mean and covariance parameters for each Gaussian mixture component are estimated from the weighted particles using their empirical mean and empirical covariance, and the component weights are obtained using the same approach employed for PF-GMM.

Figure~\ref{fig: single_gaussian_mp} shows the results for the approximated Gaussian mixture Fisher-Rao particle flow \eqref{dif: reparameterization_trick}, the Gaussian particle flow \citep{zhang2024multisensor}, and the Wasserstein gradient flow \citep{lambert2022wass} when a single Gaussian is used to approximate the posterior density. This test demonstrates that the Gaussian particle flow \citep{zhang2024multisensor} exhibits pronounced sensitivity to its initialization, converging to the posterior component from which the initial particle ensemble is drawn. On the other hand, the Wasserstein gradient flow \citep{lambert2022wass} consistently converges to the posterior component with the highest weight. Our Gaussian Fisher-Rao particle flow \eqref{gaussian_approx: natural_para_liouville} achieves a balance between fitting the dominant posterior component and representing the remaining components, thereby achieving the lowest KL divergence. Figure~\ref{fig: mp} shows the results for each method when using a Gaussian mixture to approximate the posterior density. For the approximated Gaussian mixture Fisher-Rao particle flow \eqref{dif: reparameterization_trick}, the Gaussian sum particle flow \citep{zhang2024multisensor}, and the Wasserstein gradient flow \citep{lambert2022wass}, we use a Gaussian mixture with $20$ components. For the PF-GMM \citep{pal2017gaussian} and the PFPF-GMM \citep{pal2018particle}, we use a Gaussian mixture with $4$ components to match their formulation. In this case, the Gaussian sum particle flow fails to capture the four posterior components, resulting in the highest KL divergence. The Wasserstein gradient flow successfully captures the locations of the four different components of the posterior. However, it fails to capture the component weights. The PF-GMM, PFPF-GMM, and our approximated Gaussian mixture Fisher-Rao particle flow capture the locations and the weights of the four posterior components. Among these methods, the PF-GMM yields the lowest KL divergence approximation. The effectiveness of PF-GMM and PFPF-GMM can be explained by the fact that they require specific problem information, as both methods construct an EDH flow for each prior component. For the linear observation model considered in this case, each EDH flow satisfies the linear Gaussian assumptions. Hence, according to Lemma~\ref{lemma: exact_flow_exact}, each EDH flow yields an exact migration of the particles. However, our approximated Gaussian mixture Fisher-Rao particle flow achieves comparable performance to PF-GMM and PFPF-GMM without exploiting this problem-specific structure. Figure~\ref{fig: gmm_gradient} shows a comparison of the approximation accuracy of the expectation terms. In this test case, we observe that when using Stein's gradient and Hessian \eqref{dif: derivative_free}, Gauss-Hermite particles \eqref{dif: reparameterization_trick} of degree $4$ are accurate enough to ensure stable propagation, since they achieve comparable accuracy to the results obtained using Gauss-Hermite particles of degree $32$. However, Gauss-Hermite particles will lead to degraded accuracy when using the analytical gradient and Hessian.

\begin{figure}[t]
    \centering
    \includegraphics[width=0.48\linewidth,valign=t]{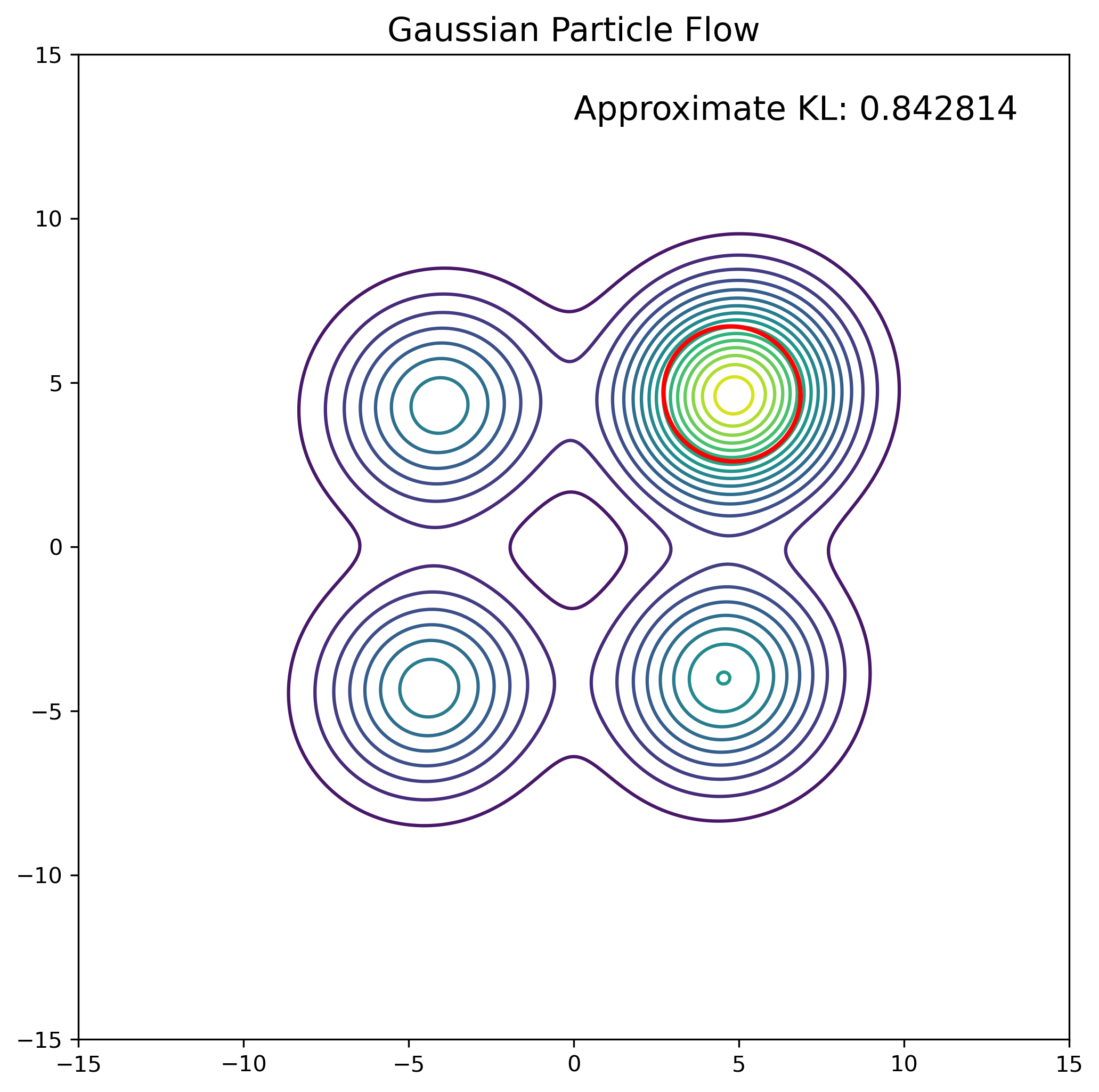}%
    \hfill%
    \includegraphics[width=0.48\linewidth,valign=t]{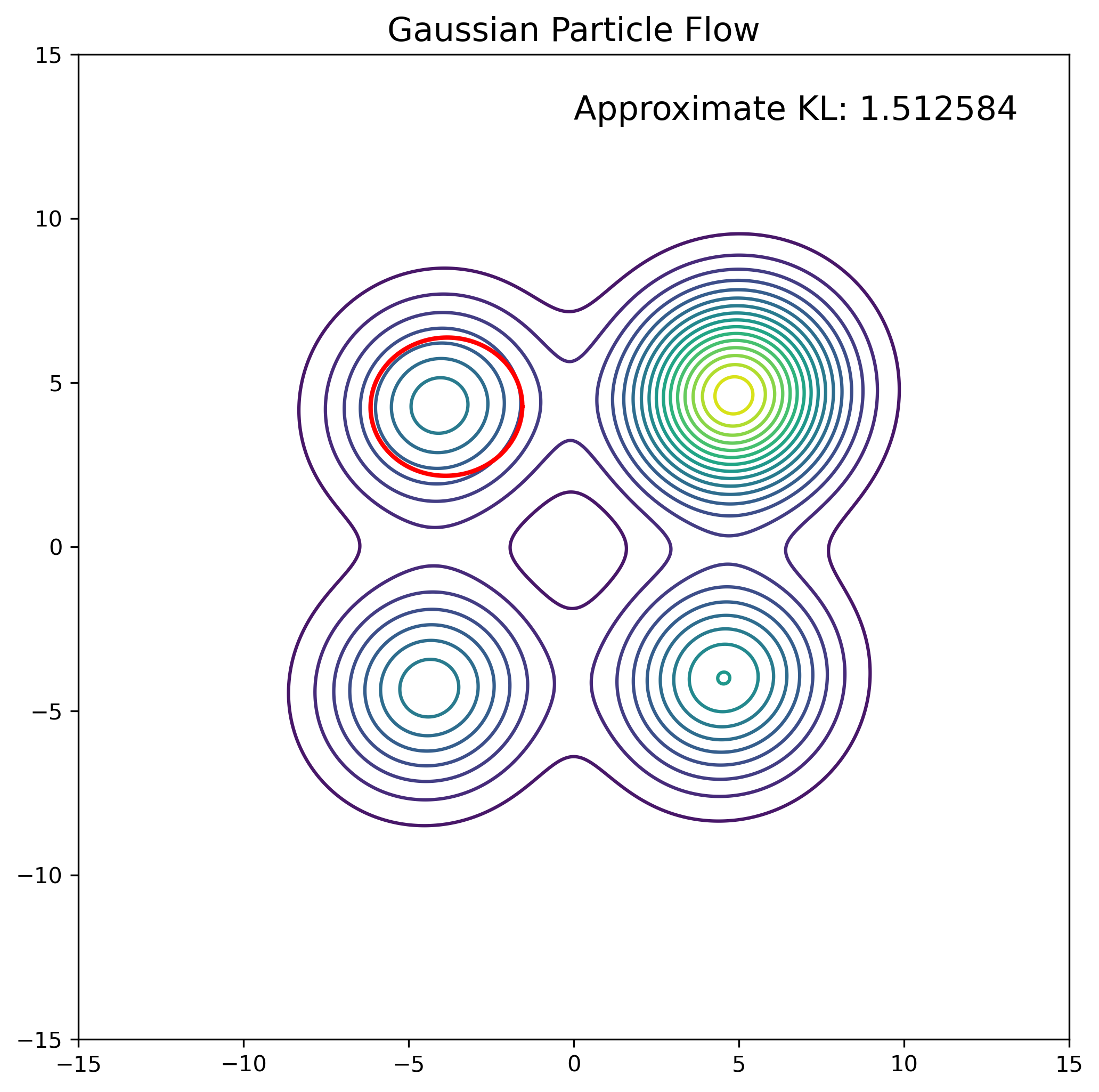}
    \includegraphics[width=0.48\linewidth,valign=t]{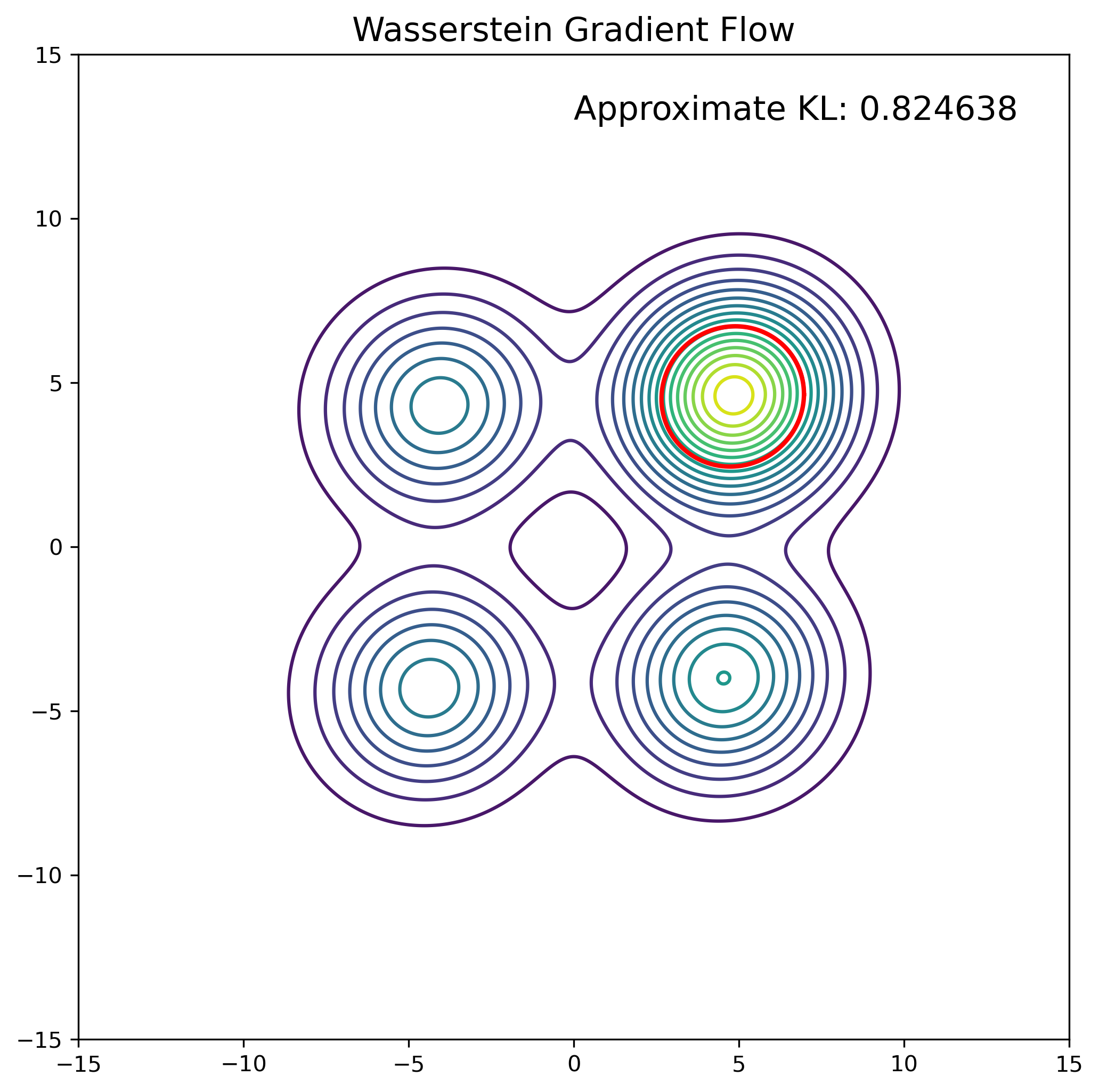}%
    \hfill%
    \includegraphics[width=0.48\linewidth,valign=t]{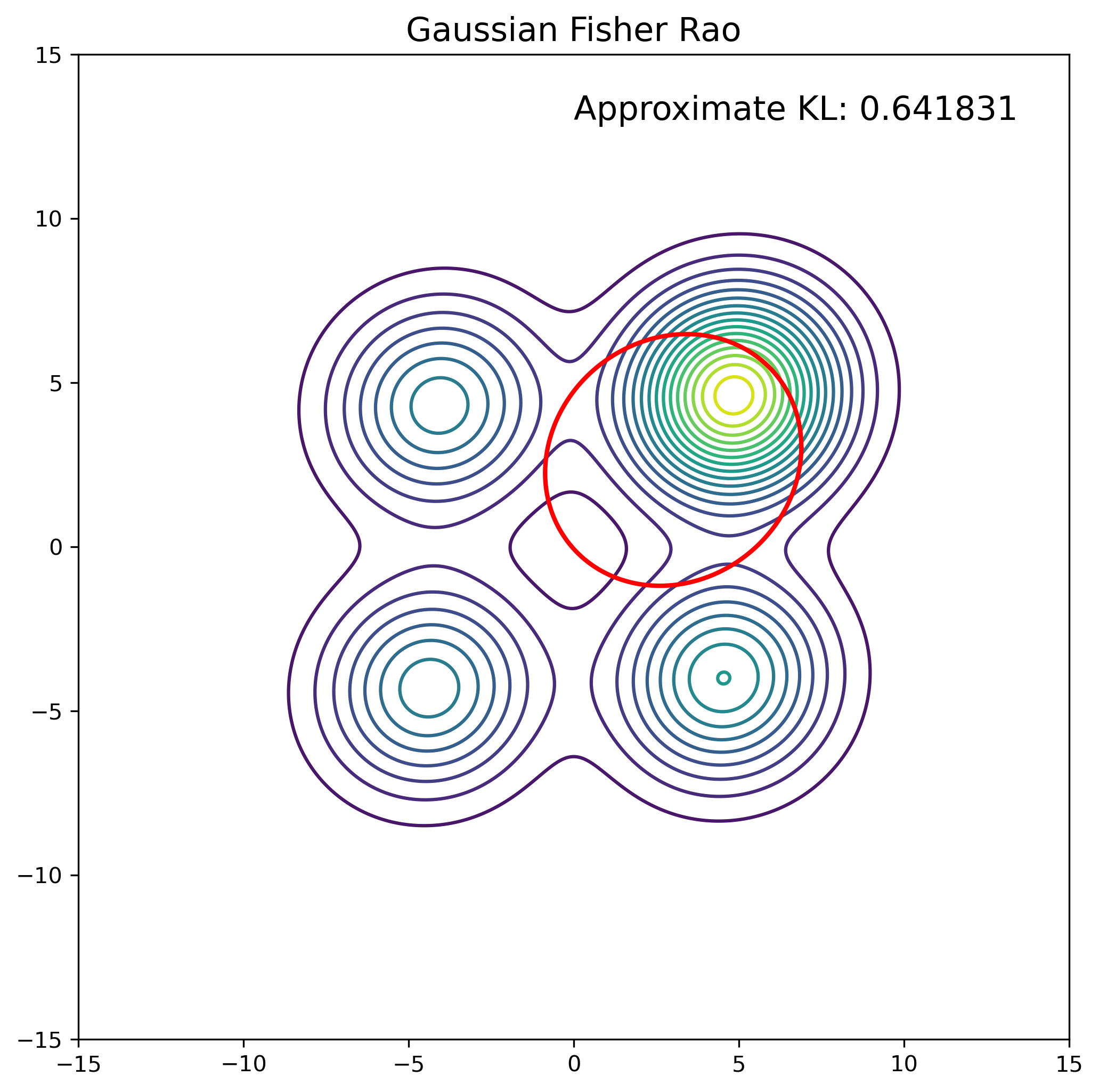}
    \caption{Comparison of the Gaussian Fisher-Rao particle flow \eqref{gaussian_approx: natural_para_liouville} with the Gaussian particle flow \citep{zhang2024multisensor} and the Wasserstein gradient flow \citep{lambert2022wass} for the Gaussian mixture prior case. For each method, we use a single Gaussian to approximate the posterior density. The covariance contour corresponding to one Mahalanobis distance is overlaid on the reference contour. The top two figures display results generated by Gaussian particle flow with different initializations. The top left figure shows the result with particles generated from $\calN(\hat{\bfx}^{(1)}, P)$, while the top right figure shows the result with particles generated from $\calN(\hat{\bfx}^{(2)}, P)$. This method is sensitive to initialization, and as a result, its accuracy is highly dependent on the initial condition. The bottom left figure shows the result generated by Wasserstein gradient flow, which converges to the mode with the largest component weight. The bottom right figure shows the result generated by our Gaussian Fisher-Rao particle flow, which achieves the lowest approximated KL divergence.}
    \label{fig: single_gaussian_mp}
\end{figure}

\begin{figure}[t]
    \centering
    \includegraphics[width=0.32\linewidth,valign=t]{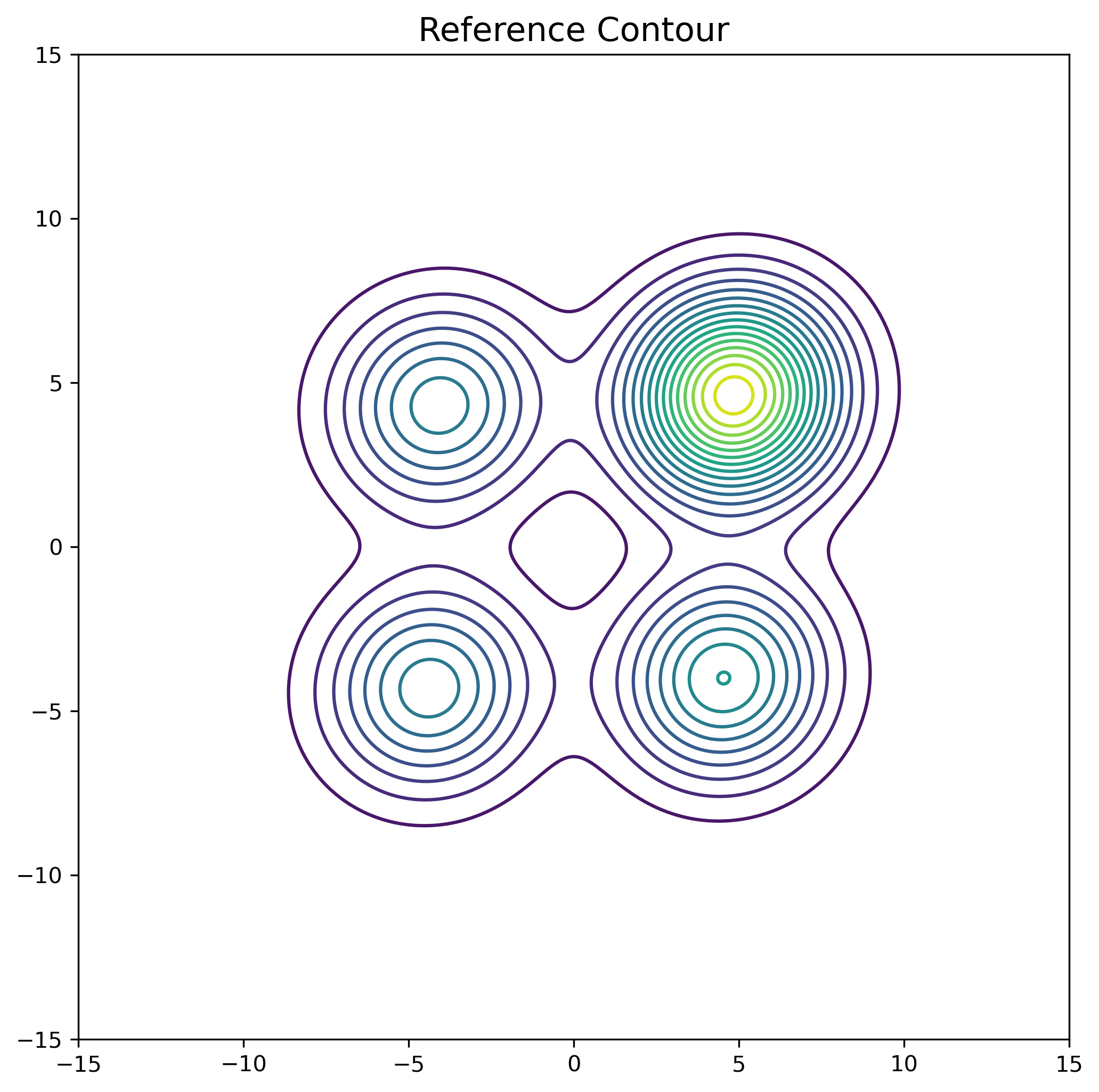}%
    \includegraphics[width=0.32\linewidth,valign=t]{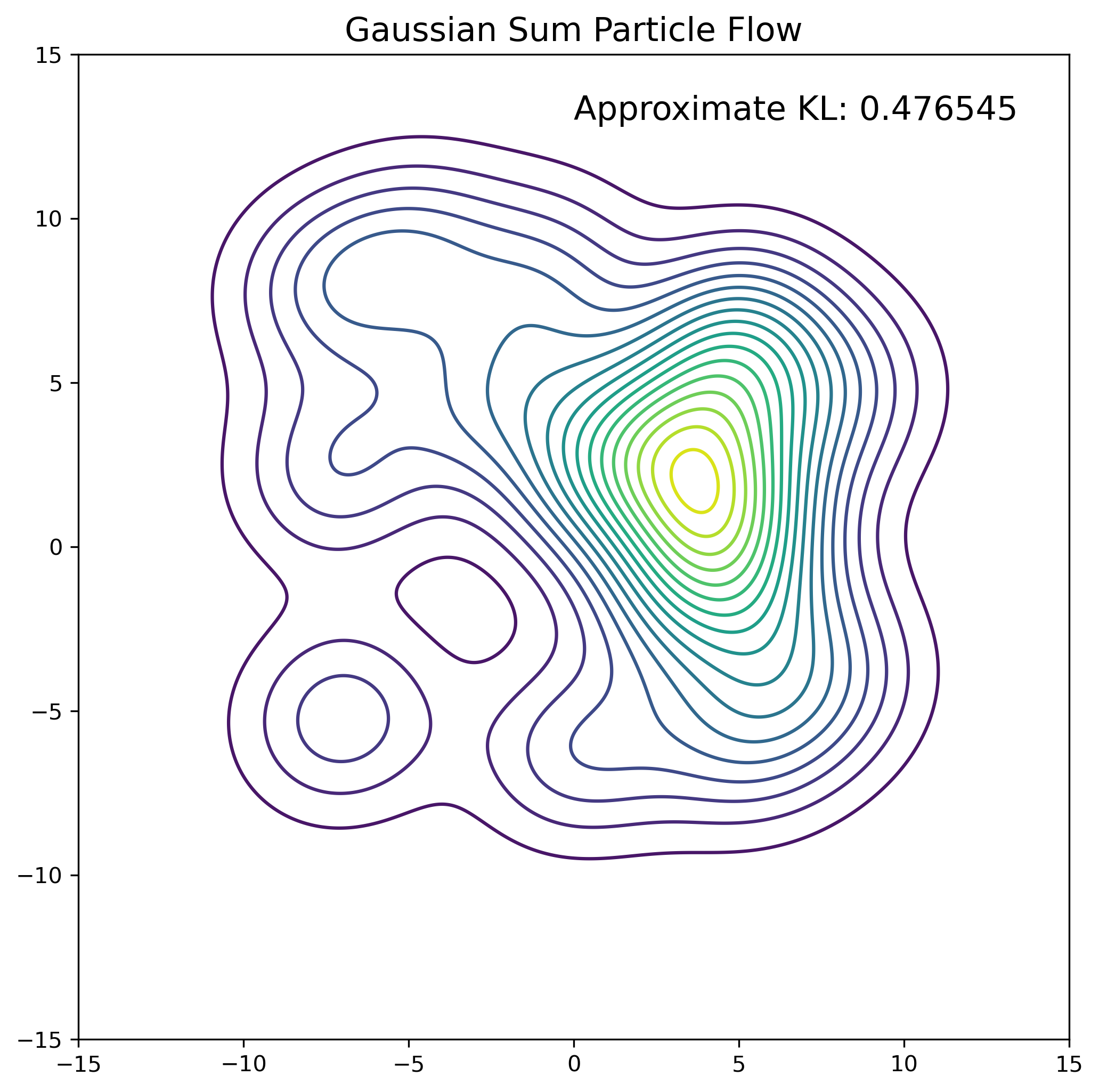}%
    \includegraphics[width=0.32\linewidth,valign=t]{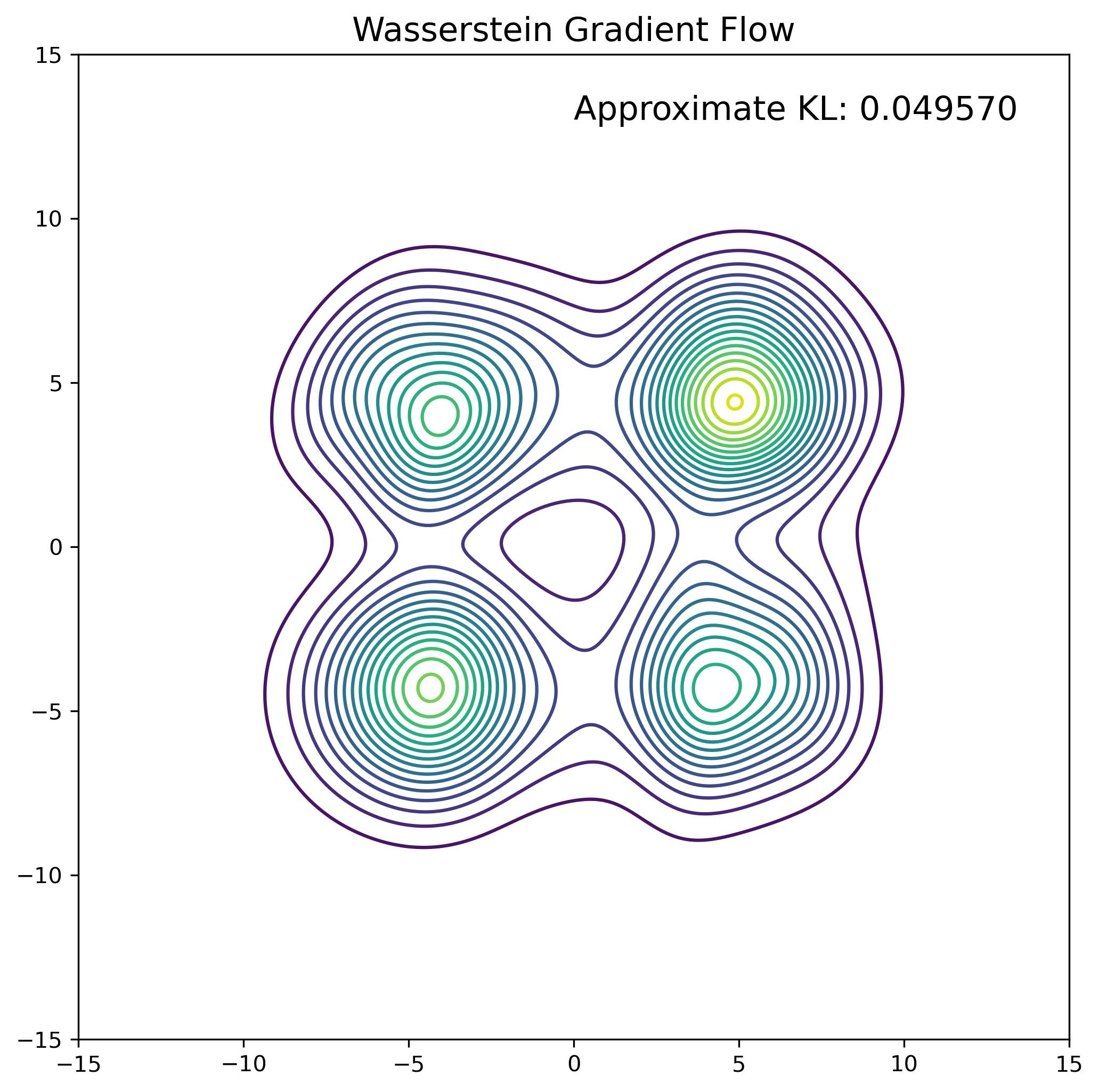}%
    \hfill%
    \includegraphics[width=0.32\linewidth,valign=t]{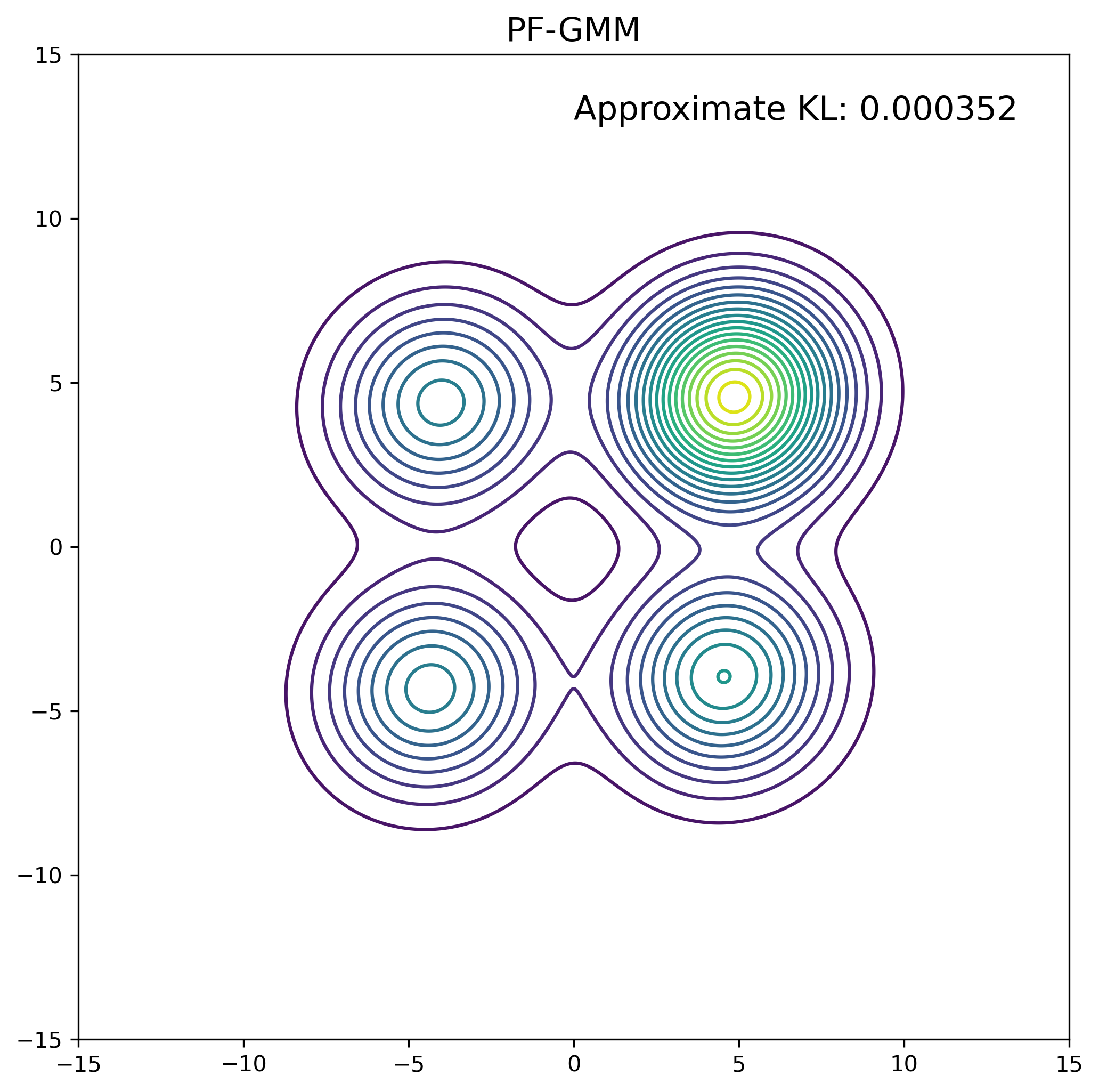}%
    \includegraphics[width=0.32\linewidth,valign=t]{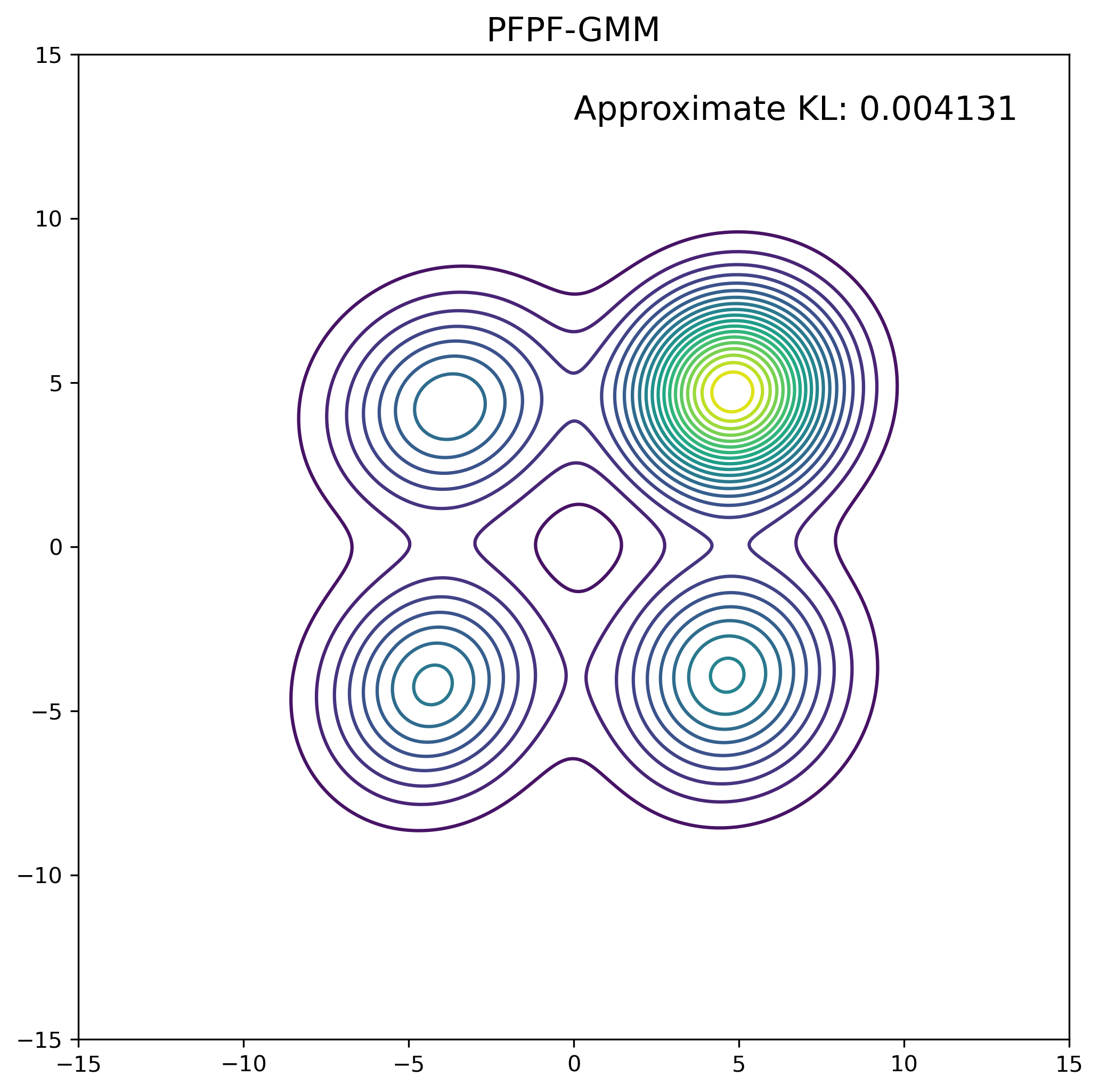}%
    \includegraphics[width=0.32\linewidth,valign=t]{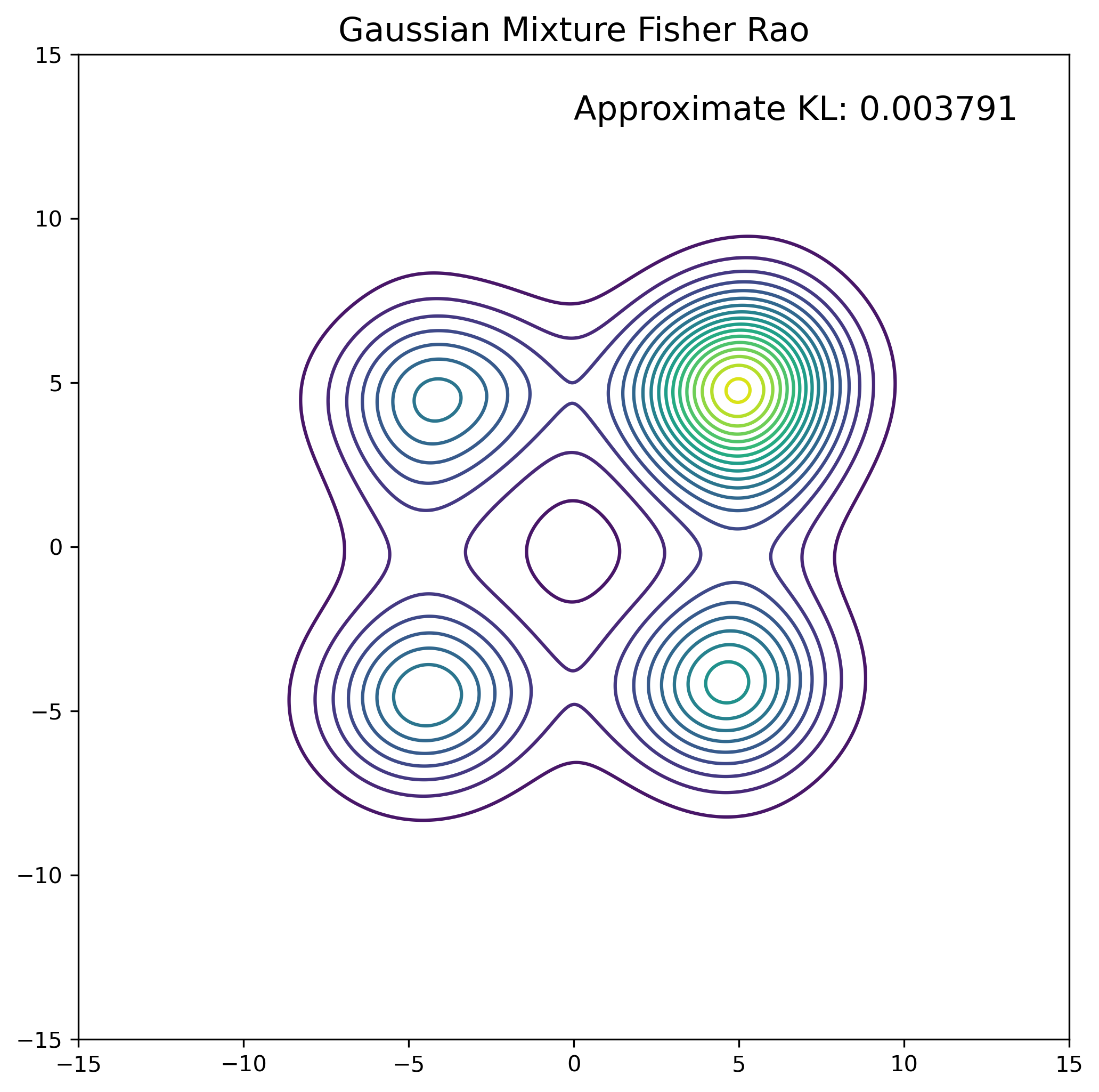}%
    \caption{Comparison of the approximated Gaussian mixture Fisher-Rao particle flow \eqref{gaussian_mixture_approx: natural_para_liouville} with the Gaussian sum particle flow \citep{zhang2024multisensor}, the Wasserstein gradient flow \citep{lambert2022wass}, the PF-GMM \citep{pal2017gaussian}, and the PFPF-GMM \citep{pal2018particle} for the Gaussian mixture prior case. The approximated posterior contour is shown for each method. The top left figure shows the reference contour. The top middle figure shows the contour generated by Gaussian sum particle flow, which does not capture the four components of the reference contour and achieves the highest KL divergence. The top right figure shows the contour generated by Wasserstein gradient flow, which captures the locations of the four components of the reference contour but fails to capture the component weights. The bottom left figure shows the contour generated by PF-GMM, and the bottom middle figure shows the contour generated by PFPF-GMM. Both methods capture the locations and the weights of the four components of the reference contour. The PF-GMM method achieves the lowest KL divergence approximation. The bottom right figure shows the contour generated by the approximated Gaussian mixture Fisher-Rao particle flow, which also captures both the locations and the weights of the four components of the reference contour, and achieves a comparable KL divergence approximation compared to the PF-GMM method.}
    \label{fig: mp}
\end{figure}

\begin{figure}[t]
    \centering
    \includegraphics[width=0.98\linewidth,valign=t]{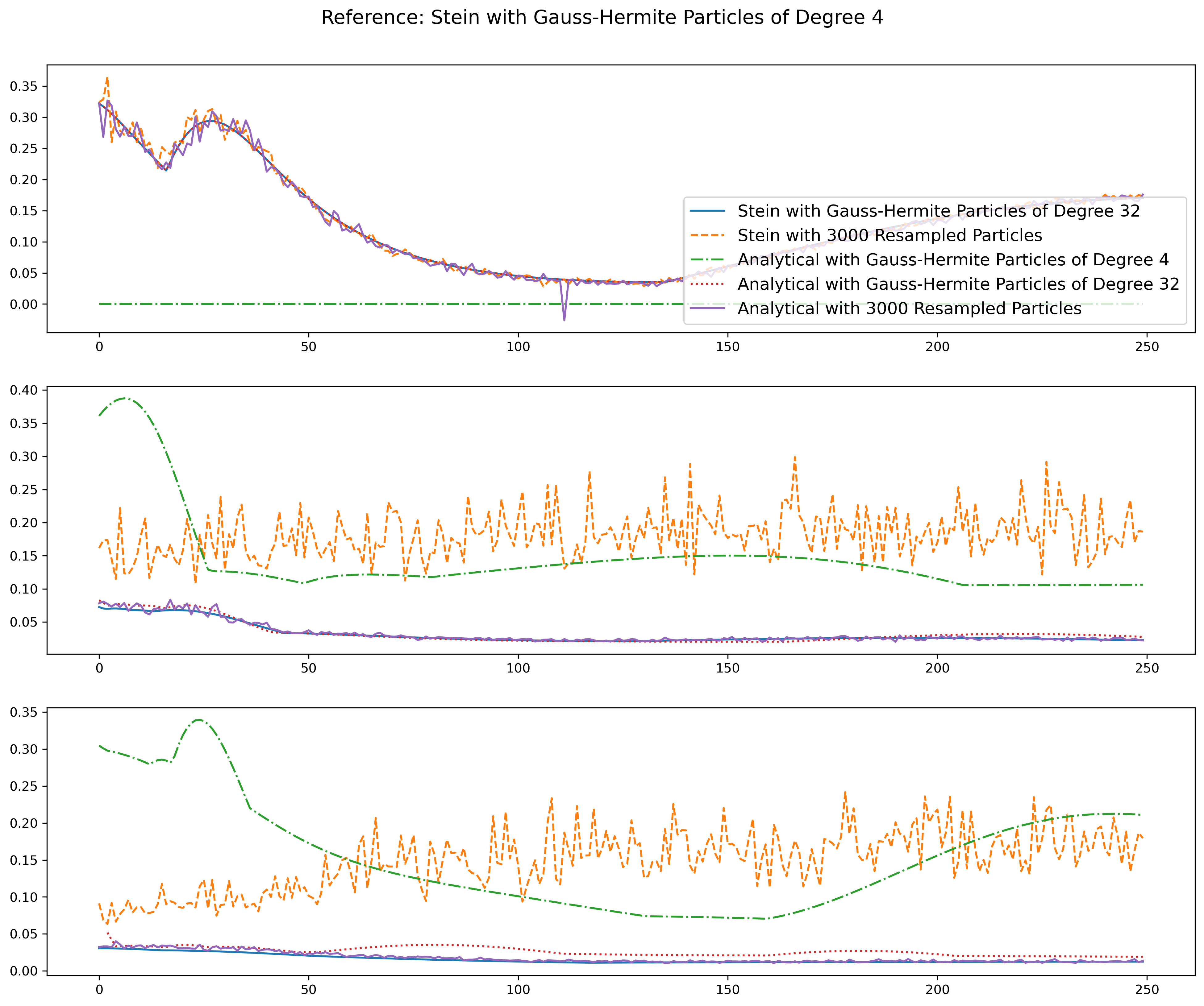}
    \caption{Comparison of expectation evaluations of the $V(\bfx)$ function \eqref{fisher_rao: kl_kernel} for the Gaussian mixture prior case. In each figure, the results obtained by Stein's method \eqref{dif: derivative_free} with Gauss-Hermite particles of degree $32$ and resampled particles are represented by solid blue and dashed orange lines. The results obtained by analytical calculations with Gauss-Hermite particles of degree $4$, Gauss-Hermite particles of degree $32$, and resampled particles are represented by dashed green, dotted red, and solid purple lines, respectively. The top figure shows the difference in the expected $V(\bfx)$ function, the middle plot shows the difference in the expected gradient of $V(\bfx)$ function $\bbE\left[ \nabla_{\bfx} V(\bfx) \right]$, and the bottom plot depicts the difference in the expected Hessian of $V(\bfx)$ function $\bbE\left[ \nabla_{\bfx}^2 V(\bfx) \right]$. The maximum difference between the compared and reference methods across all Gaussian mixture components is reported. The reference method uses Stein's gradient and Hessian with Gauss-Hermite particles of degree $4$.}
    \label{fig: gmm_gradient}
\end{figure}

\subsection{Nonlinear Observation Likelihood}
\label{sec: eval_nonlinear}

In this case, we consider a single Gaussian as the prior density $p(\bfx) = p_{\calN}(\bfx; \hat{\bfx}, P)$, with 
\begin{equation}
    \hat{\bfx} = \begin{bmatrix} 1 \\ 1 \end{bmatrix}, \quad P = \begin{bmatrix} 5.5 & -1.5 \\ -1.5 & 5.5 \end{bmatrix}. 
\end{equation}
The likelihood function is also a Gaussian density $\calN(\bfz; H(\bfx), R)$, with
\begin{equation}
   H(\bfx) = \| \bfx \|, \quad R = 2, \quad \bfx^* = \begin{bmatrix} 4.7 \\ -3.1 \end{bmatrix}, 
\end{equation}
where $\bfx^*$ denotes the true value of $\bfx$ used to generate the observation. 

For our approximated Gaussian mixture Fisher-Rao flow \eqref{gaussian_mixture_approx: natural_para_liouville}, the initial mean parameter for the $k$th Gaussian mixture component $\bfmu^{(k)}_{t=0}$ is sampled from the prior $\bfmu^{(k)}_{t=0} \sim \calN(\hat{\bfx}, P)$. The initial variance parameter of the $k$th Gaussian mixture component is set to $\Sigma^{(k)}_{t=0} = 3P$. The initial weight for the $k$th Gaussian mixture component is set to $\omega^{(k)}_{t=0} = 1 / K$, where $K$ denotes the total number of Gaussian mixture components employed in our approach. The Wasserstein gradient flow method \citep{lambert2022wass} uses the same initialization approach as the approximated Gaussian mixture Fisher-Rao particle flow. For the Gaussian sum particle flow method \citep{zhang2024multisensor}, each Gaussian particle flow component is initialized with $1000$ randomly sampled particles from $\calN(\bfmu^{(k)}, P)$, where $\bfmu^{(k)}$ is sampled from the prior density $\bfmu^{(k)} \sim \calN(\hat{\bfx}, P)$. This initialization method is employed to ensure consistency with the other two methods, as the Gaussian sum particle flow utilizes particles to compute the empirical mean during propagation. The PF-GMM method \citep{pal2017gaussian} and the PFPF-GMM method \citep{pal2018particle} use the same initialization approach as the Gaussian sum particle flow method. This initialization is necessitated by the fact that both PF-GMM and PFPF-GMM are formulated to handle Gaussian mixture prior densities and/or Gaussian mixture likelihood densities. In the nonlinear observation model case, both the prior and the likelihood are single Gaussian densities. As a result, following the original formulations will restrict the posterior approximation to a single Gaussian density, which is inadequate for representing the nonlinear posterior and consequently lead to poor performance. We use the same method described in Section~\ref{sec: eval_gmm} to obtain a parametric representation of the posterior approximation generated by PF-GMM and PFPF-GMM.

Figure~\ref{fig: single_gaussian_nonlinear} shows the results for each method when a single Gaussian is used to approximate the posterior density. In this case, the Gaussian particle flow, the PFPF-GMM, and the Gaussian Fisher-Rao particle flow converge to the area with high posterior probability. However, the Wasserstein gradient flow and the PF-GMM fail to converge to the region with high posterior probability, resulting in a high KL divergence approximation. Figure~\ref{fig: nonlinear} shows the results for each method when a Gaussian mixture with $20$ components is used to approximate the posterior density. In this case, the Gaussian sum particle flow only captures regions with high posterior probability. The Wasserstein gradient flow only captures the shape of the posterior, but it fails to recover the region with high posterior probability, resulting in the highest KL divergence approximation. On the other hand, the PF-GMM, the PFPF-GMM, and the approximated Gaussian mixture particle flow all capture the shape of the posterior. Among these methods, our approximated Gaussian mixture Fisher-Rao particle flow generates the most accurate approximation of the posterior, yielding the lowest KL divergence approximation. The results obtained by PF-GMM and PFPF-GMM are more spread out than the true posterior, resulting in higher KL divergence approximations relative to the approximated Gaussian mixture particle flow.  Figure~\ref{fig: nonlinear_gradient} shows a comparison of the approximation accuracy of the expectation terms. In this test case, we observe that when using Stein's gradient and Hessian \eqref{dif: derivative_free}, Gauss-Hermite particles \eqref{dif: reparameterization_trick} of degree $4$ are sufficient to ensure stable propagation. However, when using the analytical gradient and Hessian, a large particle ensemble is required to achieve accuracy similar to the results obtained using Stein's gradient and Hessian.

%
%
\begin{figure}[t]
    \centering
    \includegraphics[width=0.33\linewidth]{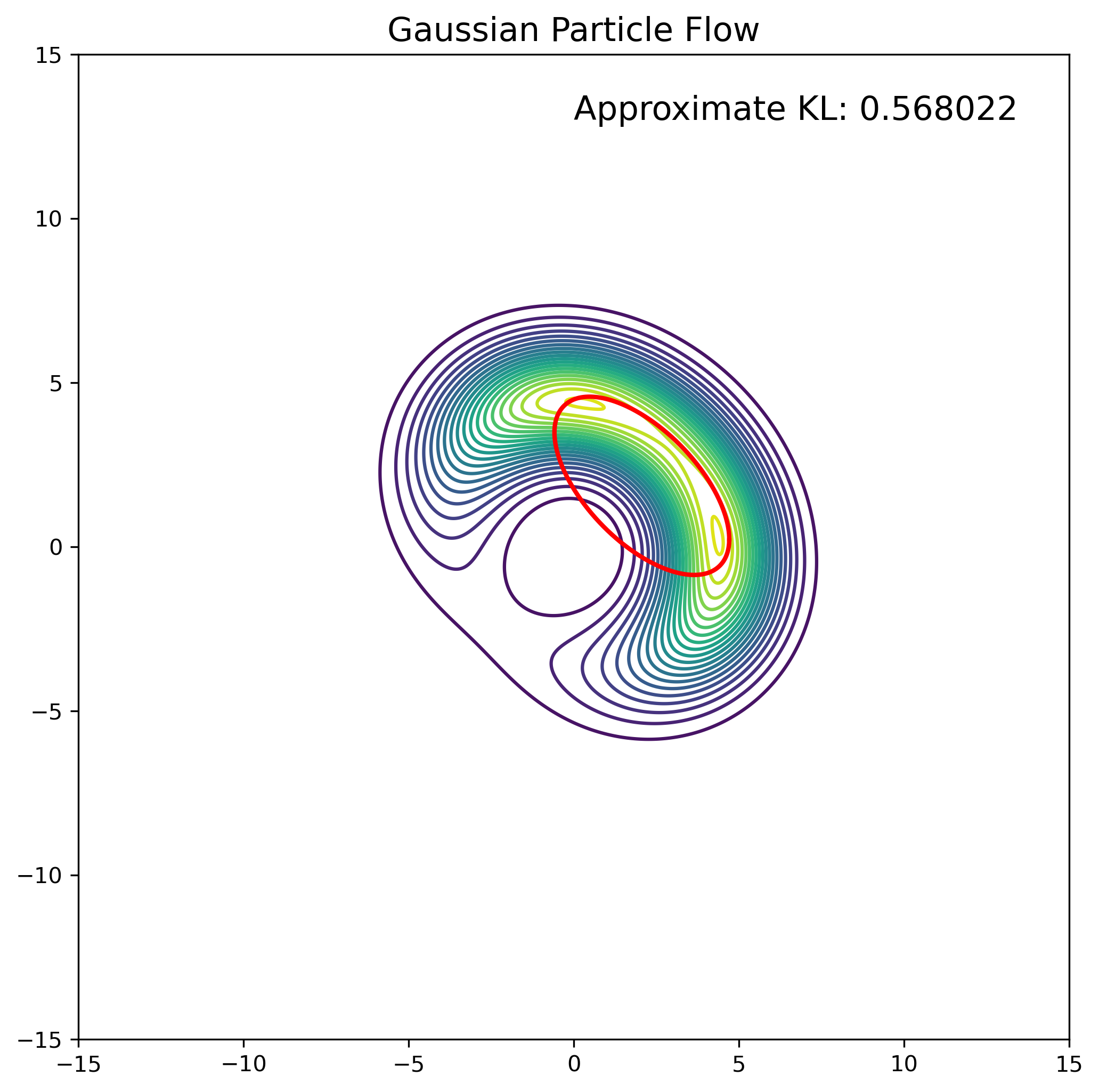}%
    \includegraphics[width=0.33\linewidth]{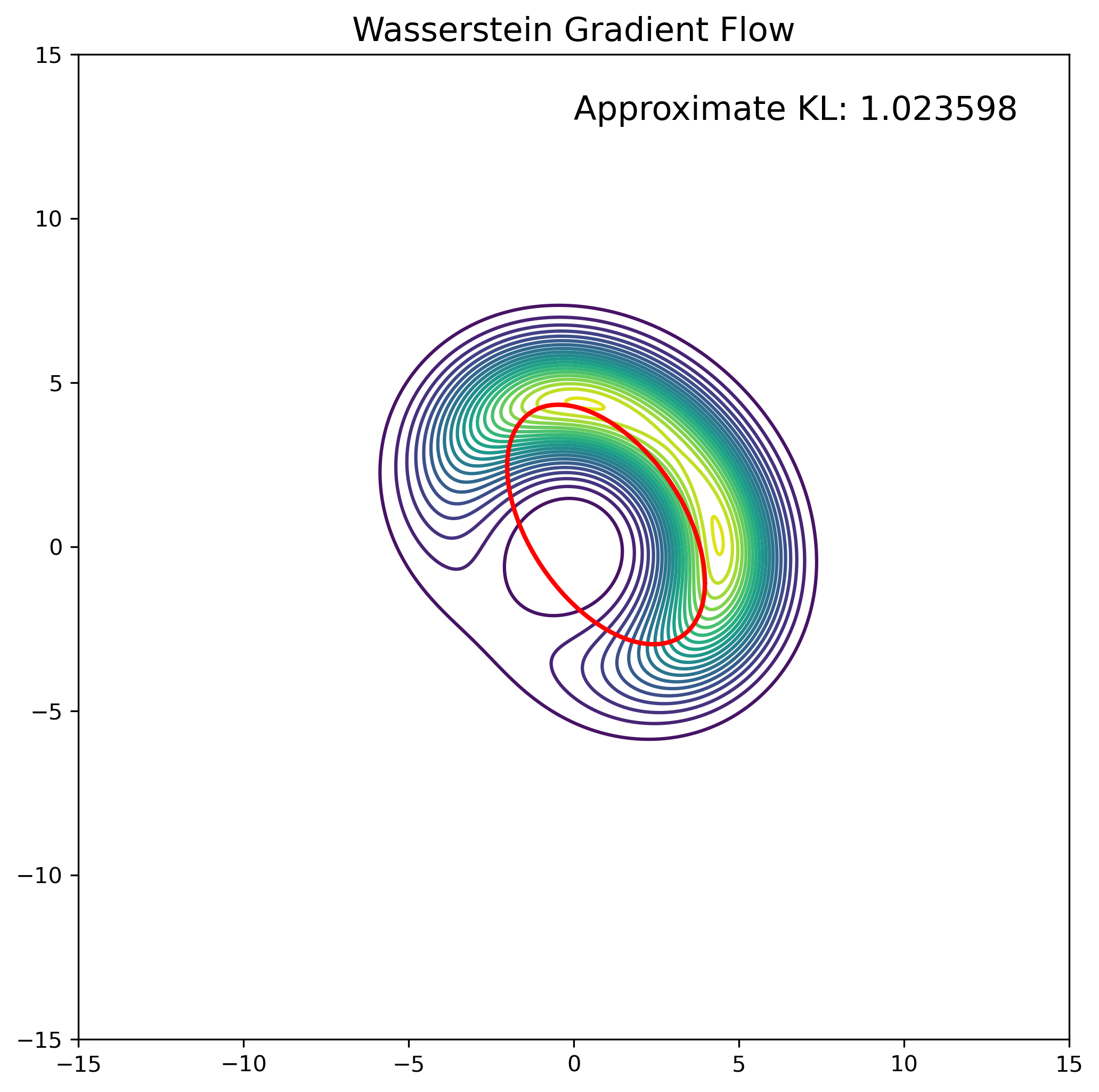}%
    \includegraphics[width=0.33\linewidth]{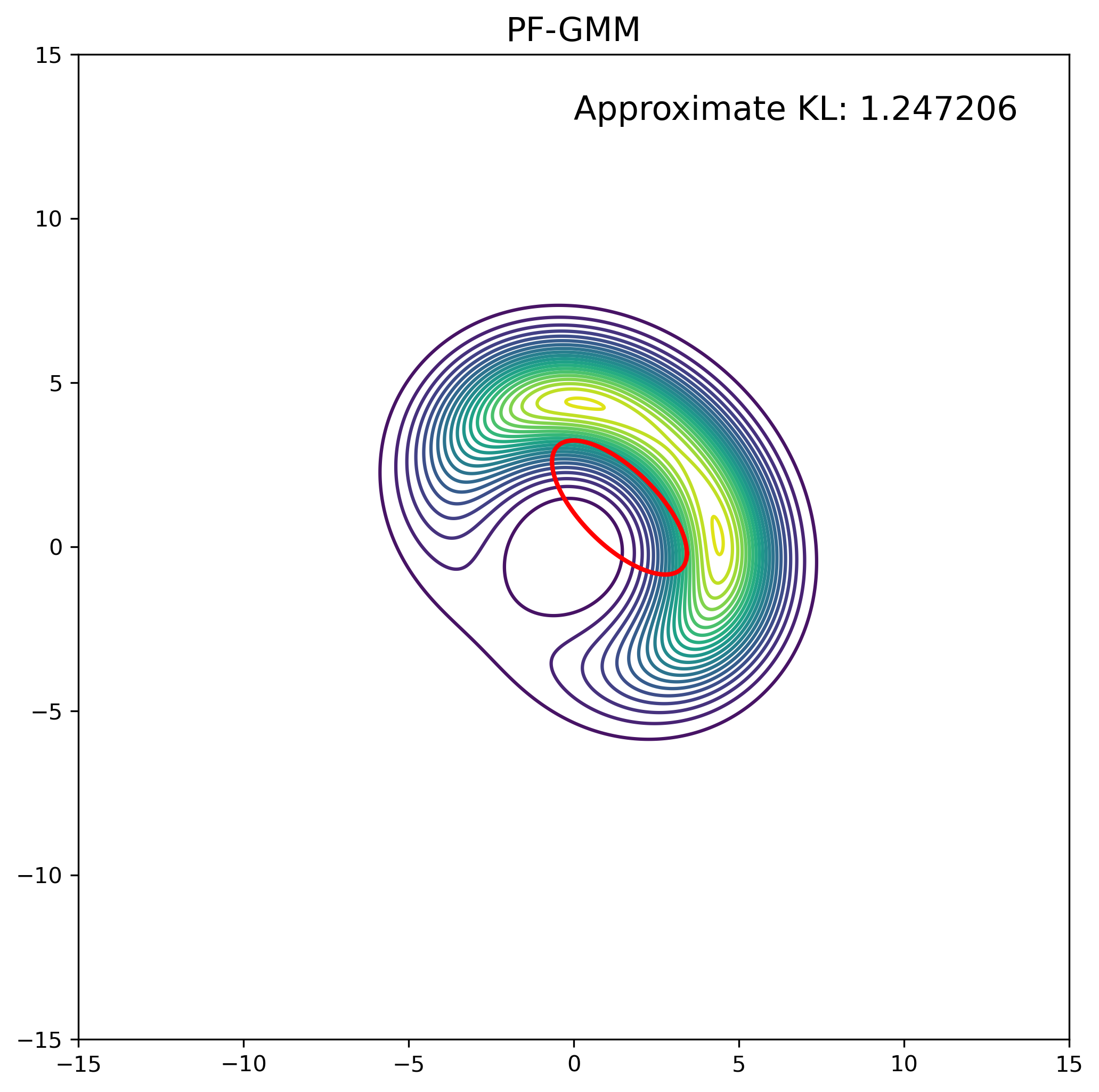}%
    \hfill%
    \includegraphics[width=0.33\linewidth]{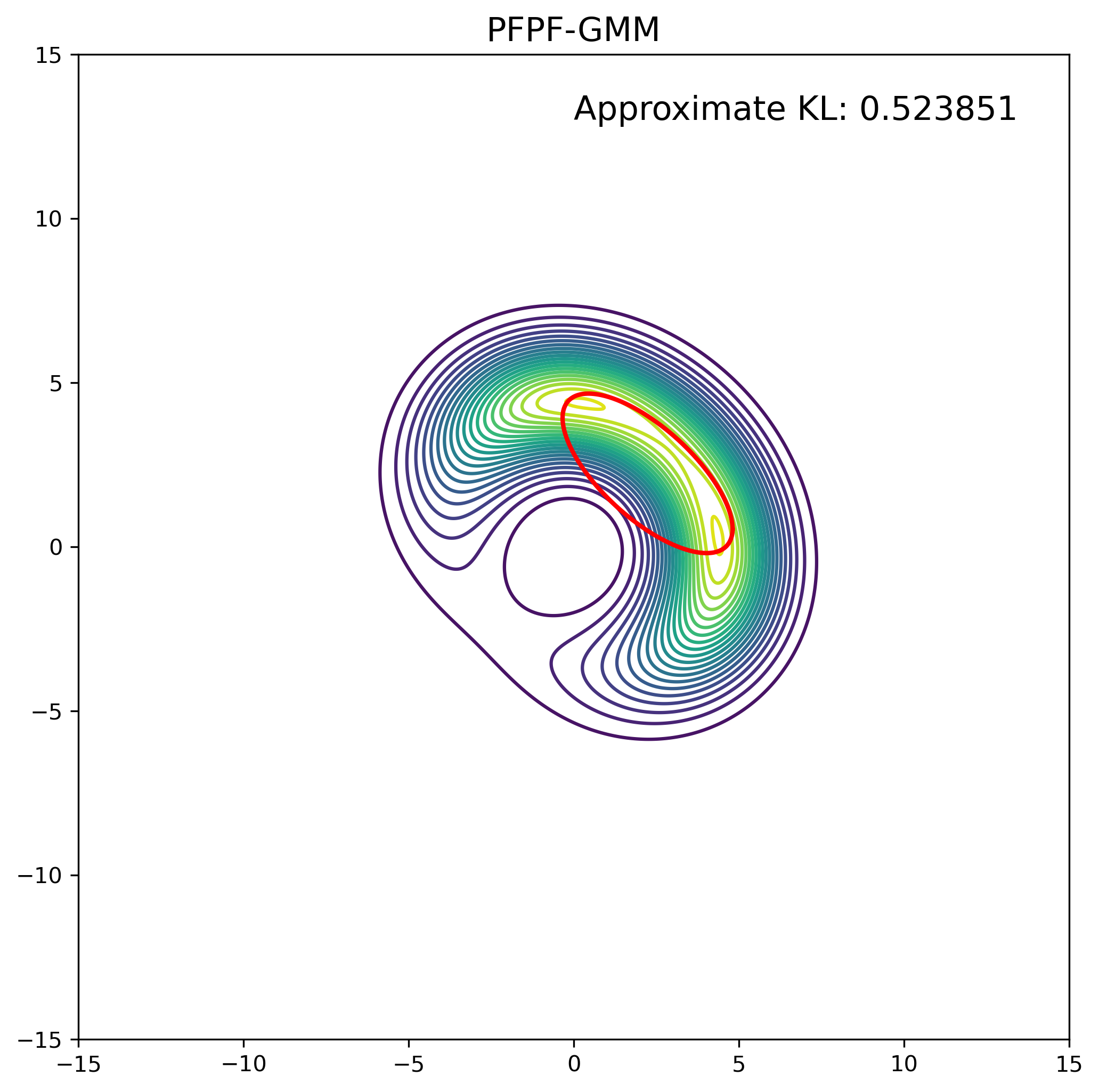}
    \includegraphics[width=0.33\linewidth]{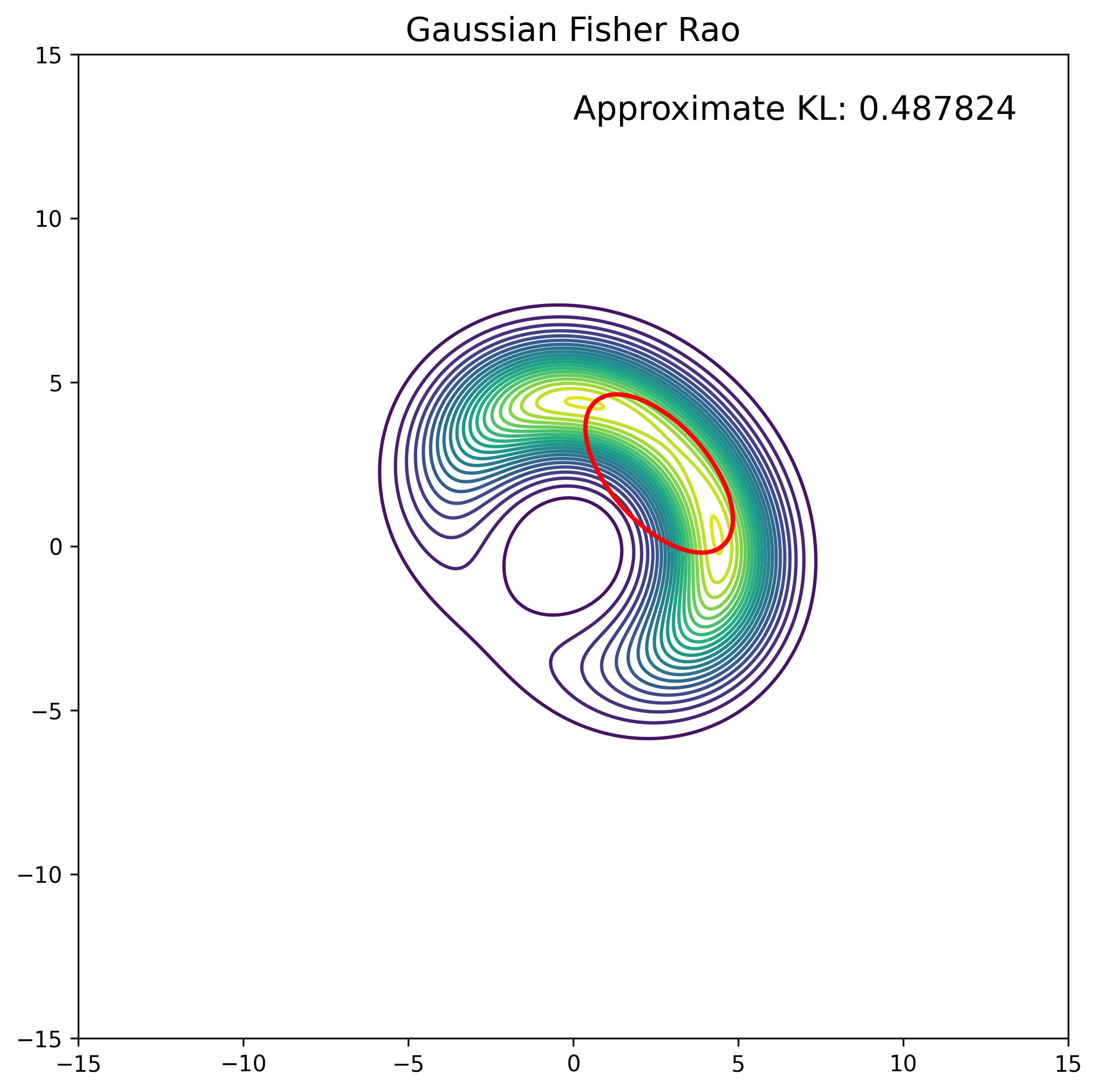}
    \caption{Comparison of the Gaussian Fisher-Rao particle flow \eqref{gaussian_approx: natural_para_liouville} with the Gaussian particle flow \citep{zhang2024multisensor}, the Wasserstein gradient flow \citep{lambert2022wass}, the PF-GMM \citep{pal2017gaussian}, and the PFPF-GMM \citep{pal2018particle} for the nonlinear observation model case. For each method, we use a single Gaussian to approximate the posterior density. The covariance contour corresponding to one Mahalanobis distance is overlaid on the reference contour. The top left figure shows the result obtained by the Gaussian particle flow method, which produces an approximation that is slightly misaligned with the region of highest posterior probability, leading to a higher approximated KL divergence. The top middle figure shows the result obtained by the Wasserstein gradient flow method, which fails to accurately capture the region of the highest posterior probability, resulting in the highest approximated KL divergence. The top right figure shows the result obtained by the PF-GMM method, which fails to capture any region with high posterior probability, leading to the highest KL divergence approximation. The results obtained by the PFPF-GMM and the Gaussian Fisher-Rao particle flow are shown in the bottom right and bottom left figures, respectively. Both methods provide a relatively accurate Gaussian approximation of the posterior density, resulting in comparably low KL divergence approximation.}
    \label{fig: single_gaussian_nonlinear}
\end{figure}

\begin{figure}[tbh]
\centering
\includegraphics[width=0.32\linewidth,valign=t]{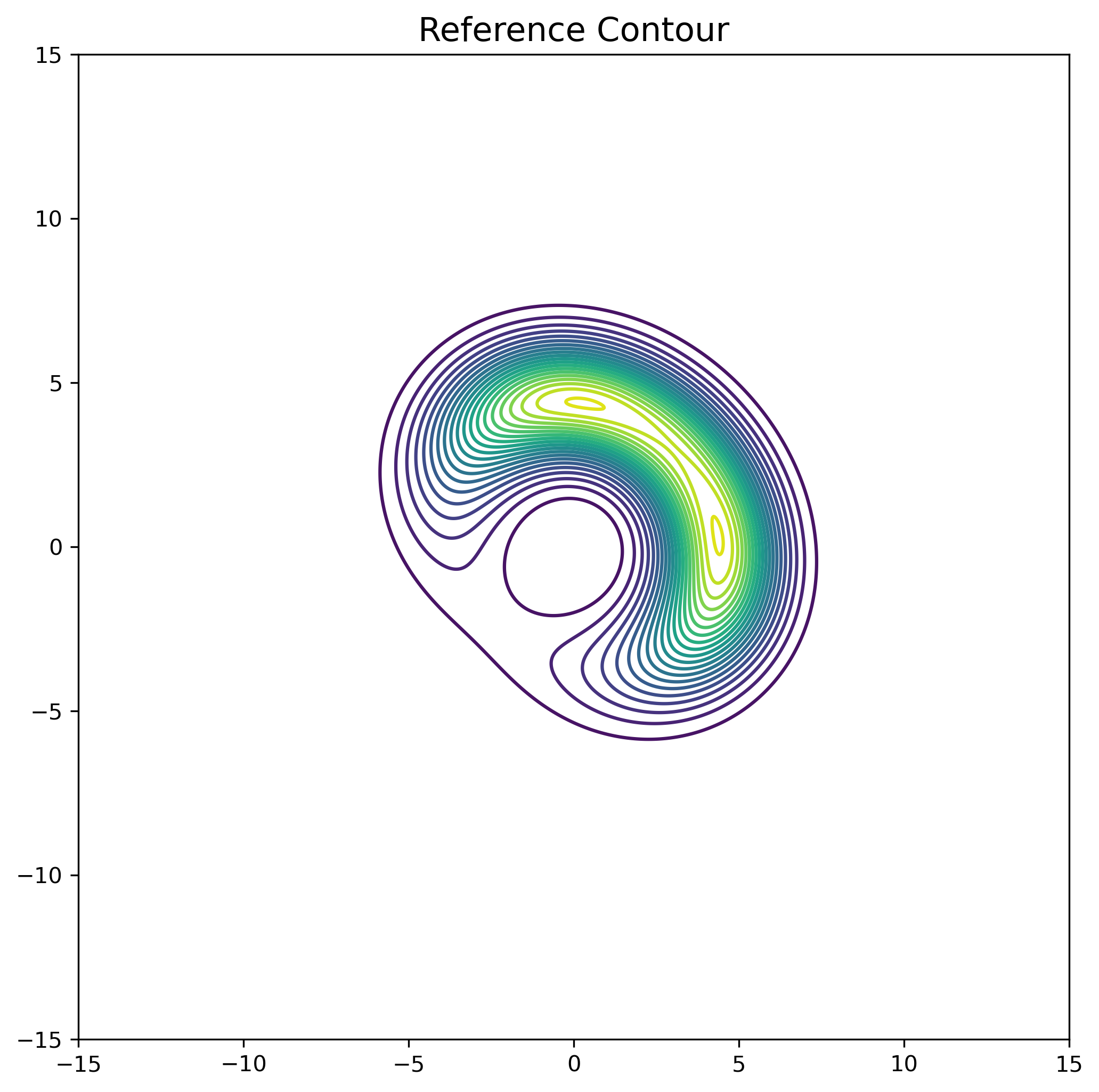}%
\includegraphics[width=0.32\linewidth,valign=t]{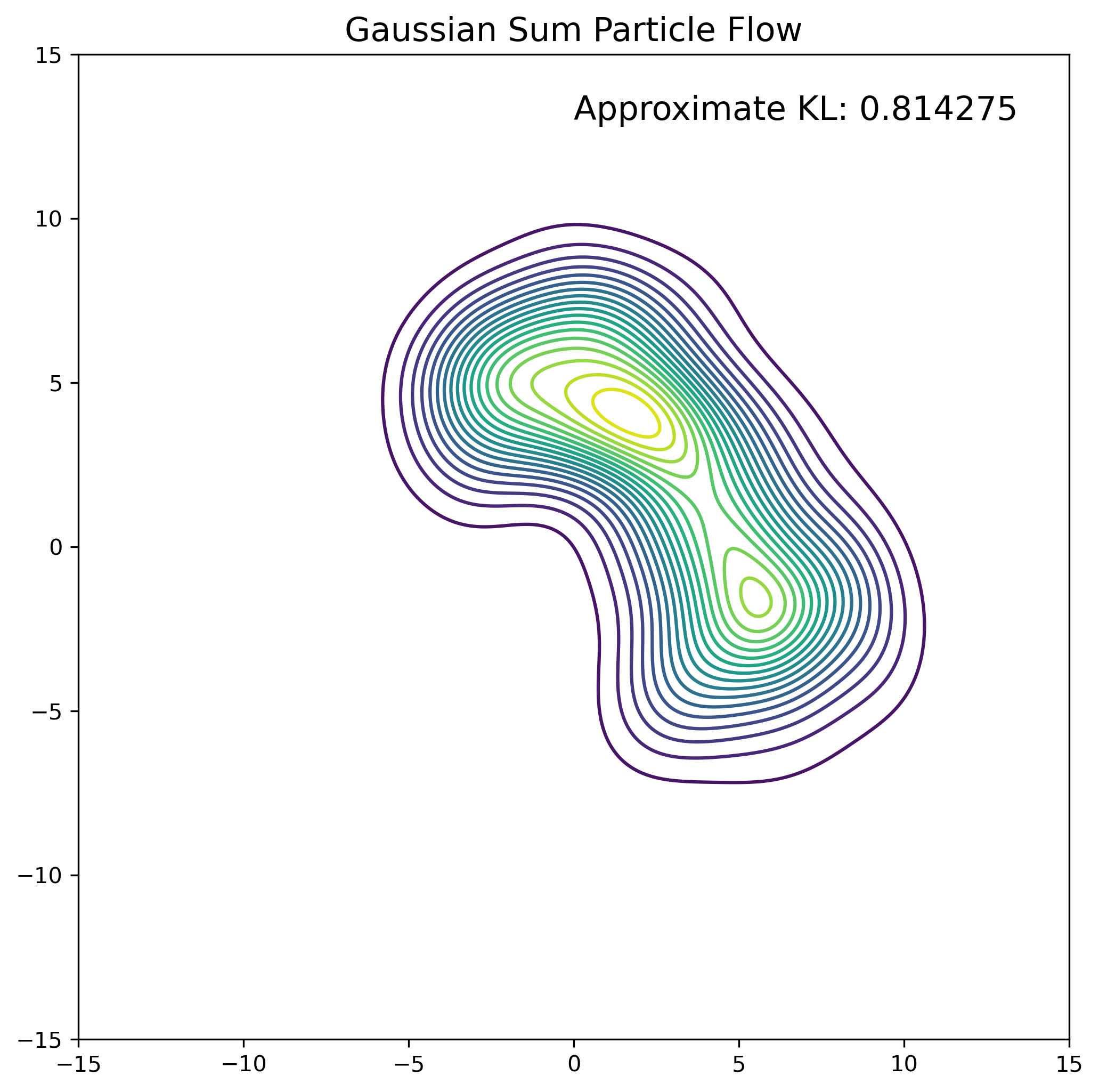}%
\includegraphics[width=0.32\linewidth,valign=t]{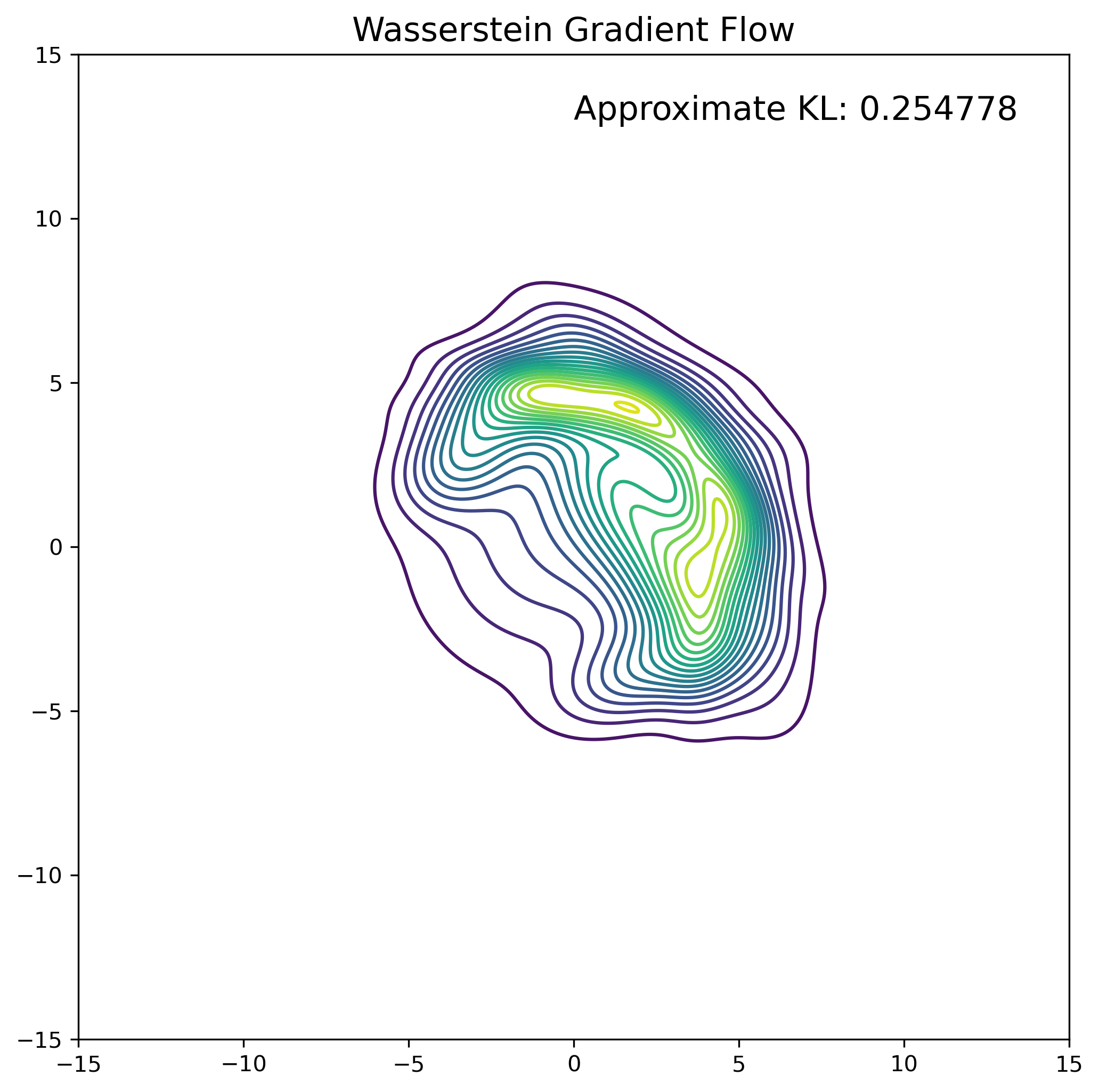}%
\hfill%
\includegraphics[width=0.32\linewidth,valign=t]{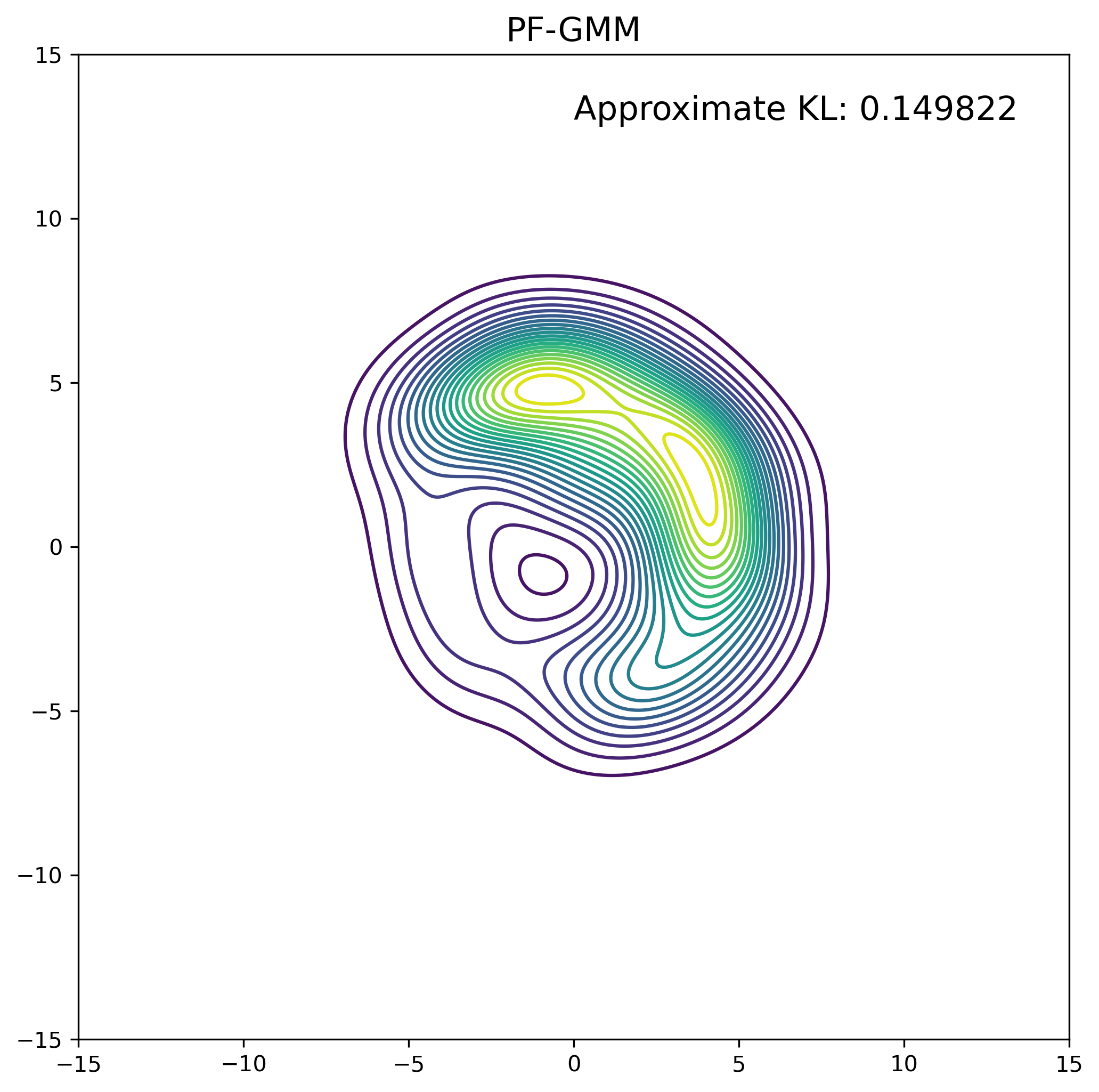}%
\includegraphics[width=0.32\linewidth,valign=t]{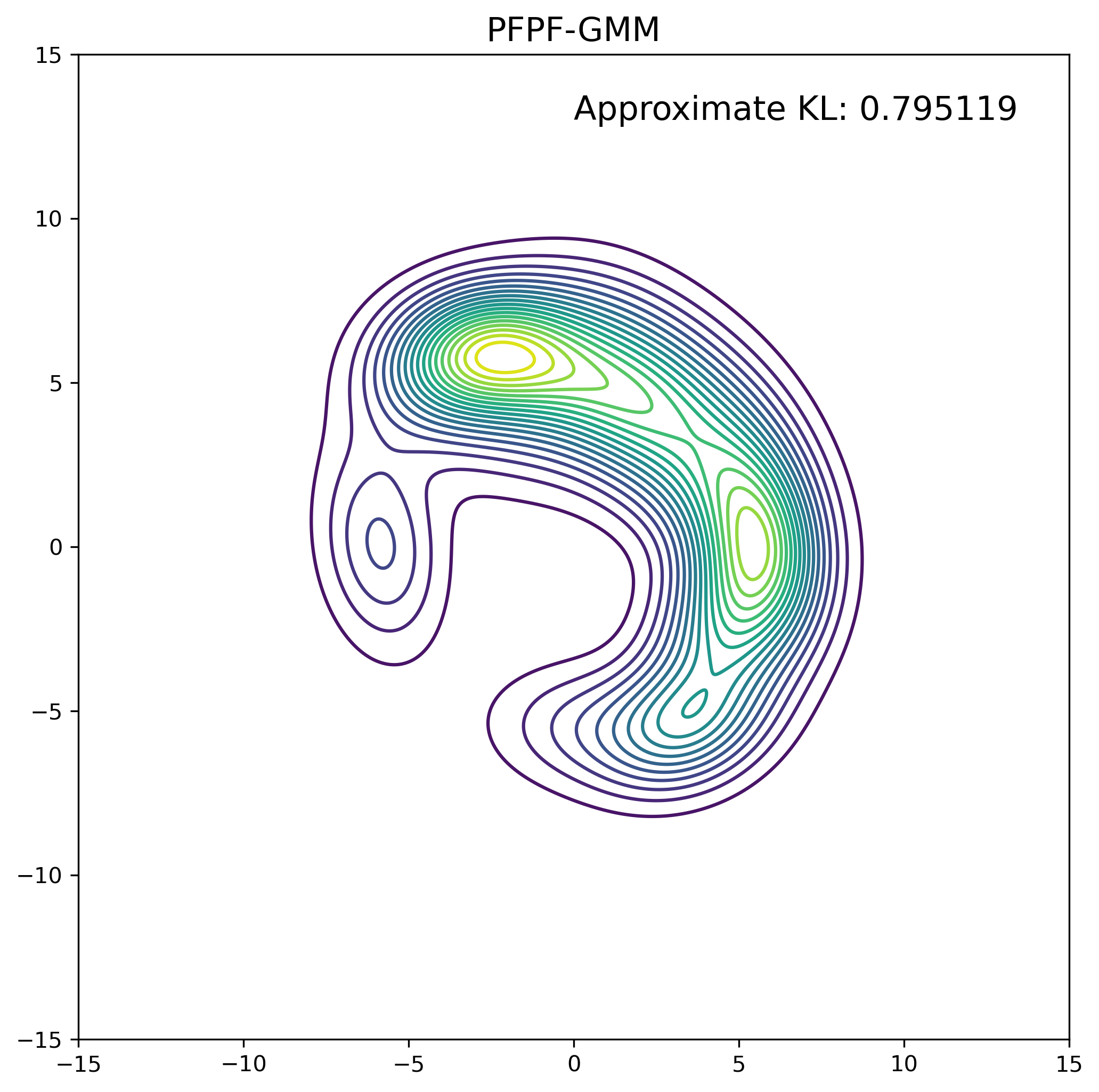}%
\includegraphics[width=0.32\linewidth,valign=t]{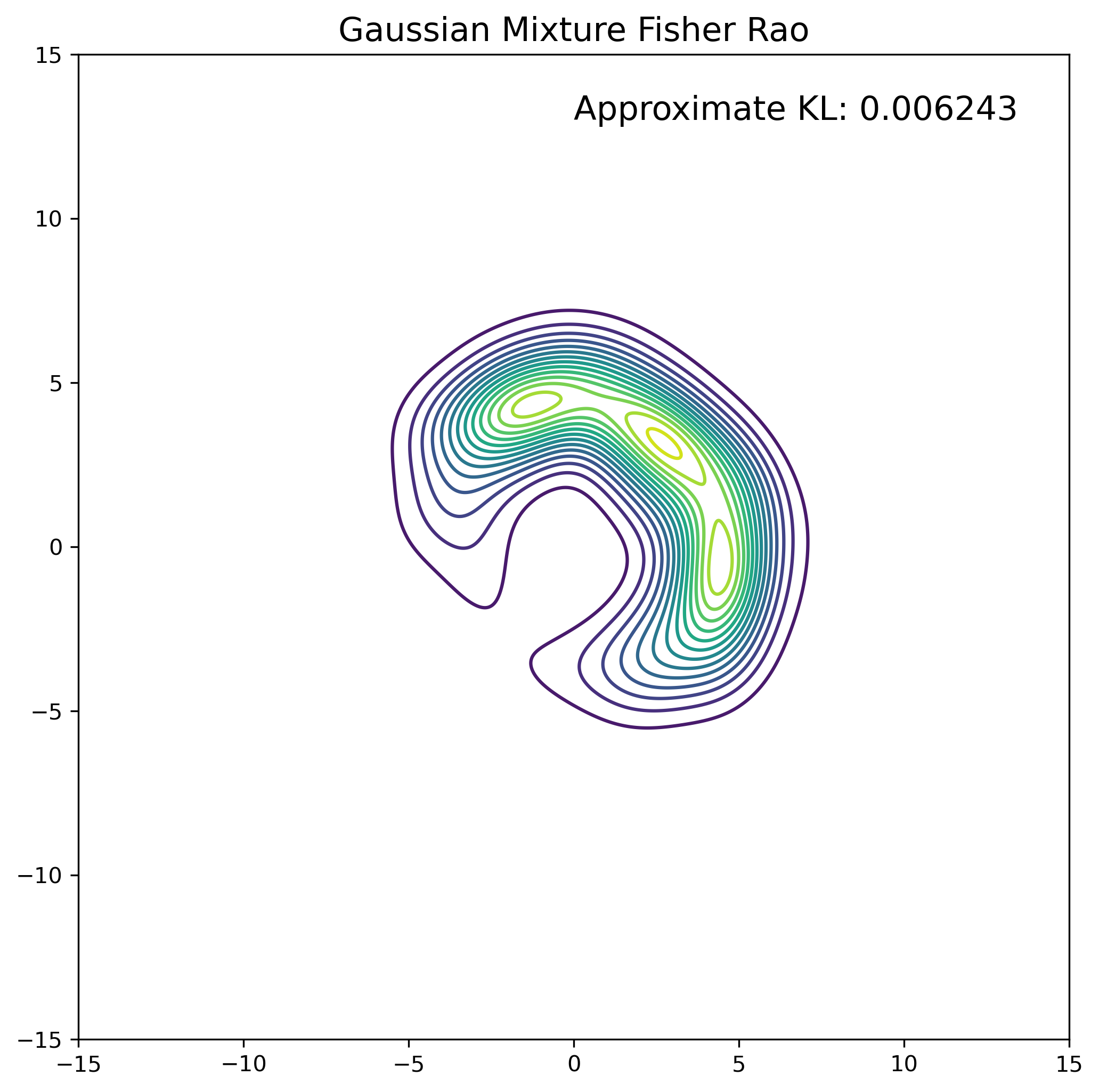}%
\caption{Comparison of the approximated Gaussian mixture Fisher-Rao particle flow \eqref{gaussian_mixture_approx: natural_para_liouville} with the Gaussian sum particle flow \citep{zhang2024multisensor}, the Wasserstein gradient flow \citep{lambert2022wass}, the PF-GMM \citep{pal2017gaussian}, and the PFPF-GMM \citep{pal2018particle} for the nonlinear observation model case. The approximated posterior contour is shown for each method. The top left figure shows the reference contour. The top middle figure shows the contour generated by Gaussian sum particle flow, which fails to capture the shape of the posterior, yielding the highest KL divergence. The top right figure shows the contour generated by Wasserstein gradient flow, which fails to give an accurate approximation of the posterior contour. The bottom left figure shows the contour generated by PF-GMM, which captures the shape of the poster but fails to align the region of high posterior probability. The bottom middle figure shows the contour generated by PFPF-GMM, which captures the overall shape of the posterior but is more spread out. The bottom right figure shows the contour generated by the approximated Gaussian mixture Fisher-Rao particle flow, which captures the reference banana-shaped posterior and yields the lowest KL divergence.}
\label{fig: nonlinear}
\end{figure}

\begin{figure}[tbh]
    \centering
    \includegraphics[width=0.98\linewidth,valign=t]{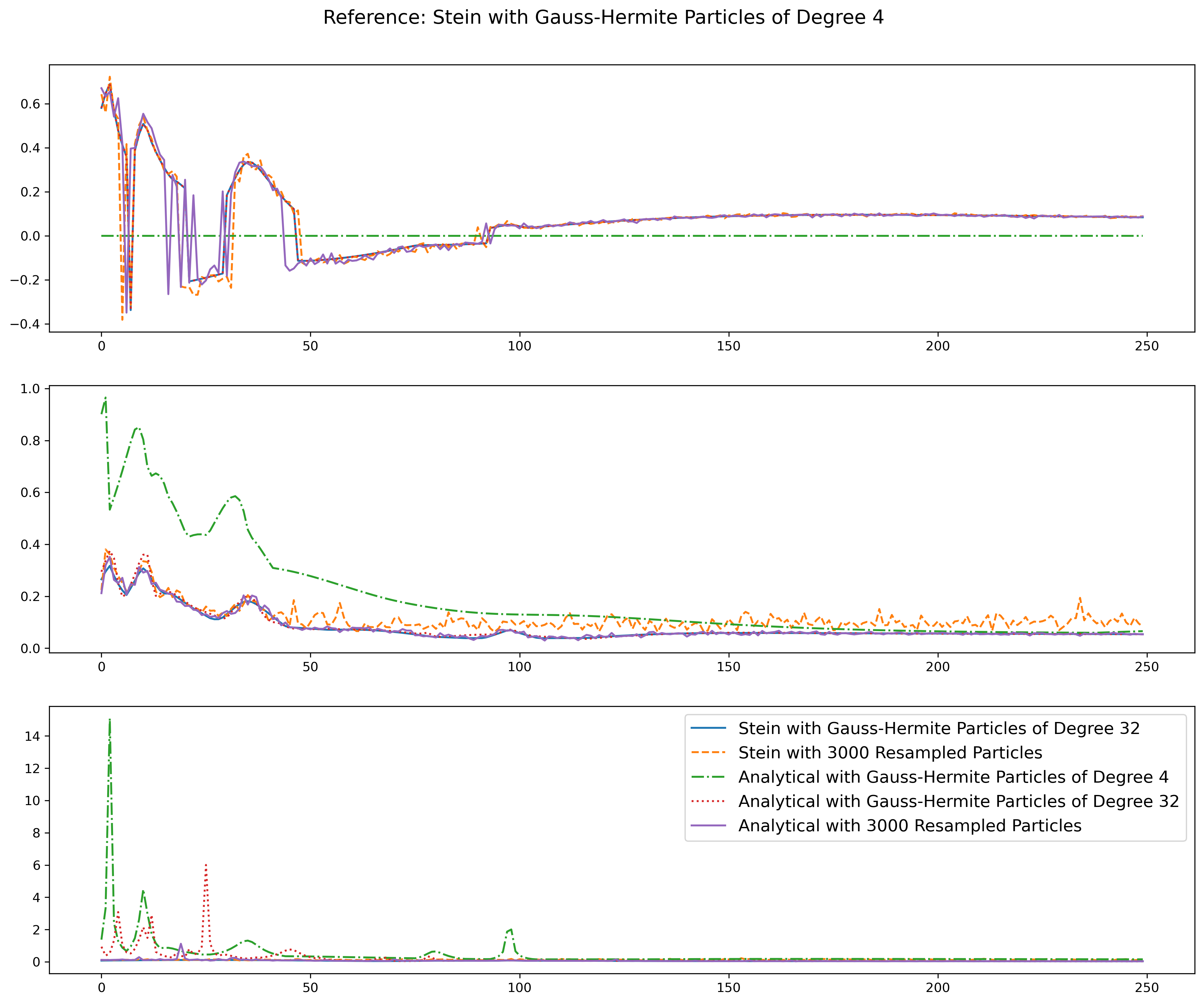}
    \caption{Comparison of expectation evaluations of the $V(\bfx)$ function \eqref{fisher_rao: kl_kernel} for the nonlinear observation model case. In each figure, the results obtained by Stein's method \eqref{dif: derivative_free} with Gauss-Hermite particles of degree $32$ and resampled particles are represented by solid blue and dashed orange lines. The results obtained by analytical calculations with Gauss-Hermite particles of degree $4$, Gauss-Hermite particles of degree $32$, and resampled particles are represented by dashed green, dotted red, and solid purple lines, respectively. The top plot displays the difference in the expected $V(\bfx)$ function, the middle plot shows the difference in the expected gradient of $V(\bfx)$ function $\bbE\left[ \nabla_{\bfx} V(\bfx) \right]$, and the bottom plot shows the difference in the expected Hessian of $V(\bfx)$ function $\bbE\left[ \nabla_{\bfx}^2 V(\bfx) \right]$. The maximum difference between the compared methods and the reference method across all Gaussian mixture components is reported. The reference method uses Stein's gradient and Hessian with Gauss-Hermite particles of degree $4$.}
    \label{fig: nonlinear_gradient}
\end{figure}

%
\subsection{Bayesian Logistic Regression}
In this case, we use an unnormalized posterior generated in the context of Bayesian logistic regression on a two-class dataset $\calZ = \{ (\bfy_i, z_i) \}_{i=1}^N$ provided by \cite{lambert2022wass}. The probability of the binary label $z_i \in \{0, 1\}$ given $\bfy_i \in \bbR^n$ and $\bfx \in \bbR^n$ is defined by the following Bernoulli density:
\begin{equation}
    p(z_i | \bfx; \bfy_i) = \sigma(\bfy_i^{\top} \bfx)^{z_i} (1 - \sigma(\bfy_i^{\top} \bfx))^{1 - z_i},
\end{equation}
where $\sigma(x) = 1 / (1 + \exp(-x))$ is the logistic sigmoid function. The posterior associated with data $\calZ$ starting from an uninformative prior on $\bfx$ is:
\begin{equation}
\label{eval_hd: un_posterior}
    p(\bfx | \calZ) = \frac{1}{c}\prod_{i = 1}^N p(z_i | \bfx; \bfy_i) = \frac{1}{c}p(\bfx, \calZ), 
\end{equation}
where $c$ is the normalization constant. 
In this test, we assume access only to the unnormalized posterior by fixing the normalizing constant $c=1$.

For our Gaussian Fisher-Rao particle flow \eqref{gaussian_approx: natural_para_liouville}, the initial mean parameter $\bfmu_{t=0}$ is sampled from a Gaussian distribution $\bfmu_{t=0} \sim \calN(\mathbf{0}, 5I)$, and the initial variance parameter $\Sigma_{t=0}$ is set to $\Sigma_{t=0} = 5I$. The Wasserstein gradient flow method \citep{lambert2022wass} uses the same initialization approach as the Gaussian Fisher-Rao particle flow. For our approximated Gaussian mixture Fisher-Rao particle flow \eqref{gaussian_mixture_approx: natural_para_liouville}, the initial mean parameter for the $k$th Gaussian mixture component $\bfmu^{(k)}_{t=0}$ is sampled from a Gaussian distribution $\bfmu^{(k)}_{t=0} \sim \calN(\mathbf{0}, 5I)$, the initial variance parameter for the $k$th Gaussian mixture component is set to $\Sigma^{(k)}_{t=0} = 5I$ and the initial weight parameter for the $k$th Gaussian mixture component is set to $\omega^{(k)}_{t=0} = 1 / K$, where $K$ denotes the total number of Gaussian mixture components employed in our approach. Since the KL divergence approximation is inaccurate as $n$ increases significantly, we report the approximated evidence lower bound (ELBO) for each method:
\begin{equation}
    \text{ELBO}(p(\bfx, \calZ) \| q(\bfx)) \approx \frac{1}{N}\sum_{i=1}^{N} \log\left( \frac{p(\bfx_i, \calZ)}{q(\bfx_i)} \right), 
\end{equation}
with $p(\bfx, \calZ)$ given in \eqref{eval_hd: un_posterior}, $\bfx_i$ are particles and $N$ denotes the total number of particles.

When simulating the Fisher-Rao particle flow, it is more efficient to only propagate the mean and the covariance parameters and recover the particles using Theorem~\ref{theorem: gh_preserve} when necessary. Figure~\ref{fig: logistic_reg} shows that using the recovered particles achieves identical trajectories compared to the propagated particles. Moreover, for all test cases, both the Gaussian Fisher-Rao particle flow and the approximated Gaussian mixture particle flow achieve similar ELBO after $100$ iterations, indicating that it is sufficient to use a single Gaussian to approximate the posterior density. For all test cases, both the Gaussian Fisher-Rao particle flow and the approximate Gaussian mixture Fisher-Rao particle flow outperform the Wasserstein gradient flow and achieve a better convergence rate.

%
\begin{figure}[tbh]
\centering
\includegraphics[width=0.98\linewidth,valign=t]{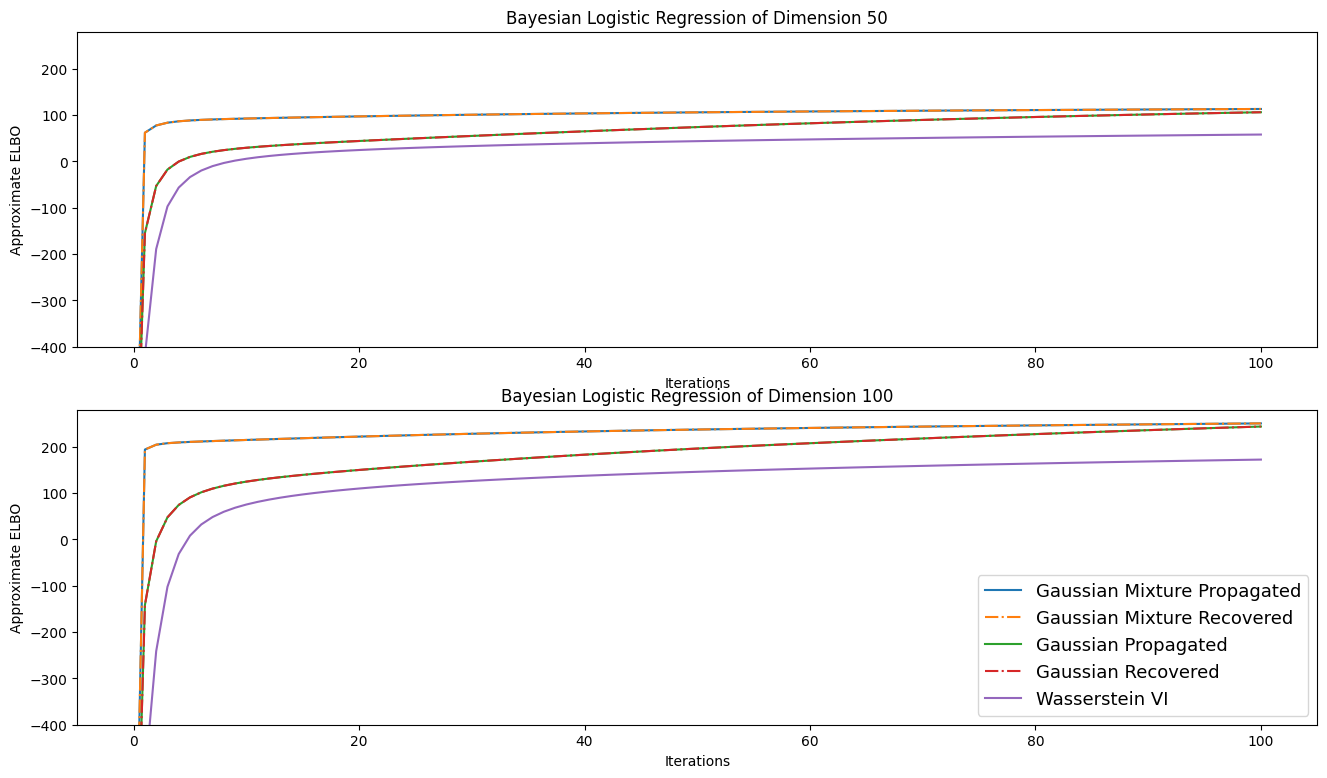}
\caption{Comparison of the Fisher-Rao particle flows \eqref{gaussian_approx: natural_para_liouville}, \eqref{gaussian_mixture_approx: natural_para_liouville} with the Wasserstein gradient flow \citep{lambert2022wass} for the Bayesian logistic regression task with different dimensions. The results obtained by Gaussian Fisher-Rao particle flow \eqref{gaussian_approx: natural_para_liouville} with propagated and recovered particles are represented by solid blue and dashed orange lines. The results obtained by approximated Gaussian mixture Fisher-Rao particle flow \eqref{gaussian_mixture_approx: natural_para_liouville} with propagated and recovered particles are represented by solid green and dashed red lines. Solid purple lines represent the result obtained by the Wasserstein gradient flow method. The approximated ELBO is reported for each method. It can be shown that using the recovered particles from Theorem~\ref{theorem: gh_preserve} yields performance identical to that of the propagated particles. Additionally, we observe that approximating the posterior with a single Gaussian is sufficient for this task, as it achieves nearly identical results to a Gaussian mixture with $5$ components. Our Fisher-Rao particle flows outperform the Wasserstein gradient flow.}
\label{fig: logistic_reg}
\end{figure}

\section{Extension to Non-Gaussian Densities}
\label{sec: Application}

In this section, we show how the Fisher-Rao particle flows described in Algorithms~\ref{alg: gaussian_fr} and~\ref{alg: gaussian_mixture_fr} can be combined with the normalizing flow method \citep{rezende2015variational} to deal with non-Gaussian posterior densities. We begin with a brief overview of normalizing flow, followed by a description of the proposed approach, in which the Fisher-Rao particle flow is used to optimize the base density of the normalizing flow. Once the particle dynamics function associated with the base density has been determined, we then show how to obtain the corresponding particle dynamics function for the variational density parameterized by the normalizing flow. 
Finally, we present numerical experiments that illustrate the effectiveness of the proposed method.

\subsection{Review of Normalizing Flow}
Let $\bfx \in \bbR^n$ be a random variable obtained by applying a transformation $F : \bbR^n \to \bbR^n$ to another random variable $\bfu \in \bbR^n$:
\begin{equation}
    \bfx = F(\bfu), \quad \bfu \sim p_{u}(\bfu), 
\end{equation}
where the density $p_{u}(\bfu)$ is referred to as \emph{base density}. We make the following assumption on the transformation.
\begin{assumption}[Regularity Assumptions for Flow Transformations]
\label{assumption: transformation_regularity}
    The transformation $F: \bbR^n \to \bbR^n$ is invertible, continuously differentiable almost everywhere, and satisfies $\det( \nabla_{\bfu}F(\bfu) ) \neq 0$ for almost every $\bfu \in \bbR^n$.
\end{assumption}

This ensures that the change of variables can be applied to obtain the density of $\bfx$ as:
\begin{equation}\label{eq: nf_density}
    p_x(\bfx) = p_{u}(\bfu) | \det(\nabla_{\bfu}F(\bfu)) |^{-1}.
\end{equation}
In practice, the transformation $F$ is typically specified in parametric form. Common parameterizations include the planar transformation \citep{blessing2024beyond}, given by
\begin{equation}\label{eq: planar}
    F(\bfu) = \bfu + \bfy h(\bfw^{\top} \bfu + b), 
\end{equation}
where $\bfy,\bfw \in \bbR^n$ are parameter vectors, $b \in \bbR$ is a scalar bias term, and $h(\cdot)$ is the hyperbolic tangent function. Another widely used parameterization is the radial transformation \citep{blessing2024beyond}, defined as:
\begin{equation}\label{eq: radial}
    F(\bfu) = \bfu + \frac{\beta}{\alpha + \| \bfu - \bfu_0 \|} (\bfu - \bfu_0), 
\end{equation}
where $\bfu_0 \in \bbR^n$ is a parameter vector, $\alpha \in \bbR^+$ is a positive scalar bias parameter, and $\beta \in \bbR$ is a scalar scaling parameter. 

A single transformation may not be sufficiently expressive to approximate the posterior density in the context of variational inference. Instead, one can specify the transformation as a composition of multiple intermediate transformations as follows:
\begin{equation}
    F(\bfu) = F_{J} \circ F_{J-1} \circ \cdots \circ F_{1}(\bfu), 
\end{equation}
where $F_{j}$ denotes the $j$-th intermediate transformation. Each intermediate transformation may take one of the parametric forms discussed above. In the context of variational inference, normalizing flow provides an alternative means of specifying the variational density. Next, we introduce a particle flow-based normalizing flow formulation, in which the variational density is obtained by applying the transformation in \eqref{eq: nf_density} to a base density $p_{u}(\bfu)$ represented by a set of particles.

\subsection{Particle Flow-Based Normalizing Flow}

To improve the expressiveness of our particle flow formulation beyond Gaussian and Gaussian mixture parameterizations, we represent the variational density $q(\bfx)$ as a composition of a base density $b(\bfu)$ and an invertible transformation $F(\bfu)$ satisfying Assumption~\ref{assumption: transformation_regularity}. The resulting variational density $q(\bfx)$ is given by the push-forward of $b(\bfu)$ through $F$:
\begin{equation}
    q(\bfx) = b(\bfu) | \det(\nabla_{\bfu}F(\bfu)) |^{-1}.
\end{equation}
We parameterize the base density and the transformation as $b(\bfu; \bftheta_u)$ and $F(\bfu; \bftheta_F)$, respectively. Letting $\bftheta = (\bftheta_u, \bftheta_F)$, the resulting normalizing flow-based variational density  admits also a parametric form, $q(\bfx; \bftheta)$. As discussed in Section~\ref{subsec: fisher_rao_flow}, to minimize the KL divergence between the variational density and the posterior density, the evolution of $\bftheta$ is governed by the Fisher-Rao parameter flow given in \eqref{fisher_rao: fr_para_flow}. Given this parameter evolution, the corresponding time derivative of the variational density induces a particle dynamics function $\bfphi(\bfx, t)$ through the Liouville equation:
\begin{equation}
    \frac{\partial q(\bfx; \bftheta_t)}{\partial \bftheta_t} \frac{\d \bftheta_t}{\d t} = - \nabla_{\bfx} \cdot \left( q(\bfx; \bftheta_t) \bfphi(\bfx, t) \right).
\end{equation}
Despite its generality, this approach faces two limitations. First, computing the Fisher–Rao parameter flow requires evaluating the Fisher information matrix \eqref{fisher_rao: fisher_rao_tensor}, which is challenging for normalizing flows due to the complexity of the transformation $F$. Second, as discussed earlier, solving the Liouville equation for the particle dynamics function $\bfphi(\bfx, t)$ is challenging for non-Gaussian variational families, which prevents the derivation of a tractable particle dynamics function for evolving the particles. Moreover, deriving the associated particle flow under the full parameterization typically requires transformation-specific analysis, which further limits the generality of the resulting particle dynamics across different normalizing flow parameterizations.

To address these challenges and gain a better intuition of our proposed method, we temporarily decoupled the optimization of the base density and the transformation. In particular, for a given transformation $F(\bfu; \bar{\bftheta}_F)$, with fixed parameters $\bar{\bftheta}_F$, we show that the Fisher–Rao particle flow can be employed to optimize the base density $b(\bfu;\bftheta_u)$. We begin by expanding the KL divergence between the variational density and the posterior density as follows \citep{nf2021jmlr}:
\begin{align} \label{eq: nf_kl}
    D_{KL}&(q(\bfx; \bftheta) \| p(\bfx | \bfz)) = \int q(\bfx; \bftheta) \log \left( \frac{q(\bfx; \bftheta)}{p(\bfx | \bfz)} \right) \d \bfx \\
    &= \int b(\bfu; \bftheta_u) | \det(\nabla_{\bfu}F(\bfu; \bftheta_F)) |^{-1} \log \left( \frac{b(\bfu; \bftheta_u) | \det(\nabla_{\bfu} F(\bfu; \bftheta_F)) |^{-1}}{p(F(\bfu; \bftheta_F) | \bfz)} \right) \d F(\bfu; \bftheta_F) \\
    &= \int b(\bfu; \bftheta_u) \bigg[ \log(b(\bfu; \bftheta_u)) - \log( | \det(\nabla_{\bfu} F(\bfu; \bftheta_F)) |) - \log(p(F(\bfu; \bftheta_F) | \bfz)) \bigg] \d \bfu \\
    & = \int b(\bfu; \bftheta_u) \bigg[ \log(b(\bfu; \bftheta_u)) - \log(p(\bfu | \bfz; \bftheta_F)) \bigg] \d \bfu \\
    &= D_{KL}(b(\bfu; \bftheta_u) \| p(\bfu | \bfz; \bftheta_F)) ,
\end{align}
where the transformed posterior density $p(\bfu | \bfz; \bftheta_F)$ is obtained by applying a change of variables to transform the posterior on $\bfx$ as:
\begin{equation}\label{eq: induced_base}
    p(\bfu | \bfz; \bftheta_F) = p(F(\bfu; \bftheta_F) | \bfz) | \det(\nabla_{\bfu} F(\bfu; \bftheta_F)) |.
\end{equation}
For fixed $F(\bfu; \bar{\bftheta}_F)$, we can cast the base density optimization as a variational inference problem using $b(\bfu; \bftheta_u)$ to approximate the transformed posterior $p(\bfu | \bfz; \bar{\bftheta}_F)$:
\begin{equation}\label{application: base_optimization}
    \min_{\bftheta_u \in \Theta_u} D_{KL}(b(\bfu; \bftheta_u) \| p(\bfu | \bfz; \bar{\bftheta}_F) ),
\end{equation}
where $\Theta_u$ is the admissible parameter set for the base variational density. Selecting the base density as Gaussian or Gaussian mixture allows the methods developed in the previous sections to be applied. The Fisher–Rao parameter flow \eqref{fisher_rao: fr_para_flow} can be constructed using the methods discussed in Sections~\ref{sec: gaussian_flow} and~\ref{sec: gaussian_mixture_approx}. The resulting parameter evolution induces a time-varying base density $b(\bfu; [\bftheta_u]_t)$ that solves the optimization problem in \eqref{application: base_optimization}. 

Although this method generalizes our particle flow approach to variational densities beyond Gaussian families, we need to address the selection of an appropriate fixed transformation. Next, we deal with the joint optimization of the base density and the transformation. As can be seen from \eqref{eq: nf_kl}, the minimum KL divergence $D_{KL}(b(\bfu; \bftheta_u) \| p(\bfu | \bfz; \bftheta_F))$ can be achieved when the transformed posterior density belongs to the same distribution family as the base density. This observation motivates the consideration of a joint optimization problem over both base density parameters and the transformation parameters:
\begin{equation}\label{application: joint_optimization}
    \min_{\bftheta \in \Theta_u \times \Theta_F} D_{KL}(b(\bfu; \bftheta_u) \| p(\bfu | \bfz; \bftheta_F) ),
\end{equation}
where $\Theta_F$ is the admissible parameter set for the transformation. To solve the optimization problem \eqref{application: joint_optimization}, one can introduce another gradient flow for the transformation parameters $\bftheta_F$ and combine it with the Fisher-Rao parameter flow \eqref{fisher_rao: fr_para_flow} for the base variational density parameters, which leads to the following parameter flow in the joint parameter space:
\begin{equation}\label{application: joint_parameter_flow}
    \frac{\partial \bftheta_t}{\partial t} = \begin{pmatrix} \partial [\bftheta_u]_t / \partial t \\ \partial [\bftheta_F]_t / \partial t \end{pmatrix} = \begin{pmatrix} -\calI^{-1}([\bftheta_u]_t) \nabla_{[\bftheta_u]_t} D_{KL}(b(\bfu; [\bftheta_u]_t) \| p(\bfu | \bfz; [\bftheta_F]_t) ) \\ - \gamma \nabla_{[\bftheta_F]_t} D_{KL}(b(\bfu; [\bftheta_u]_t) \| p(\bfu | \bfz; [\bftheta_F]_t) ) \end{pmatrix}, 
\end{equation}
where $\calI([\bftheta_u]_t)$ denotes the FIM defined in \eqref{fisher_rao: fisher_rao_tensor} associated with the base variational density, and $\gamma > 0$ is a scaling constant introduced to control the magnitude of the gradient associated with the transformation parameters. The parameter flow above induces a time-varying transformation governed by:
\begin{equation} \label{application: tv_transformation}
    \frac{\partial F(\bfu; [\bftheta_F]_t)}{\partial t} = \frac{\partial F(\bfu; [\bftheta_F]_t)}{\partial [\bftheta_F]_t} \frac{\partial [\bftheta_F]_t}{\partial t}. 
\end{equation}
The parameter flow \eqref{application: joint_parameter_flow} combines a natural gradient flow in the base density parameters with a standard gradient flow in the transformation parameters. This construction enables the use of the Fisher-Rao particle flows developed in Sections~\ref{sec: gaussian_flow} and~\ref{sec: gaussian_mixture_approx} to obtain accurate particles $\{[\bfu_j]_t\}_{j=1}^{M}$ from the base variational density. As discussed in Section~\ref{sec: derivative_inverse_free}, in practice, these particles can be used to approximate the gradient of the KL divergence with respect to the joint parameters $\bftheta_t$ as follows:
\begin{equation} \label{application: kl_approximation}
    \nabla_{\bftheta_t} D_{KL}(b(\bfu; [\bftheta_u]_t) \| p(\bfu | \bfz; [\bftheta_F]_t) ) \approx \frac{1}{M} \sum_{i=1}^M \nabla_{\bftheta_t} \frac{b([\bfu_i]_t; [\bftheta_u]_t)}{p([\bfu_i]_t, \bfz; [\bftheta_F]_t)},
\end{equation}
where $p(\bfu, \bfz; \bftheta_F) = p(F(\bfu; \bftheta_F), \bfz) | \det(\nabla_{\bfu} F(\bfu; \bftheta_F)) |$ denotes the transformed joint density. A more detailed discussion of using this particle-based approximation in high-dimensional settings is provided in Appendix~\ref{app:C}. Moreover, the flow \eqref{application: joint_parameter_flow} defines a valid descent direction on the joint parameter space $\Theta_u \times \Theta_F$ for the KL divergence objective. This result is stated below, with proof deferred to Appendix~\ref{app:A}.

\begin{lemma}[Valid Descent Direction]
    Consider the parameter flow \eqref{application: joint_parameter_flow}. If $\frac{\partial \bftheta_t}{\partial t} \neq \mathbf{0}$, then there exists $\epsilon > 0$ such that the following condition holds:
    \begin{equation}
        D_{KL}\left( q\left( \bfx; \bftheta_t + \alpha \frac{\partial \bftheta_t}{\partial t} \right) \middle\|\; p(\bfx | \bfz) \right) < D_{KL}\left( q\left( \bfx; \bftheta_t\right) \| \; p(\bfx | \bfz) \right), \quad \forall \alpha \in (0, \epsilon).
    \end{equation}
\end{lemma}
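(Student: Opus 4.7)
My plan is to reduce the claim to a standard descent-direction argument by exploiting the identity established in the derivation leading to~\eqref{eq: nf_kl}, which shows that minimizing $D_{KL}(q(\bfx;\bftheta) \| p(\bfx | \bfz))$ is equivalent to minimizing
\begin{equation}
    J(\bftheta) \coloneqq D_{KL}(b(\bfu;\bftheta_u) \| p(\bfu | \bfz;\bftheta_F))
\end{equation}
as a smooth function of the joint parameter $\bftheta = (\bftheta_u,\bftheta_F) \in \Theta_u \times \Theta_F$. Once strict decrease of $J$ along the flow direction is established, the desired inequality for $D_{KL}(q(\bfx;\cdot)\|p(\bfx|\bfz))$ follows by re-applying~\eqref{eq: nf_kl}.

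The first step is to compute the directional derivative of $J$ at $\bftheta_t$ along the flow direction prescribed by~\eqref{application: joint_parameter_flow}. By the chain rule and the block structure of $\bftheta$, I would obtain
\begin{align}
    \left. \frac{d}{d\alpha} J\!\left(\bftheta_t + \alpha \frac{\partial \bftheta_t}{\partial t}\right) \right|_{\alpha=0} &= (\nabla_{[\bftheta_u]_t} J)^{\top} \frac{\partial [\bftheta_u]_t}{\partial t} + (\nabla_{[\bftheta_F]_t} J)^{\top} \frac{\partial [\bftheta_F]_t}{\partial t} \\
    &= -(\nabla_{[\bftheta_u]_t} J)^{\top} \calI^{-1}([\bftheta_u]_t) \nabla_{[\bftheta_u]_t} J - \gamma \,\| \nabla_{[\bftheta_F]_t} J \|^2.
\end{align}

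The second step is to argue that both terms on the right-hand side are nonpositive and that, under the hypothesis $\partial_t \bftheta_t \neq \mathbf{0}$, at least one of them is strictly negative. Positive definiteness of the Fisher information matrix $\calI([\bftheta_u]_t)$ in~\eqref{fisher_rao: fisher_rao_tensor}, and hence of its inverse, makes the first term strictly negative whenever $\nabla_{[\bftheta_u]_t} J \neq \mathbf{0}$; this is equivalent to $\partial_t [\bftheta_u]_t \neq \mathbf{0}$ because $\calI^{-1}([\bftheta_u]_t)$ is invertible. The constant $\gamma > 0$ makes the second term strictly negative whenever $\partial_t [\bftheta_F]_t \neq \mathbf{0}$. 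Since the assumption $\partial_t \bftheta_t \neq \mathbf{0}$ forces at least one of these two sub-directions to be nonzero, the directional derivative of $J$ at $\bftheta_t$ is strictly negative.

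Finally, by continuity of $\nabla J$, which follows from the regularity of $F$ in $\bftheta_F$ imposed in Assumption~\ref{assumption: transformation_regularity} together with smoothness of the Gaussian or Gaussian mixture base parameterization in $\bftheta_u$, a first-order Taylor expansion yields the existence of $\epsilon > 0$ such that $J(\bftheta_t + \alpha \partial_t \bftheta_t) < J(\bftheta_t)$ for every $\alpha \in (0,\epsilon)$, and re-applying~\eqref{eq: nf_kl} gives the stated inequality. The main subtlety I anticipate is technical rather than conceptual: justifying that $J$ is continuously differentiable in a neighborhood of $\bftheta_t$ requires a dominated-convergence-type argument to interchange differentiation and integration in the integral defining $J$, in particular to handle the $\log|\det \nabla_{\bfu} F(\bfu;\bftheta_F)|$ term. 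This step is routine under Assumption~\ref{assumption: transformation_regularity} but is the only part beyond the pure linear-algebraic descent-direction argument.
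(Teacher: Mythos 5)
Your proposal is correct and follows essentially the same route as the paper's proof: both reduce the claim to showing that the flow direction $-W\nabla_{\bftheta_t}D_{KL}$ with the block-diagonal, positive-definite weighting $W = \diag(\calI^{-1}([\bftheta_u]_t), \gamma I)$ has strictly negative inner product with the gradient, and then invoke the standard first-order descent argument (the paper cites Theorem~2.2 of Nocedal and Wright where you spell out the Taylor expansion). Your explicit appeal to the identity~\eqref{eq: nf_kl} and your remark on differentiability of the $\log|\det \nabla_{\bfu}F|$ term are reasonable additions, but the core argument is identical.
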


With the parameter flow \eqref{application: joint_parameter_flow} defining a valid descent direction for the KL divergence, we next show that, given a particle dynamics function associated with the base variational density $b(\bfu)$ that satisfies the Liouville equation, one can construct a corresponding particle dynamics function for the variational density.

\paragraph{Particle Dynamics Function Construction}
Now, consider a particle dynamics function $\bfphi_u(\bfu, t)$ that satisfies the Liouville equation associated with the Fisher-Rao parameter flow. Our goal is to derive the corresponding particle dynamics function for the transformed variational density $q(\bfx; \bftheta_t)$. To this end, we first introduce a weak form of the Liouville equation. 

\begin{definition}[Weak Form of the Liouville Equation \citep{ambrosio2005gradient}]
    A particle dynamics function $\bfphi(\bfx, t)$ satisfies the Liouville equation \eqref{prelim: liouville} associated with density $p(\bfx; t)$ in the sense of distributions if the following condition holds:
    \begin{equation}
        \int_0^T \int_{\bbR^n} \left(\nabla_t \varphi(\bfx, t) + \bfphi^{\top}(\bfx, t) \nabla_{\bfx} \varphi(\bfx, t) \right) p(\bfx; t) \d \bfx \d t = 0, \quad \forall \varphi \in C^{1}_{c}(\bbR^n \times (0, T)), 
    \end{equation}
    where $C^{1}_{c}(\bbR^n \times (0, T))$ denotes the space of continuously differentiable functions with compact support in $\bbR^n \times (0, T)$.
\end{definition}
With the definition above, we can construct a transformed particle dynamics that satisfies the Liouville equation of the transformed variational density $q(\bfx; \bftheta_t)$ in the sense of distributions. 
\begin{proposition}[Transformed Particle Dynamics Function]\label{prop: transformed_pf}
    Let $\bfphi_u(\bfu, t)$ be a particle dynamics function for the base variational density satisfying:
    \begin{equation} \label{application: liouville_base}
        \nabla_t b(\bfu; [\bftheta_u]_t) = \frac{\partial b(\bfu; [\bftheta_u]_t)}{\partial [\bftheta_u]_t} \frac{\partial [\bftheta_u]_t}{\partial t} = - \nabla_{\bfu} \cdot \left(b(\bfu; [\bftheta_u]_t) \bfphi_u(\bfu, t) \right),
    \end{equation}
    for all $\bfu \in \bbR^n$ and $t \in [0, T]$, where the time derivative of the base variation density parameters $[\bftheta_u]_t$ is specified by the parameter flow \eqref{application: joint_parameter_flow}. Suppose the transformation $F(\bfu; [\bftheta_F]_t)$ is proper, i.e., the preimage $U = F^{-1}(X; [\bftheta_F]_t)$ of every compact set $X \subset \bbR^n$ is also compact. Furthermore, assume that $F(\bfu; [\bftheta_F]_t)$ is continuously differentiable in its parameters, and that the gradient of the KL divergence with respect to the transformation parameters $\nabla_{[\bftheta_F]_t} D_{KL}(b(\bfu; [\bftheta_u]_t) \| p(\bfu | \bfz; [\bftheta_F]_t) )$ is locally Lipschitz. Under these conditions, the transformed particle dynamics function induced by the time-varying transformation $F(\bfu; [\bftheta_F]_t)$ in \eqref{application: tv_transformation} is given by:
    \begin{equation} \label{application: transformed_pf}
        \bfphi_{F}(\bfx, t) = \left[ \frac{\partial F(\bfu; [\bftheta_F]_t)}{\partial t} + \nabla_{\bfu} F(\bfu; [\bftheta_F]_t) \bfphi_u(\bfu, t) \right] \Bigg|_{\bfu = F^{-1}(\bfx; [\bftheta_F]_t)},
    \end{equation}
    which satisfies the following Liouville equation in the sense of distributions:
    \begin{equation}
        \frac{\partial q(\bfx; \bftheta_t)}{\partial t} = - \nabla_{\bfx} \cdot \left( q(\bfx; \bftheta_t) \bfphi_F(\bfx, t) \right). 
    \end{equation}
\end{proposition}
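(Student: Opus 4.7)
The plan is to establish the weak Liouville equation for $q(\bfx;\bftheta_t)$ with velocity field $\bfphi_F$ by pulling back the integral under $\bfx = F(\bfu; [\bftheta_F]_t)$ and reducing it to the weak Liouville equation for the base density $b(\bfu;[\bftheta_u]_t)$ with $\bfphi_u$, which is guaranteed by the hypothesis~\eqref{application: liouville_base}. I would first fix an arbitrary test function $\varphi \in C^1_c(\bbR^n \times (0,T))$ and introduce its pull-back $\tilde{\varphi}(\bfu, t) = \varphi(F(\bfu; [\bftheta_F]_t), t)$. The first obligation is to verify that $\tilde{\varphi}$ is itself a valid test function. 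Its $C^1$ regularity is inherited from $\varphi$ and from the joint continuous differentiability of $F$ in $\bfu$ and in its parameters, combined with the fact that the local Lipschitz assumption on $\nabla_{[\bftheta_F]_t} D_{KL}$ and standard ODE theory make $t \mapsto [\bftheta_F]_t$ continuously differentiable. Compact support in $\bbR^n \times (0,T)$ follows from the properness of $F(\cdot;[\bftheta_F]_t)$ together with the continuity of $[\bftheta_F]_t$ over the compact time support $[t_1,t_2]\subset (0,T)$ of $\varphi$, which renders $\bigcup_{t \in [t_1, t_2]} F^{-1}(\mathrm{supp}_{\bfx}\varphi;\; [\bftheta_F]_t)$ a bounded subset of $\bbR^n$.

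With $\tilde{\varphi}$ in hand, I would use the change of variables $\bfx = F(\bfu;[\bftheta_F]_t)$ at each fixed $t$, together with the identity $q(\bfx;\bftheta_t)\,\d\bfx = b(\bfu;[\bftheta_u]_t)\,\d\bfu$ that follows from~\eqref{eq: nf_density}. The chain rule yields
\begin{equation*}
\nabla_t \tilde{\varphi} = (\nabla_t \varphi)(F,t) + (\nabla_{\bfx}\varphi)^{\top} \partial_t F, \qquad \nabla_{\bfu}\tilde{\varphi} = (\nabla_{\bfu} F)^{\top} \nabla_{\bfx}\varphi.
\end{equation*}
Substituting the definition~\eqref{application: transformed_pf} of $\bfphi_F$ produces the key cancellation
\begin{equation*}
(\nabla_t\varphi + \bfphi_F^{\top} \nabla_{\bfx}\varphi)\big|_{\bfx = F(\bfu;[\bftheta_F]_t)} = \nabla_t \tilde{\varphi}(\bfu,t) + \bfphi_u^{\top}(\bfu,t)\,\nabla_{\bfu}\tilde{\varphi}(\bfu,t),
\end{equation*}
where the $\partial_t F$ contributions from the two sides precisely annihilate each other.

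To close the argument, I would multiply the strong Liouville equation~\eqref{application: liouville_base} by $\tilde{\varphi}$, integrate over $\bbR^n \times (0,T)$, and integrate by parts: in $\bfu$, the boundary term vanishes by the compact spatial support of $\tilde{\varphi}$; in $t$, it vanishes because $\tilde{\varphi}$ vanishes at $t=0$ and $t=T$. This gives $\int_0^T \int_{\bbR^n} (\nabla_t \tilde{\varphi} + \bfphi_u^{\top} \nabla_{\bfu} \tilde{\varphi})\, b \, \d\bfu \d t = 0$, which combined with the chain-rule identity above produces the desired weak Liouville identity for $q(\bfx;\bftheta_t)$ and $\bfphi_F$.

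I expect the main obstacle to be the rigorous verification that $\tilde{\varphi}$ is a legitimate test function, specifically establishing a uniform bounded envelope for the pull-backs $F^{-1}(\mathrm{supp}_{\bfx}\varphi;[\bftheta_F]_t)$ as $t$ varies over the compact time support of $\varphi$. Once this topological point is secured via the properness of $F$ and continuity of the parameter curve, the remainder of the argument is a deterministic chain-rule manipulation together with a routine integration by parts.
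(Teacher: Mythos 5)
Your proposal is correct and follows essentially the same route as the paper's proof in Appendix~B: establish the weak form of the Liouville equation for the base density, pull back an arbitrary test function through $F(\cdot;[\bftheta_F]_t)$ (using properness, smoothness of $F$, and the local Lipschitz assumption on the parameter flow to keep the pullback in $C^1_c$), apply the chain rule so that the $\partial_t F$ contributions combine into $\bfphi_F$, and change variables via $q(\bfx;\bftheta_t)\,\d\bfx = b(\bfu;[\bftheta_u]_t)\,\d\bfu$ to land on the weak Liouville identity for $q$ and $\bfphi_F$. Your extra care about the uniform bounded envelope of $F^{-1}(\mathrm{supp}_{\bfx}\varphi;[\bftheta_F]_t)$ over the compact time support is a slightly more explicit rendering of the same compactness argument the paper invokes.
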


We defer the proof of the result to Appendix~\ref{app:B}. Proposition~\ref{prop: transformed_pf} indicates that, to retrieve the transformed particles $[\bfx_i]_t$ at time $t$ evolving according to the particle flow \eqref{prelim: particle_dynamics} defined by the transformed particle dynamics function \eqref{application: transformed_pf}, it suffices to propagate the base variational density particles $\bfu_i$ through the particle flow \eqref{prelim: particle_dynamics} governed by the particle dynamics function associated with the base variational density $\bfphi_u(\bfu, t)$ and apply the transformation $F(\bfu_i; [\bftheta_F]_t)$. Specifically, 
\begin{equation}
    [\bfx_i]_t = F([\bfu_i]_t; [\bftheta_F]_t), \quad \frac{\partial [\bfu_i]_t}{\partial t} = \bfphi_u([\bfu_i]_t, t). 
\end{equation}
With this formulation, we can evolve the joint parameters according to \eqref{application: joint_parameter_flow} while simultaneously simulating the particle flow associated with the base variational density, as discussed in Sections~\ref{sec: gaussian_flow} and~\ref{sec: gaussian_mixture_approx}. The transformation is then applied at the final time to obtain particles associated with the variational density. Utilizing this property, we proposed a particle flow-based normalizing flow, with its key steps summarized in Algorithm~\ref{alg: pf_nf}.

In the next section, we focus on the case where the base variational density is selected as a Gaussian mixture and present numerical results demonstrating the effectiveness of the proposed integrated particle flow and normalizing flow.

\begin{algorithm}[t]
\caption{Particle Flow-Based Normalizing Flow}
\label{alg: pf_nf}
\begin{algorithmic}[1]
\small
\Require Initial base variational density parameters $[\bftheta_u]_{0}$ defined according to \eqref{gaussian_approx: param_space} or \eqref{gaussian_mixture_approx: natural_parameters}, particles $\{[\bfu_j]_0\}_{j=1}^{M}$ sampled from the initial base variational density, initial transformation parameters $[\bftheta_F]_{0}$, and the joint density $p(\bfx, \bfz)$
\Output Particles $\{\bfx_j\}_{j=1}^{M}$ that approximate the posterior density $p(\bfx | \bfz)$

\Function{$\bff$}{$\{[\bfu_j]_t\}_{j=1}^{M}$, $\bftheta_t$, $t$}
    \State $\delta \bftheta_t \gets$ Evaluate \eqref{application: joint_parameter_flow} with joint parameters $\bftheta_t$
    \For{each base variational density particle \text{$[\bfu_j]_t$}}
    \State $\tilde{A}^{(j)}_t$, $\tilde{\bfb}^{(j)}_t \gets$ Evaluate \eqref{gaussian_approx: approx_particle_flow} or \eqref{gaussian_mixture_approx: component_particle_flow} using the transformed joint density $p(\bfu, \bfz; [\bftheta_F]_t)$
    \State $\tilde{\bfphi}([\bfu_j]_t, t) \gets \tilde{A}^{(j)}_t [\bfu_j]_t + \tilde{\bfb}^{(j)}_t$
    \EndFor
\State \Return $\{\tilde{\bfphi}([\bfx_j]_t, t)\}_{j=1}^{M}$, $\delta \bftheta_t$
\EndFunction

\While{ODE solver running}
    \State $\{[\bfu_j]_{t=T}\}_{j=1}^{M}, \bftheta_T \gets$ SolveODE($\bff(\{[\bfu_j]_t\}_{j=1}^{M}, \bftheta_t, t)$) with initial particles $\{[\bfu_j]_{t=0}\}_{j=1}^{M}$, initial parameters $\bftheta_0$, initial time $t=0$, and termination time $T$
\EndWhile

\For{each particle $\{[\bfu_j]_T\}_{j=1}^{M}$}
\State Get transformed particle $\bfx_j \gets F([\bfu_j]_T, [\bftheta_F]_T)$
\EndFor

\State \Return $\{\bfx_j\}_{j=1}^{M}$
\end{algorithmic}
\end{algorithm}

\subsection{Numerical Results}
We present numerical results demonstrating the effectiveness of our method when combined with the normalizing flow in three scenarios, including the two low-dimensional cases studied in Sections~\ref{sec: eval_gmm} and~\ref{sec: eval_nonlinear} for completeness, as well as an extension of the funnel posterior studied in \citet{blessing2024beyond}.

\paragraph{Low-Dimensional Cases}

For both the Gaussian mixture prior case studied in Section~\ref{sec: eval_gmm} and the nonlinear observation model case studied in Section~\ref{sec: eval_nonlinear}, we use a Gaussian mixture with $5$ components as the initial base variational density $b(\bfu; [\bftheta_u]_{t=0})$. In both cases, the initial weight for the $k$th Gaussian mixture component is set to $\omega^{(k)} = 1 / 5$, and the structure of the transformation is restricted to the composition of $5$ planar transformations \eqref{eq: planar}. 

The initialization of the Gaussian mixture parameter differs slightly between the two cases. For the Gaussian mixture prior case, the initial mean parameter for the $k$th Gaussian mixture component $\bfmu^{(k)}$ is sampled from a Gaussian density centered at the origin with covariance $5I$, i.e., $\bfmu^{(k)} \sim \calN(\mathbf{0}, 5I)$. The initial covariance parameter of the $k$th Gaussian mixture component is set to $\Sigma^{(k)} = 15I$. For the nonlinear observation model case, the initial mean parameter for the $k$th Gaussian mixture component $\bfmu^{(k)}$ is sampled from a Gaussian density centered at the origin with covariance $4I$, i.e., $\bfmu^{(k)} \sim \calN(\mathbf{0}, 4I)$. The initial covariance parameter of the $k$th Gaussian mixture component is set to $\Sigma^{(k)} = I$.

The results for the Gaussian mixture prior case in Section~\ref{sec: eval_gmm} are shown in the first row of Figure~\ref{fig: nf_results}. We show contour plots of the initial variational density $q(\bfx; \bftheta_{t=0})$, the optimized variational density $q(\bfx; \bftheta_{t=T})$, and the optimized base variational density $b(\bfu; [\bftheta_u]_{t=T})$. In this case, there is a significant mismatch between the initial variational density and the posterior density. After applying the proposed particle flow-based normalizing flow, cf. Proposition~\ref{prop: transformed_pf}, the resulting approximation demonstrates substantially improved alignment with the posterior density, leading to a considerable reduction in KL divergence compared with the initial base density. The results for the nonlinear observation model in Section~\ref{sec: eval_nonlinear} are shown in the second row of Figure~\ref{fig: nf_results}. Similar to the Gaussian mixture prior case, the initial variational density differs markedly from the posterior density. Applying the proposed particle flow-based normalizing flow again leads to a noticeably improved approximation. We want to highlight that the accurate approximation is attained through the joint optimization of the base variational density and the transformation. As shown in the last column of Figure~\ref{fig: nf_results}, the optimized base variational density still differs significantly from the posterior density.

\paragraph{High-Dimensional Case}

We consider a high-dimensional extension of the funnel posterior studied in \citet{blessing2024beyond} with $N=30$, defined as:
\begin{equation}\label{eq: funnel}
    p(\bfx) = p_{\calN}(x_1; 0, 9) p_{\calN}(\bfx_{2:N}; \mathbf{0}, \exp(x_1) I). 
\end{equation}
Due to the nonlinearity of the funnel posterior, simple transformations such as the planar transformation~\eqref{eq: planar} and the radial transformation~\eqref{eq: radial} are not expressive enough to capture its structure. We introduce the following triangular transformation, inspired by the conditional dependency structure of the funnel posterior:
\begin{equation} \label{eq: triangular_trans}
    F(\bfu) = B\bfu + \bfb + \exp(L\bfu + \bfl) \odot \bfu, 
\end{equation}
where $\odot$ denotes the Hadamard product and the exponential $\exp(\cdot)$ is applied element-wise. Here, we have $\bfb, \bfl \in \mathbb{R}^{n}$ vector bias parameters, and $B, L \in \mathbb{R}^{n \times n}$ are strictly lower triangular matrix parameters, i.e., lower triangular matrix with zeros on the diagonal. The strict lower triangular structure of $B$ and $L$ induces a triangular Jacobian, thereby encoding the conditional dependency structure of the funnel posterior and enabling efficient computation of the Jacobian determinant.

In this case, we use the same initialization strategy for the base variational density as in the nonlinear observation model case discussed above. We also compare several transformation structures, including a single triangular transformation \eqref{eq: triangular_trans}, as well as compositions of five and ten planar transformations \eqref{eq: planar} and radial transformations \eqref{eq: radial}.

The results are presented in Figure~\ref{fig: funnel_results}. Due to the high dimensionality, visualizing the density is not practical. Instead, we visualize the particle projected to the first two dimensions. The ground-truth particles are shown in the top left figure, where the expected funnel shape is clearly captured. Across all cases, the variational density using a single triangular transformation yields the most accurate samples. The variational density using radial transformations yields more accurate samples than that using planar transformations. For both planar and radial transformation cases, using five compositions is sufficient to generate samples that clearly capture the structure of the funnel posterior, as increasing the number of composed transformations does not lead to noticeable improvement. The results also indicate that optimizing only the transformation, without jointly optimizing the base variational density, does not result in satisfactory sampling performance.

\begin{figure}[ht]
    \centering
    \includegraphics[width=0.33\linewidth]{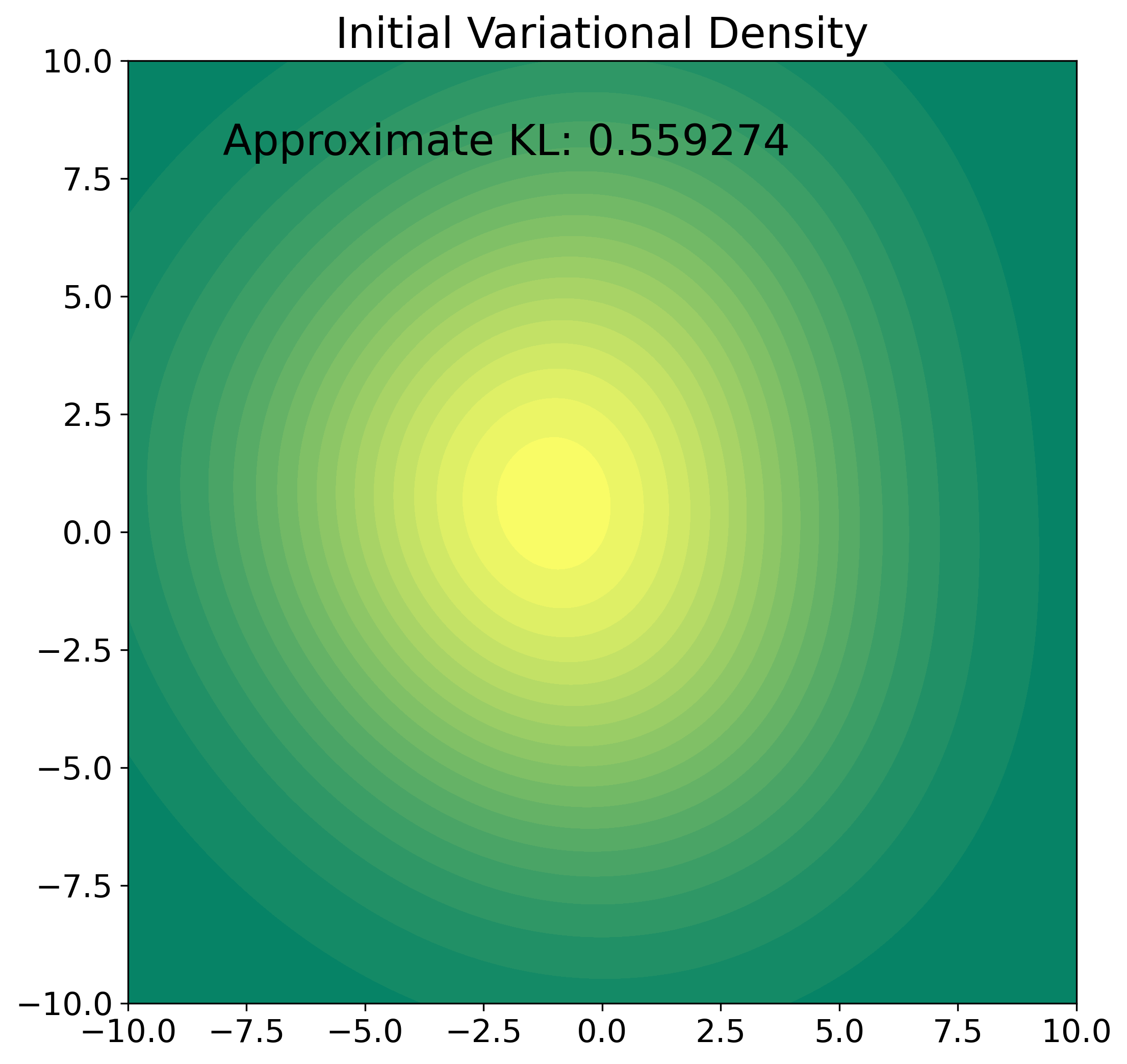}%
    \includegraphics[width=0.33\linewidth]{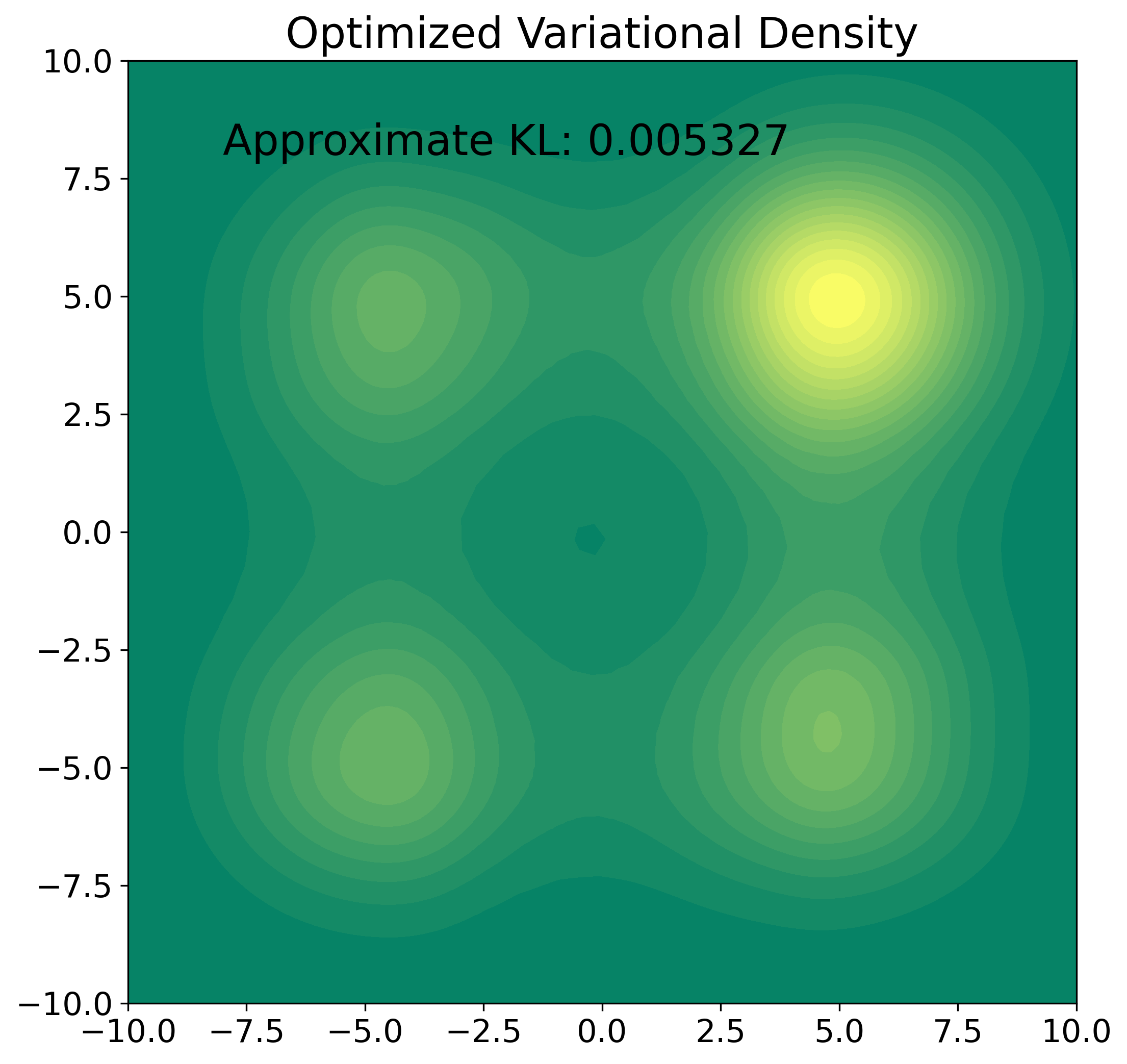}%
    \includegraphics[width=0.33\linewidth]{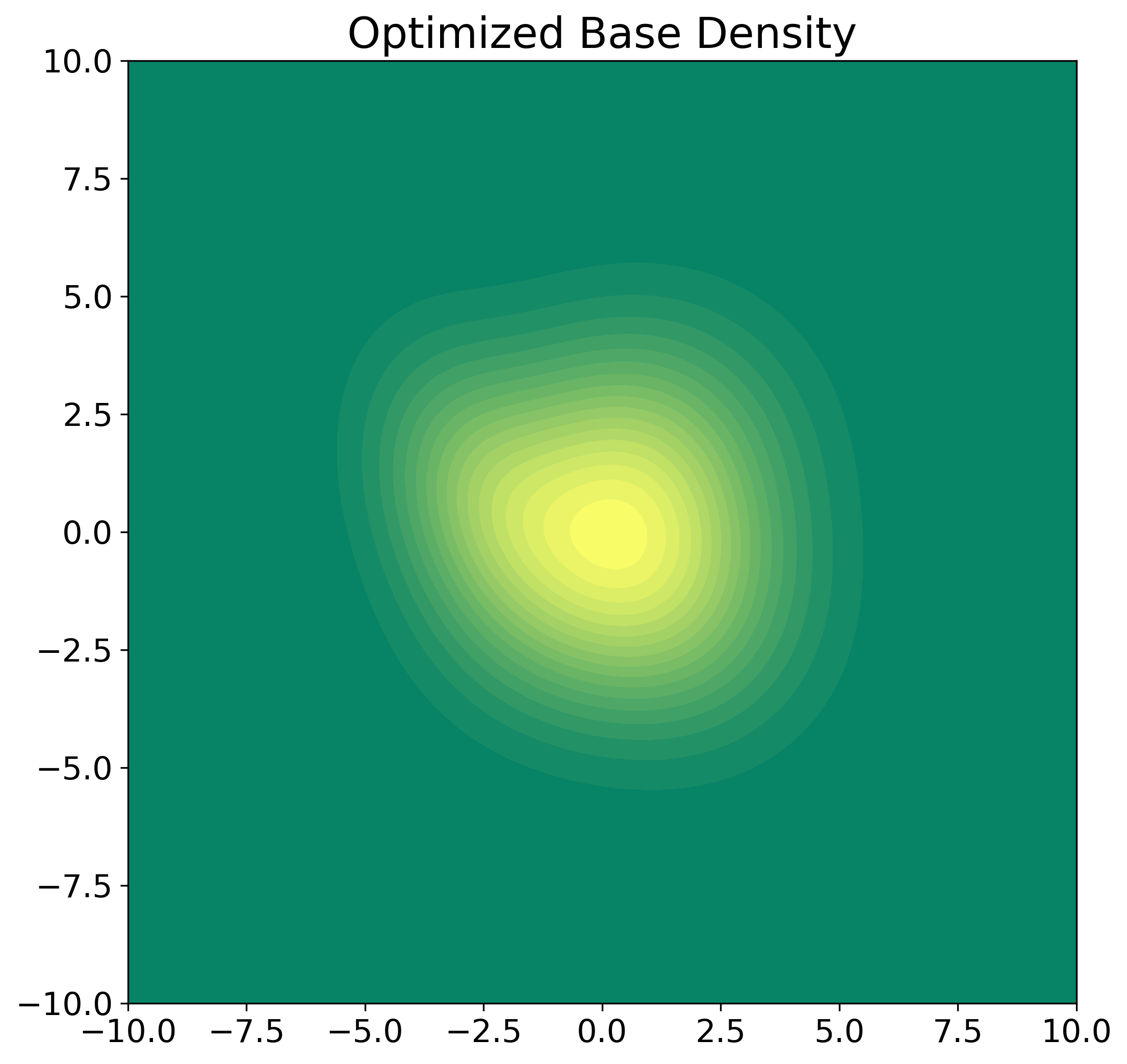}%
    \hfill%
    \includegraphics[width=0.33\linewidth]{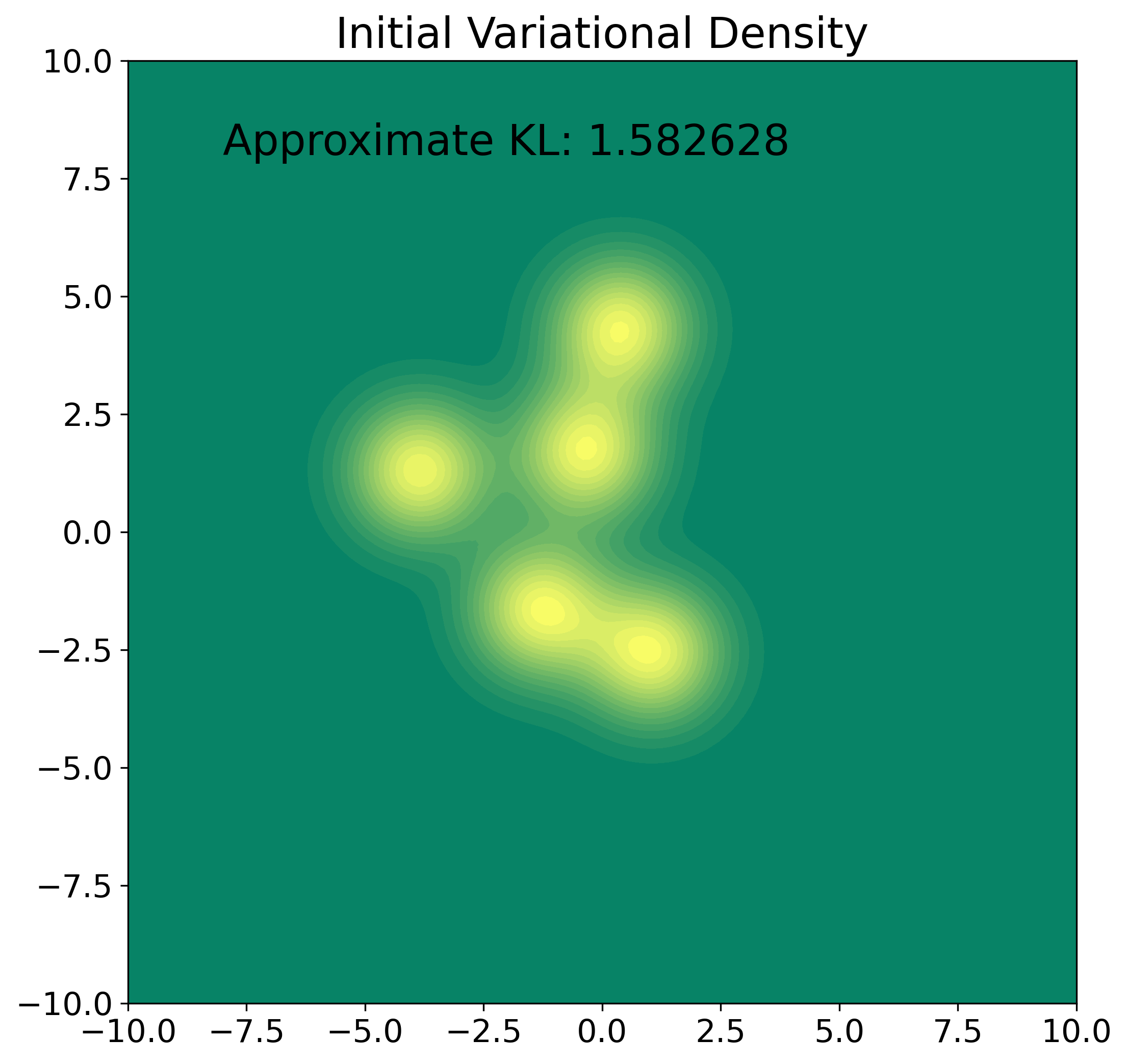}%
    \includegraphics[width=0.33\linewidth]{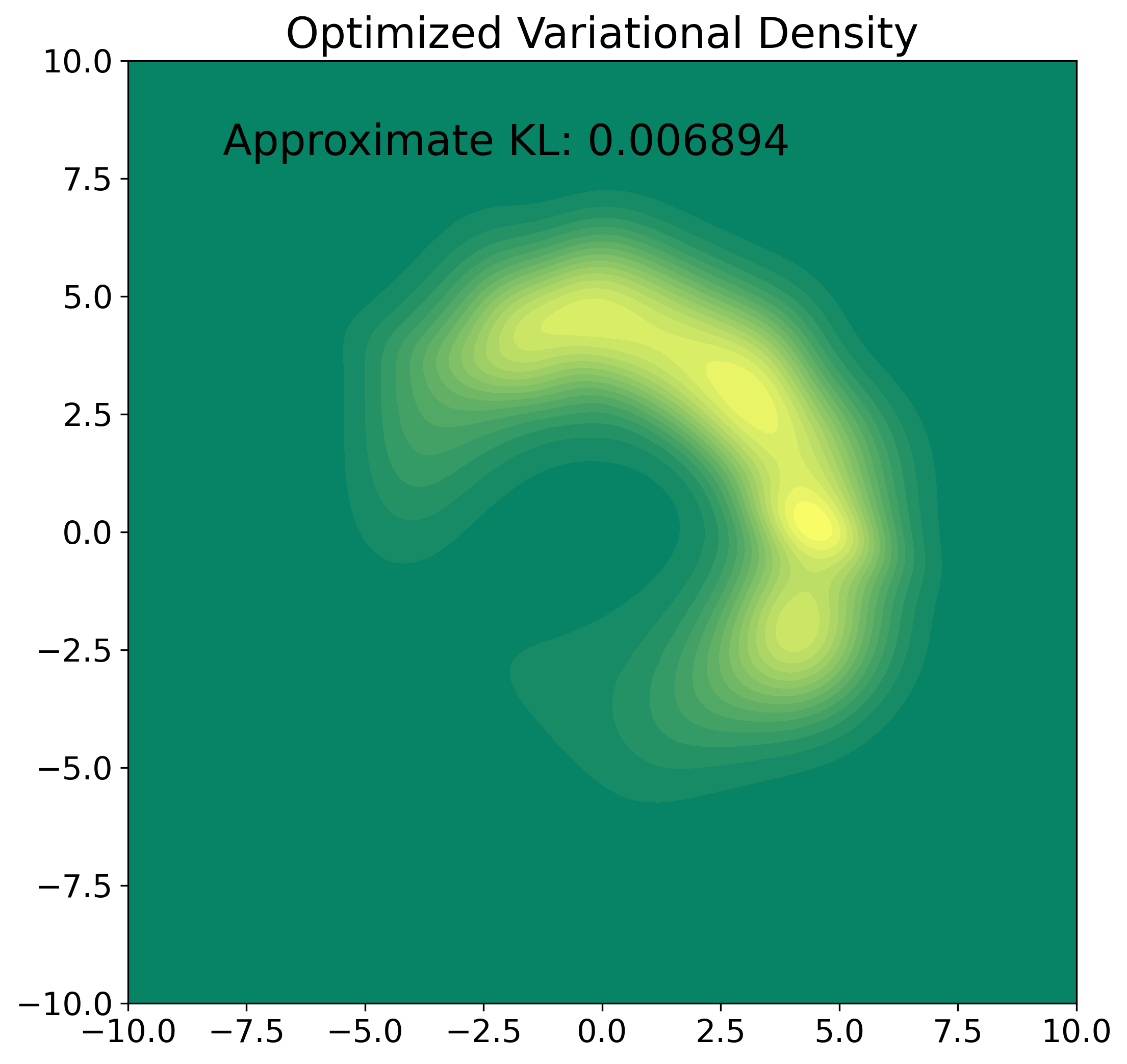}%
    \includegraphics[width=0.33\linewidth]{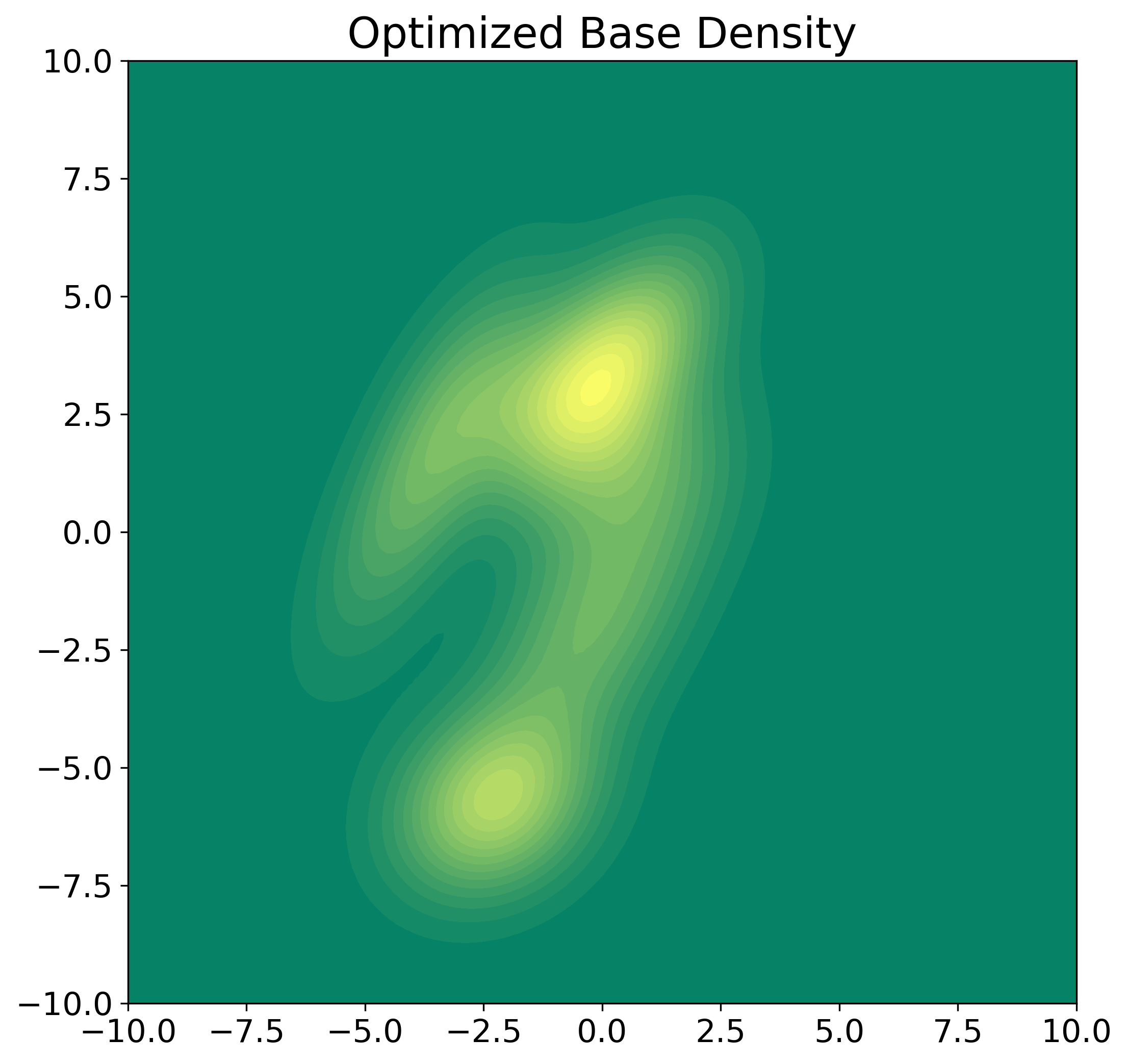}%
    \caption{Low-dimensional results obtained by the proposed particle flow-based normalizing flow \eqref{application: transformed_pf}, associated with the parameter flow \eqref{application: joint_parameter_flow}. The first row corresponds to the Gaussian mixture prior case in Section~\ref{sec: eval_gmm} and the second row corresponds to the nonlinear observation model case in Section~\ref{sec: eval_nonlinear}, respectively. In each row, the left panel shows the initial variational density $q(\bfx; \bftheta_{t=0})$, the middle panel shows the optimized variational density $q(\bfx; \bftheta_{t=T})$, and the right panel shows the optimized base variational density $b(\bfu; [\bftheta_u]_{t=T})$. In both cases, the optimized variational density achieves a good approximation of the posterior, achieving a low KL divergence approximation. The results demonstrate that the accurate approximation is attained through the joint optimization of the base variational density and the transformation. In particular, the optimized base variational density alone still differs significantly from the posterior density.}
    \label{fig: nf_results}
\end{figure}

\begin{figure}[ht]
    \centering
    \includegraphics[width=0.33\linewidth,trim=12mm 16mm 16mm 20mm, clip]{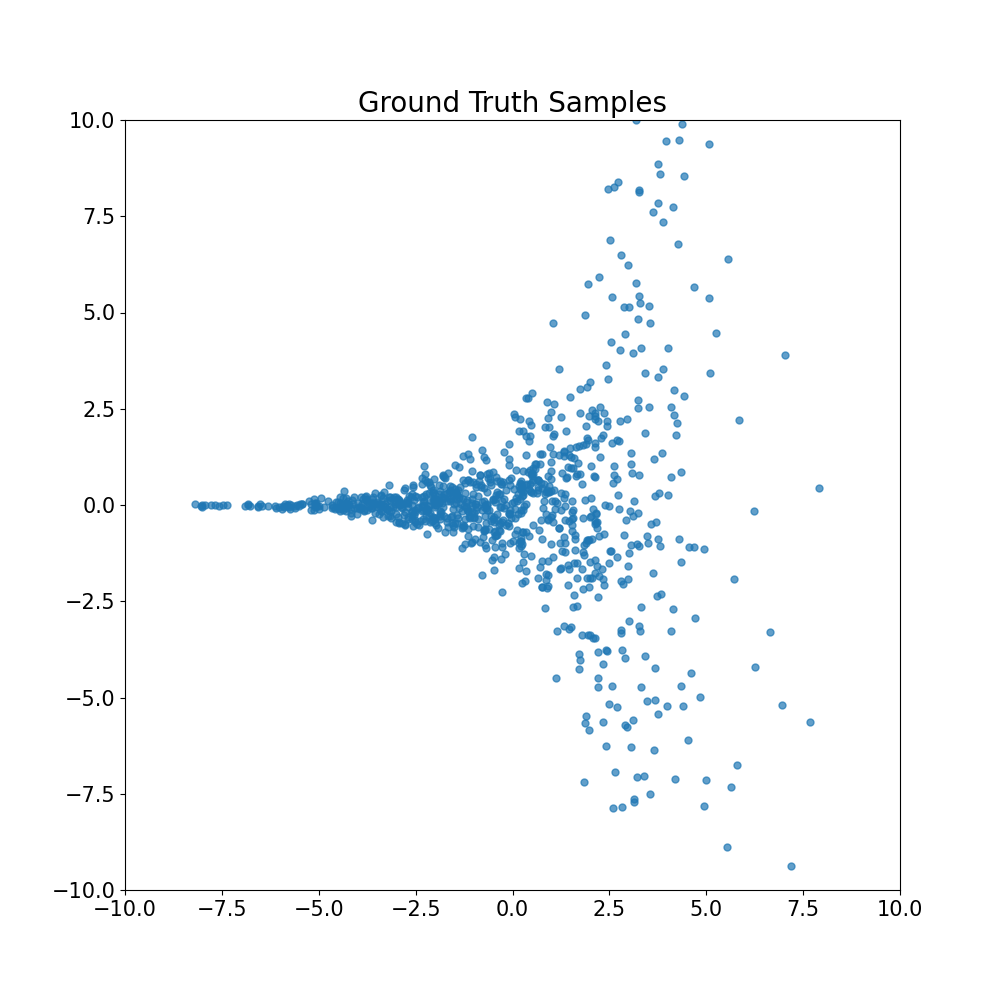}%
    \includegraphics[width=0.33\linewidth,trim=12mm 16mm 16mm 20mm, clip]{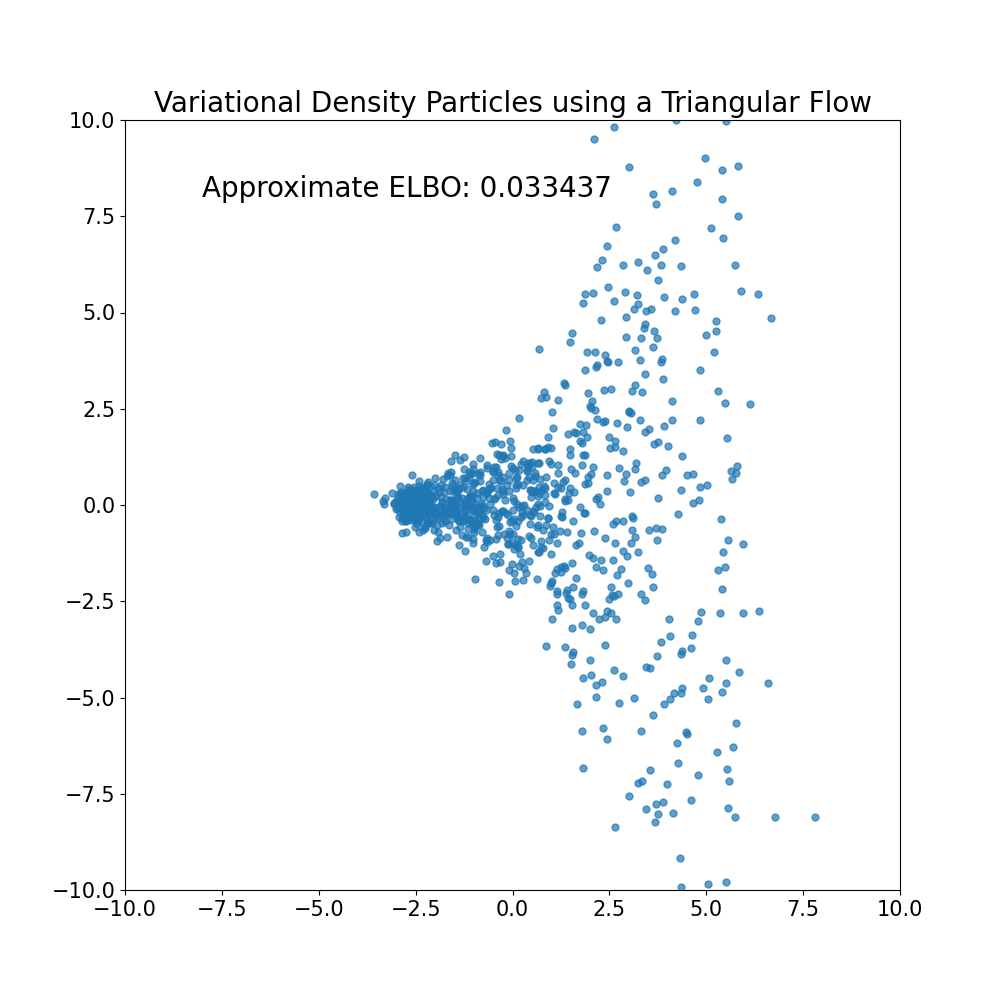}%
    \includegraphics[width=0.33\linewidth,trim=12mm 16mm 16mm 20mm, clip]{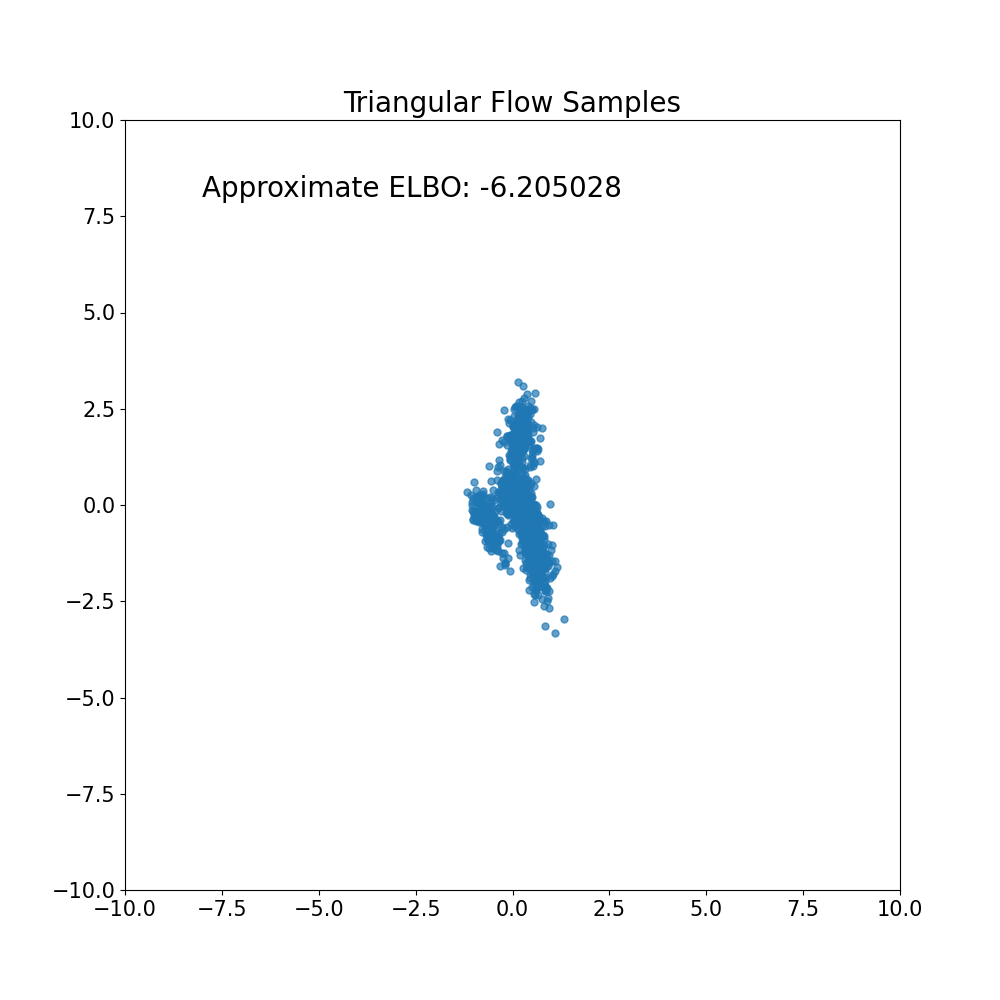}%
    \hfill%
    \includegraphics[width=0.33\linewidth,trim=12mm 16mm 16mm 20mm, clip]{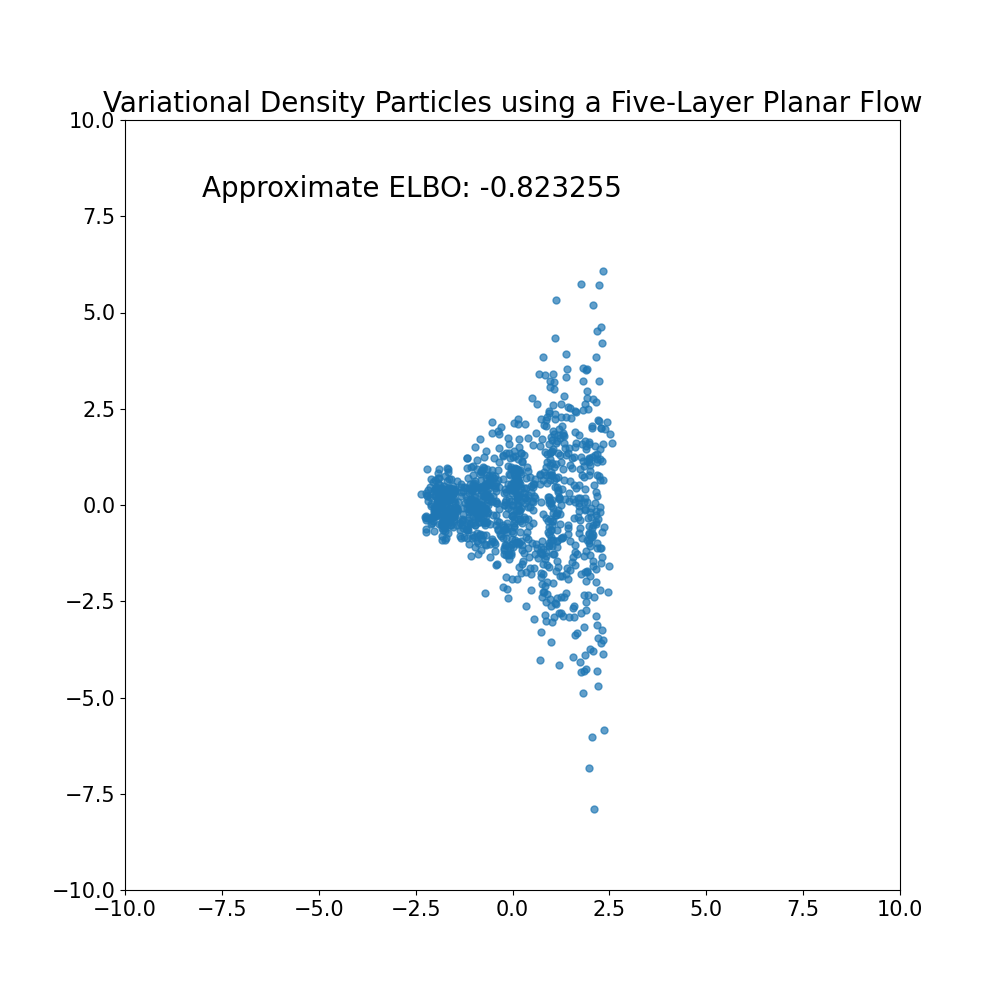}%
    \includegraphics[width=0.33\linewidth,trim=12mm 16mm 16mm 20mm, clip]{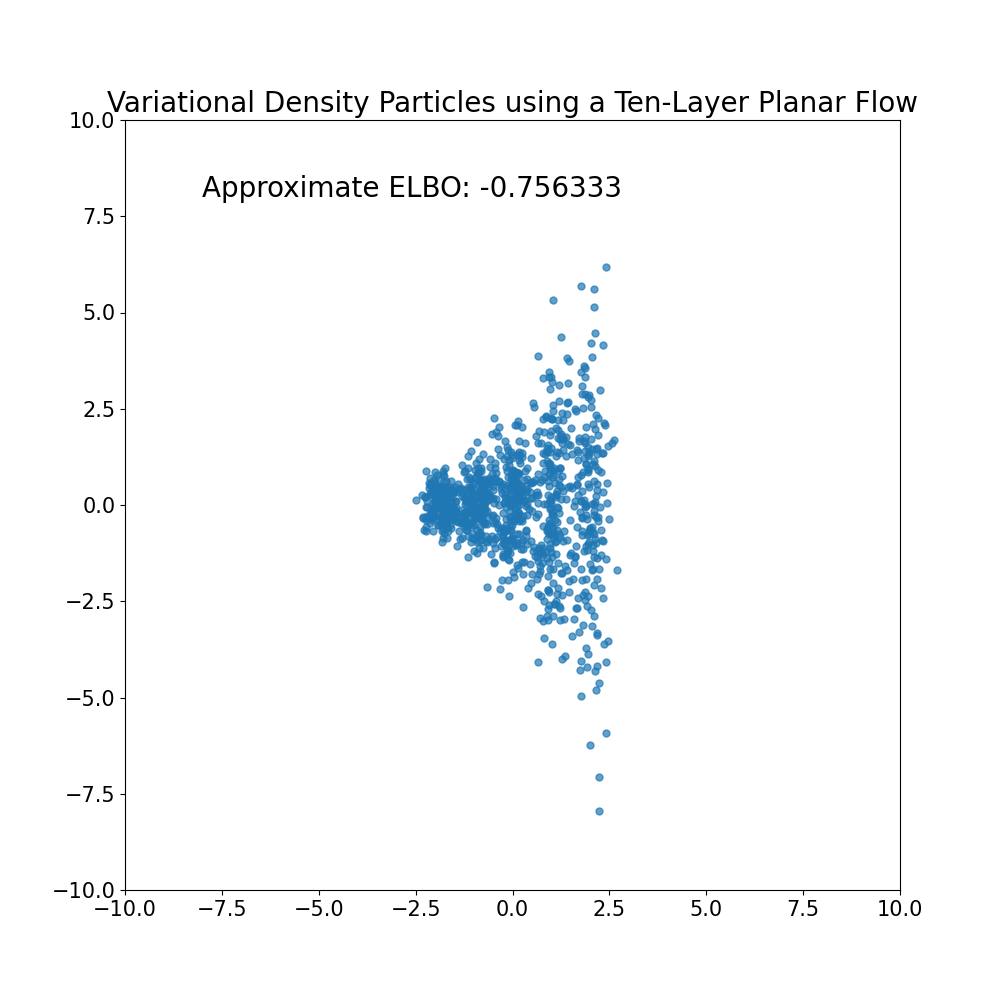}%
    \includegraphics[width=0.33\linewidth,trim=12mm 16mm 16mm 20mm, clip]{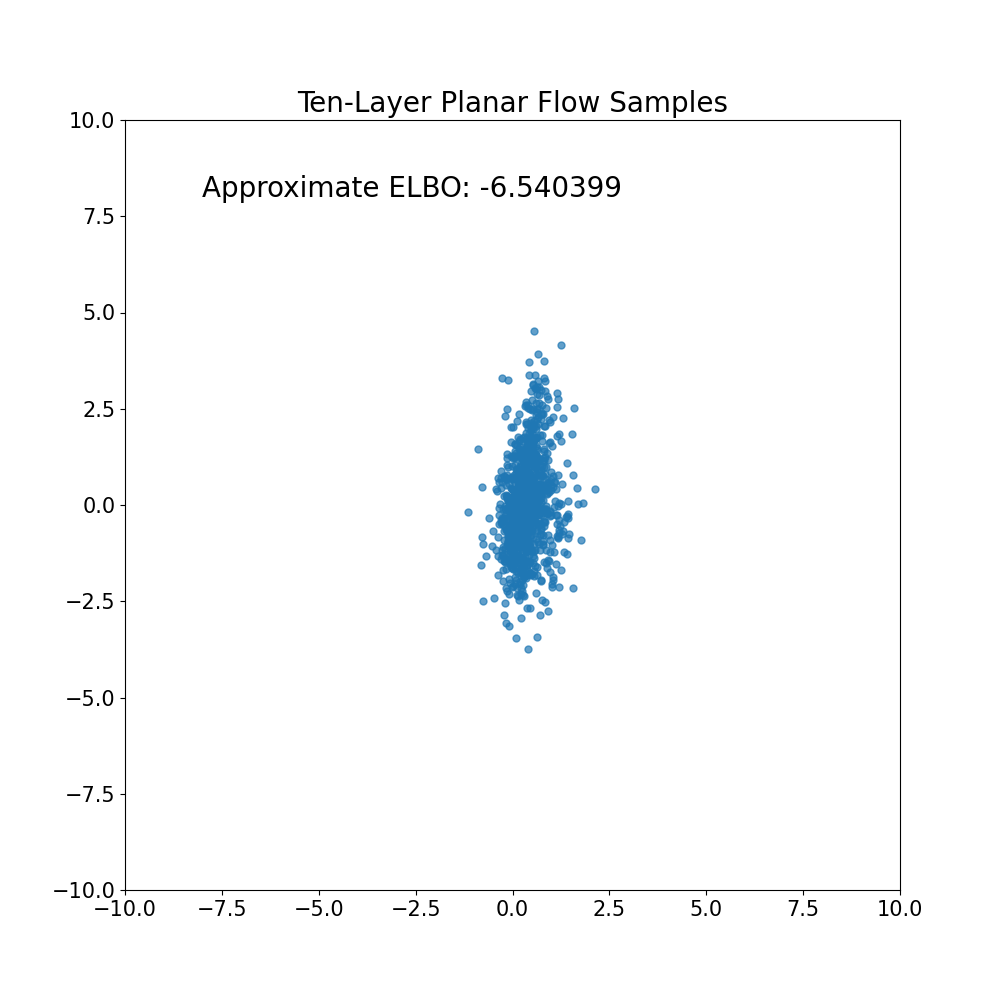}%
    \hfill%
    \includegraphics[width=0.33\linewidth,trim=12mm 16mm 16mm 20mm, clip]{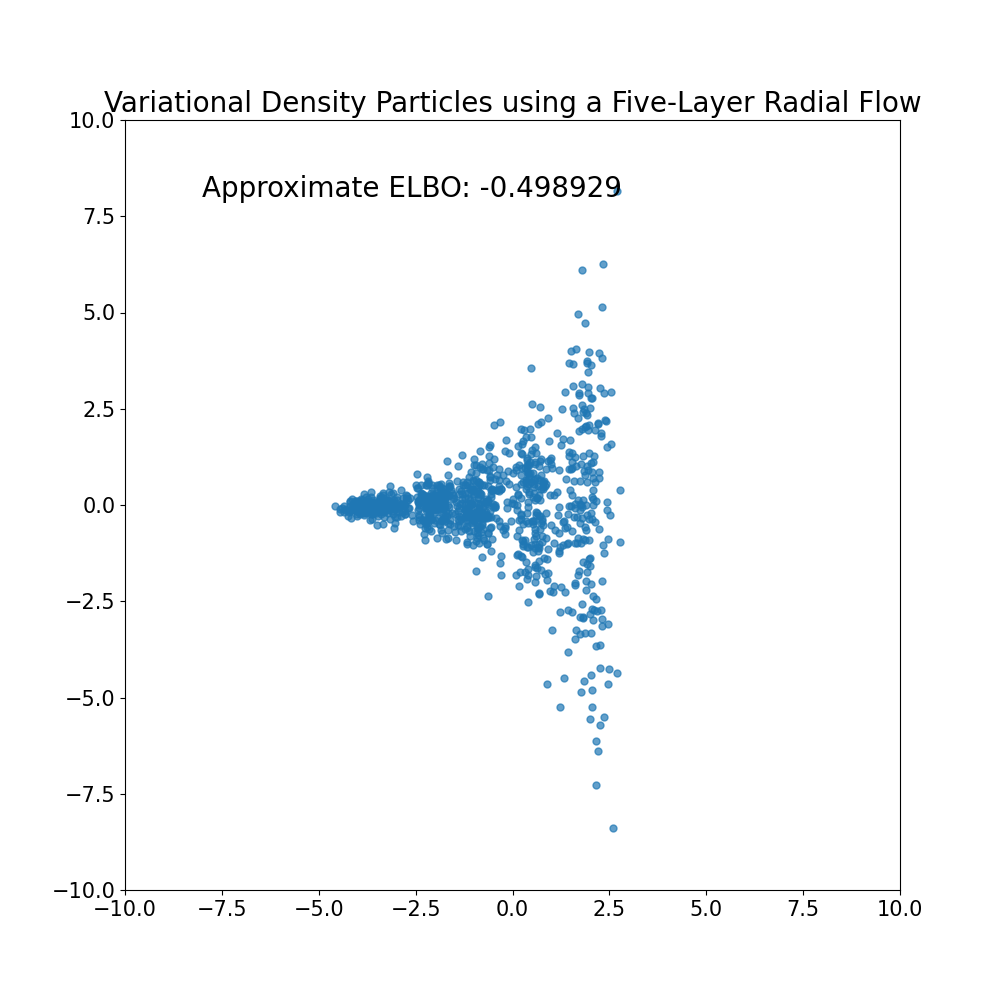}%
    \includegraphics[width=0.33\linewidth,trim=12mm 16mm 16mm 20mm, clip]{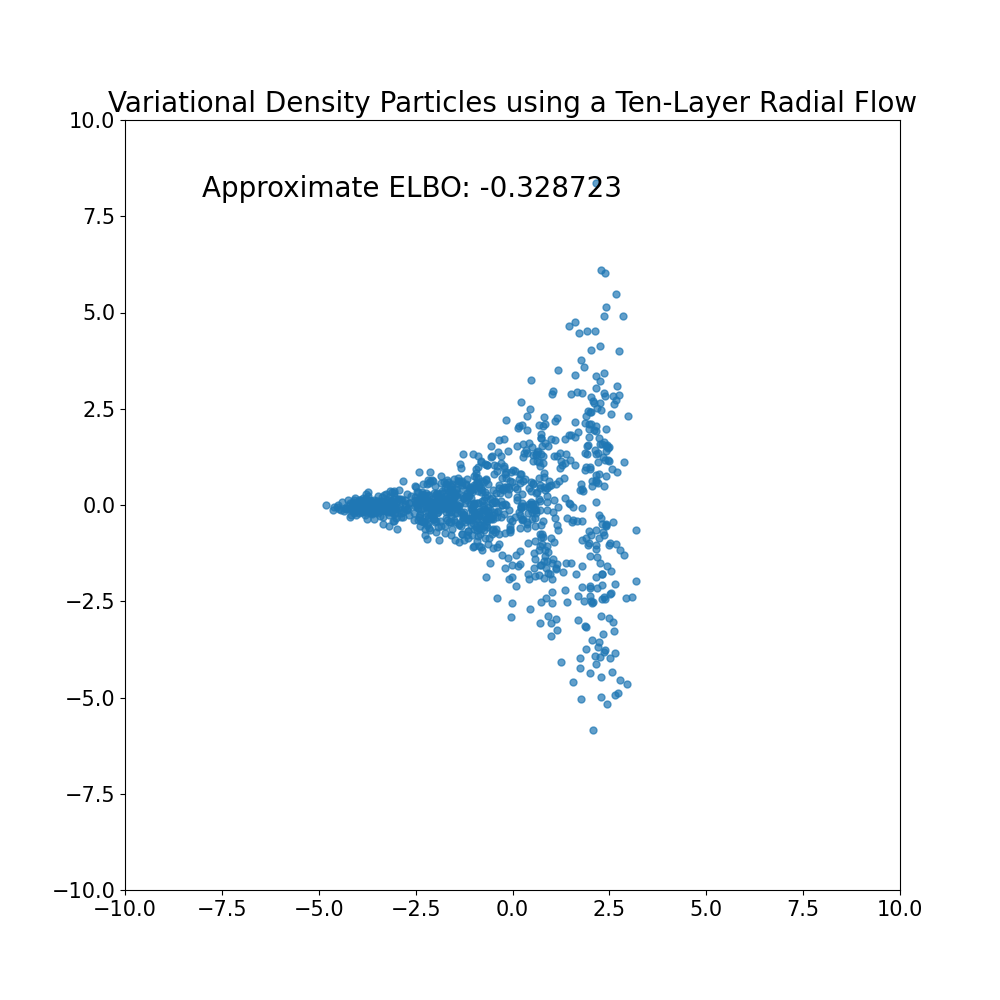}%
    \includegraphics[width=0.33\linewidth,trim=12mm 16mm 16mm 20mm, clip]{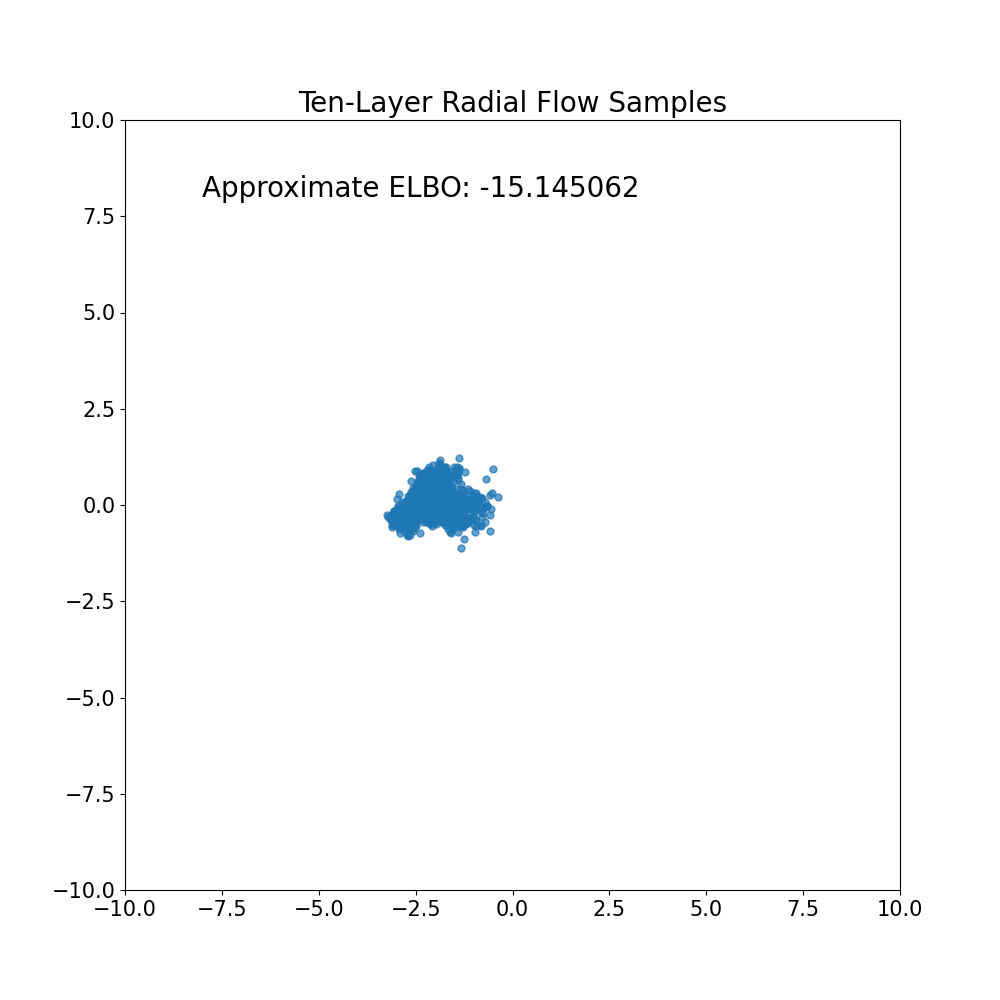}%
    \caption{Results obtained for the funnel posterior case in \eqref{eq: funnel} using the proposed particle flow-based normalizing flow \eqref{application: transformed_pf}, associated with the parameter flow \eqref{application: joint_parameter_flow}, where we show particles projected to the first two dimensions. The top left figure shows the ground-truth particles. The top middle figure shows the results obtained by the variational density $q(\bfx; \bftheta_{t=T})$ using a single triangular transformation. The top right figure shows the results obtained by optimizing only the triangular transformation. The results obtained by planar and radial transformation are shown in the second and third rows, respectively. In each row, the left and middle figures correspond to the results obtained from $q(\bfx; \bftheta_{t=T})$ using compositions of five and ten transformations, respectively, while the right figure shows the results obtained by optimizing only the composition of ten transformations.}
    \label{fig: funnel_results}
\end{figure}

\section{Conclusions}

We developed a variational formulation of the particle flow particle filter. We showed in Theorem~\ref{theorem: fr_sol} that the transient density used to derive the particle flow particle filter is a time-scaled solution to the Fisher-Rao gradient flow. Based on this observation, we first derived a Gaussian Fisher-Rao particle flow, which reduces to the Exact Daum and Huang (EDH) flow \citep{daum2010exact} under linear Gaussian assumptions. Next, we derived an approximated Gaussian mixture Fisher-Rao particle flow to approximate multi-modal behavior in the posterior density. We presented implementation strategies that make the computation of the flow parameters more efficient and ensure stable propagation of the Fisher-Rao flows. In the simulations, we illustrated the equivalence between the Gaussian Fisher-Rao particle flow and the EDH flow under linear Gaussian assumptions, that the approximated Gaussian mixture Fisher-Rao particle flow captures multi-modal behavior in the posterior, and that our Fisher-Rao particle flow methods achieve good accuracy in a Bayesian logistic regression task. To generalize the variational density beyond Gaussian families, we combined our Fisher–Rao particle flows with normalizing flow \citep{rezende2015variational}, resulting in a particle flow-based normalizing flow method. Through numerical simulations, we demonstrated that the proposed method achieves accurate sampling performance on a funnel posterior inference task. Future work will explore the application of the Fisher-Rao flows to robot state estimation problems and their extension to Lie groups, exploiting the geometry of the configuration space of robot systems.



\acks{The authors declare no competing interests. We gratefully acknowledge support from ONR Award N00014-23-1-2353 and NSF FRR CAREER 2045945.
}

\newpage

\bibliography{ref/VariationalParticleFlow}
\clearpage
\newpage
\appendix

\section{}\label{app:A}
In this appendix, we prove Lemma~12 from Section~8.2: \\
\\
{\bf Lemma (Valid Descent Direction) }{\it
    Consider the parameter flow $(67)$. If $\frac{\partial \bftheta_t}{\partial t} \neq \mathbf{0}$, then there exists $\epsilon > 0$ such that the following condition holds:
    \begin{equation}
        D_{KL}\left( q\left( \bfx; \bftheta_t + \alpha \frac{\partial \bftheta_t}{\partial t} \right) \middle\| p(\bfx | \bfz) \right) < D_{KL}\left( q\left( \bfx; \bftheta_t\right) \| p(\bfx | \bfz) \right), \quad \forall \alpha \in (0, \epsilon).
    \end{equation}
}
\begin{proof}
    The parameter flow $(67)$ can be expressed in the following compact form:
    \begin{equation}
        \frac{\partial \bftheta_t}{\partial t} = - W \nabla_{\bftheta_t} D_{KL}\left( q\left( \bfx; \bftheta_t\right) \| p(\bfx | \bfz) \right), \quad W = \begin{bmatrix} \calI^{-1}([\bftheta_u]_t) & \mathbf{0} \\ \mathbf{0} & \gamma I \end{bmatrix}.
    \end{equation}
    Since the Fisher information matrices in (23) and (39) are positive definite, and the scaling parameter $\gamma$ is strictly positive, the block-diagonal matrix $W$ is also positive definite. As a result, whenever $\frac{\partial \bftheta_t}{\partial t} \neq \mathbf{0}$, we have
    \begin{equation}
        \nabla^{\top}_{\bftheta_t} D_{KL}\left( q\left( \bfx; \bftheta_t\right) \| p(\bfx | \bfz) \right) \frac{\partial \bftheta_t}{\partial t} < 0.
    \end{equation}
    The desired result then follows directly from the standard descent-direction argument (see, e.g., Theorem 2.2 in \citet{nocedal2006numerical}).
\end{proof}

\section{}\label{app:B}
In this appendix, we prove Proposition~14 from Section~8.2: \\
\\
{\bf Proposition (Transformed Particle Dynamics Function) }{\it
    Let $\bfphi_u(\bfu, t)$ be a particle dynamics function for the base variational density satisfying:
    \begin{equation}\label{appendix: base_liouville}
        \nabla_t b(\bfu; [\bftheta_u]_t) = \frac{\partial b(\bfu; [\bftheta_u]_t)}{\partial [\bftheta_u]_t} \frac{\partial [\bftheta_u]_t}{\partial t} = - \nabla_{\bfu} \cdot \left(b(\bfu; [\bftheta_u]_t) \bfphi_u(\bfu, t) \right),
    \end{equation}
    for all $\bfu \in \bbR^n$ and $t \in [0, T]$, where the time derivative of the base variation density parameters $[\bftheta_u]_t$ is specified by the parameter flow $(67)$. Suppose the transformation $F(\bfu; [\bftheta_F]_t)$ is proper, i.e., the preimage $U = F^{-1}(X; [\bftheta_F]_t)$ of every compact set $X \subset \bbR^n$ is also compact. Furthermore, assume that $F(\bfu; [\bftheta_F]_t)$ is continuously differentiable in its parameters, and that the gradient of the KL divergence with respect to the transformation parameters $\nabla_{[\bftheta_F]_t} D_{KL}(b(\bfu; [\bftheta_u]_t) \| p(\bfu | \bfz; [\bftheta_F]_t) )$ is locally Lipschitz. Under these conditions, the transformed particle dynamics function induced by the time-varying transformation $F(\bfu; [\bftheta_F]_t)$ in $(68)$ is given by:
    \begin{equation}
        \bfphi_{F}(\bfx, t) = \left[ \frac{\partial F(\bfu; [\bftheta_F]_t)}{\partial t} + \nabla_{\bfu} F(\bfu; [\bftheta_F]_t) \bfphi_u(\bfu, t) \right] \Bigg|_{\bfu = F^{-1}(\bfx; [\bftheta_F]_t)},
    \end{equation}
    which satisfies the following Liouville equation in the sense of distributions:
    \begin{equation}
        \frac{\partial q(\bfx; \bftheta_t)}{\partial t} = - \nabla_{\bfx} \cdot \left( q(\bfx; \bftheta_t) \bfphi_F(\bfx, t) \right). 
    \end{equation}
}
\begin{proof}
    We first show that the particle dynamics function $\bfphi_u(\bfu, t)$ satisfies the Liouville equation \eqref{appendix: base_liouville} in the sense of distributions. Consider an arbitrary test function $\varphi(\bfu, t) \in C^{1}_{c}(\bbR^n \times (0, T))$, we have the following condition holds:
    \begin{equation}
        \int_0^T \int_{\bbR^n} \varphi(\bfu, t) \nabla_t b(\bfu; [\bftheta_u]_t) \d \bfu \d t + \int_0^T \int_{\bbR^n} \varphi(\bfu, t) \nabla_{\bfu} \cdot \left(b(\bfu; [\bftheta_u]_t) \bfphi_u(\bfu, t) \right) = 0. 
    \end{equation}
    Consider the first term, sine the test function $\varphi(\bfu, t)$ has a bounded support in $\bbR^n \times (0, T)$, integration by parts leads to:
    \begin{equation}
        \int_0^T \int_{\bbR^n} \varphi(\bfu, t) \nabla_t b(\bfu; [\bftheta_u]_t) \d \bfu \d t = - \int_0^T \int_{\bbR^n} \nabla_t \varphi(\bfu, t) b(\bfu; [\bftheta_u]_t) \d \bfu \d t.
    \end{equation}
    Similarly, applying Green’s theorem \citep{duistermaat_kolk_2004} together with Assumption~1 to the second term yields:
    \begin{equation}
        \int_0^T \int_{\bbR^n} \varphi(\bfu, t) \nabla_{\bfu} \cdot \left(b(\bfu; [\bftheta_u]_t) \bfphi_u(\bfu, t) \right) = -\int_0^T \int_{\bbR^n} b(\bfu; [\bftheta_u]_t) \nabla^{\top}_{\bfu} \varphi(\bfu, t) \bfphi_u(\bfu, t)  \d \bfu \d t.
    \end{equation}
    Combining the two identities yields the weak form of the Liouville equation:
    \begin{equation} \label{appendix: weak_base}
        \int_0^T \int_{\bbR^n} \left(\nabla_t \varphi(\bfu, t) + \bfphi_u^{\top}(\bfu, t) \nabla_{\bfu} \varphi(\bfu, t) \right) b(\bfu; [\bftheta_u]_t) \d \bfu \d t = 0.
    \end{equation}
    Now, consider a test function $\psi(\bfx, t) \in C^{1}_{c}(\bbR^n \times (0, T))$, and define its pullback under the transformation $F(\bfu; [\bftheta_F]_t)$ by $\tilde{\varphi}(\bfu, t) = \psi(F(\bfu; [\bftheta_F]_t), t)$. The pullback test function also belongs to $C^{1}_{c}(\bbR^n \times (0, T))$ since the assumed regularity of the transformation $F(\bfu; [\bftheta_F]_t)$ and the local Lipschitz continuity of the parameter flow $\partial [\bftheta_F]_t / \partial t$ ensure that the composition preserves $C^1$ regularity, while the properness of $F(\bfu; [\bftheta_F]_t)$ guarantees preservation of compact support under the pullback. Since \eqref{appendix: weak_base} holds for arbitrary $\varphi(\bfu, t) \in C^{1}_{c}(\bbR^n \times (0, T))$, we can conclude the following:
    \begin{equation}
        \int_0^T \int_{\bbR^n} \left(\nabla_t \psi(F(\bfu; [\bftheta_F]_t) + \bfphi_u^{\top}(\bfu, t) \nabla_{\bfu} \psi(F(\bfu; [\bftheta_F]_t) \right) b(\bfu; [\bftheta_u]_t) \d \bfu \d t = 0.
    \end{equation}
    According to the chain rule, we have:
    \begin{equation}
    \begin{split}
        &\nabla_t \psi(F(\bfu; [\bftheta_F]_t) = \nabla_t \psi(F(\bfu; [\bftheta_F]_t), t) + \nabla_t F(\bfu; [\bftheta_F]_t) [\nabla_{\bfx} \psi(\bfx, t)] |_{\bfx = F(\bfu; [\bftheta_F]_t)}, \\
        &\nabla_{\bfu} \psi(F(\bfu; [\bftheta_F]_t) = \nabla^{\top}_{\bfu} F(\bfu; [\bftheta_F]_t) [\nabla_{\bfx} \psi(\bfx, t)] |_{\bfx = F(\bfu; [\bftheta_F]_t)}. 
    \end{split}
    \end{equation}
    Substituting the expression above into the integrand of \eqref{appendix: weak_base} yields:
    \begin{equation}
    \begin{split}
        &\nabla_t \varphi(\bfu, t) + \bfphi_u^{\top}(\bfu, t) \nabla_{\bfu} \varphi(\bfu, t) = \nabla_t \psi(F(\bfu; [\bftheta_F]_t), t) \\
        &+ \left( \frac{\partial F(\bfu; [\bftheta_F]_t)}{\partial t} + \nabla_{\bfu} F(\bfu; [\bftheta_F]_t) \bfphi_u(\bfu, t) \right)^{\top} [\nabla_{\bfx} \psi(\bfx, t)] |_{\bfx = F(\bfu; [\bftheta_F]_t)}. 
    \end{split}
    \end{equation}
    Noting that $\d \bfx = | \det(\nabla_{\bfu} F(\bfu; [\bftheta_F]_t)) | \d \bfu$, applying the change of variables formula to \eqref{appendix: weak_base} yields $\int_0^T \int_{\bbR^n} \left(\nabla_t \psi(\bfx, t) + \bfphi_F^{\top}(\bfx, t) \nabla_{\bfx} \psi(\bfx, t) \right) q(\bfx; \bftheta_t) \d \bfx \d t = 0$.
\end{proof}

\section{}\label{app:C}
In this appendix, we discuss some practical considerations related to the approximation of the expectations of the gradient and Hessian of $V(\bfx)$. Recall the definition of $V(\bfx)$:
\begin{equation}
    V(\bfx) = \log(q(\bfx)) - \log(p(\bfx, \bfz)), 
\end{equation}
where $q(\bfx)$ is the variational density and $p(\bfx, \bfz))$ is the joint density. The derivation of the Fisher-Rao parameter flow (26) and (51) relies on estimating the following Gaussian expectations:
\begin{equation}
\begin{split}
    \bbE_{\bfx \sim \calN(\bfmu, \Sigma)}\left[\nabla_{\bfx}V(\bfx)\right] &= \Sigma^{-1} \bbE_{\bfx \sim \calN(\bfmu, \Sigma)}\left[ (\bfx - \bfmu) V(\bfx) \right], \\
    \bbE_{\bfx \sim \calN(\bfmu, \Sigma)}\left[\nabla^2_{\bfx}V(\bfx)\right] &= \Sigma^{-1} \bbE_{\bfx \sim \calN(\bfmu, \Sigma)}\left[ (\bfx - \bfmu)(\bfx - \bfmu)^{\top} V(\bfx) \right] \Sigma^{-1} \\
    & \quad - \Sigma^{-1} \bbE_{\bfx \sim \calN(\bfmu, \Sigma)}\left[ V(\bfx) \right].
\end{split}
\end{equation}
In practice, these expectations are approximated using a finite set of particles. As a result, any bias introduced in the empirical evaluation of the $V(\bfx)$ term directly propagates to the gradient and Hessian estimators, resulting in biased updates and degraded convergence behavior. This issue can be mitigated by introducing a correcting term when evaluating $V(\bfx)$. This correction stabilizes the magnitude of the estimated $V(\bfx)$ terms and improves the robustness of the resulting gradient flow, particularly in high-dimensional regimes where finite-sample effects become pronounced. Specifically, consider the following corrected $V(\bfx)$ term:
\begin{equation}
    \tilde{V}(\bfx) = V(\bfx) + c, 
\end{equation}
where $c$ is a constant correction term. We shall notice that this constant correction term does not influence the Gaussian expectations:
\begin{equation}
\begin{split}
    \bbE_{\bfx \sim \calN(\bfmu, \Sigma)}\left[\nabla_{\bfx}\tilde{V}(\bfx)\right] &= \Sigma^{-1} \bbE_{\bfx \sim \calN(\bfmu, \Sigma)}\left[ (\bfx - \bfmu) \tilde{V}(\bfx) \right] \\
    &= \Sigma^{-1} \bbE_{\bfx \sim \calN(\bfmu, \Sigma)}\left[ (\bfx - \bfmu) V(\bfx) \right] + \Sigma^{-1} \bbE_{\bfx \sim \calN(\bfmu, \Sigma)}\left[ (\bfx - \bfmu) c \right] \\
    &= \Sigma^{-1} \bbE_{\bfx \sim \calN(\bfmu, \Sigma)}\left[ (\bfx - \bfmu) V(\bfx) \right] = \bbE_{\bfx \sim \calN(\bfmu, \Sigma)}\left[\nabla_{\bfx}V(\bfx)\right], \\
    \bbE_{\bfx \sim \calN(\bfmu, \Sigma)}\left[\nabla^2_{\bfx}\tilde{V}(\bfx)\right] &= \Sigma^{-1} \bbE_{\bfx \sim \calN(\bfmu, \Sigma)}\left[ (\bfx - \bfmu)(\bfx - \bfmu)^{\top} \tilde{V}(\bfx) \right] \Sigma^{-1}  - \Sigma^{-1} \bbE_{\bfx \sim \calN(\bfmu, \Sigma)}\left[ \tilde{V}(\bfx) \right] \\
    &= \Sigma^{-1} \bbE_{\bfx \sim \calN(\bfmu, \Sigma)}\left[ (\bfx - \bfmu)(\bfx - \bfmu)^{\top} V(\bfx) \right] \Sigma^{-1} - \Sigma^{-1} \bbE_{\bfx \sim \calN(\bfmu, \Sigma)}\left[ V(\bfx) \right] \\
    &+ \Sigma^{-1} \bbE_{\bfx \sim \calN(\bfmu, \Sigma)}\left[ (\bfx - \bfmu)(\bfx - \bfmu)^{\top} c \right] \Sigma^{-1} - \Sigma^{-1} \bbE_{\bfx \sim \calN(\bfmu, \Sigma)}\left[ c \right] \\
    &= \Sigma^{-1} \bbE_{\bfx \sim \calN(\bfmu, \Sigma)}\left[ (\bfx - \bfmu)(\bfx - \bfmu)^{\top} V(\bfx) \right] \Sigma^{-1} - \Sigma^{-1} \bbE_{\bfx \sim \calN(\bfmu, \Sigma)}\left[ V(\bfx) \right] \\
    &+ c\Sigma^{-1} - c \Sigma^{-1} = \bbE_{\bfx \sim \calN(\bfmu, \Sigma)}\left[\nabla^2_{\bfx}V(\bfx)\right].
\end{split}
\end{equation}
With a finite number of particles $\{ \bfx_i \}_{i=1}^M$, the equations above lead to:
\begin{equation}
\begin{split}
    \bbE_{\bfx \sim \calN(\bfmu, \Sigma)}\left[\nabla_{\bfx}\tilde{V}(\bfx)\right] &\approx \frac{1}{M} \Sigma^{-1} \sum_{i=1}^{M}(\bfx_i - \bfmu) V(\bfx_i) + c \frac{1}{M} \Sigma^{-1} \sum_{i=1}^{M}(\bfx_i - \bfmu) \\
    \bbE_{\bfx \sim \calN(\bfmu, \Sigma)}\left[\nabla^2_{\bfx}\tilde{V}(\bfx)\right] &\approx  \frac{1}{M} \Sigma^{-1} \sum_{i=1}^{M} (\bfx_i - \bfmu)(\bfx_i - \bfmu)^{\top} V(\bfx_i)  \Sigma^{-1} - \frac{1}{M} \Sigma^{-1} \sum_{i=1}^{M} V(\bfx_i) \\
    &+ c\frac{1}{M} \Sigma^{-1} \sum_{i=1}^{M} (\bfx_i - \bfmu)(\bfx_i - \bfmu)^{\top} \Sigma^{-1} - c \Sigma^{-1}.
\end{split}
\end{equation}
Let $\tilde{\bfmu} = \frac{1}{M} \sum_{i=1}^{M} \bfx_i$ and $\tilde{\Sigma} = \frac{1}{M} \sum_{i=1}^{M} (\bfx_i - \bfmu)(\bfx_i - \bfmu)^{\top}$ denote the empirical mean and covariance computed from the particle set $\{ \bfx_i \}_{i=1}^M$. Taking $c = -\frac{1}{M} \sum_{i=1}^{M} V(\bfx_i)$, the previous identities lead to the following approximations:
\begin{equation}
\begin{split}
    \bbE_{\bfx \sim \calN(\bfmu, \Sigma)}\left[\nabla_{\bfx}\tilde{V}(\bfx)\right] &\approx \frac{1}{M} \Sigma^{-1} \sum_{i=1}^{M}(\bfx_i - \tilde{\bfmu}) V(\bfx_i), \\
    \bbE_{\bfx \sim \calN(\bfmu, \Sigma)}\left[\nabla^2_{\bfx}\tilde{V}(\bfx)\right] &\approx \frac{1}{M} \Sigma^{-1} \sum_{i=1}^{M} \left((\bfx_i - \bfmu)(\bfx_i - \bfmu)^{\top} - \tilde{\Sigma} \right)V(\bfx_i)  \Sigma^{-1}
\end{split}
\end{equation}
These expressions show that, with $c = -\frac{1}{M} \sum_{i=1}^{M} V(\bfx_i)$ approximating the expectation terms using a finite number of samples is equivalent to evaluating the difference terms using the empirical mean and covariance. 

\end{document}